\documentclass[conference]{IEEEtran}
\IEEEoverridecommandlockouts

\usepackage[utf8]{inputenc}

\usepackage{amsmath, amsthm, amssymb, amsfonts} 
\usepackage{nicefrac}
\usepackage{mathtools}
\usepackage{bm, bbm}
\usepackage{thm-restate}
\usepackage[scr=boondoxo,scrscaled=1.05]{mathalfa}

\usepackage{cite}

\usepackage{subfig}
\usepackage{booktabs}                                  

\usepackage{enumitem}

\usepackage{xcolor}
\usepackage{comment}

\usepackage{hyperref}
\usepackage{url}


\usepackage{blindtext}


\definecolor{orange}{rgb}{1,0.4,0.0}

\DeclarePairedDelimiterXPP{\KL}[2]{D_\textnormal{KL}}{(}{)}{}{%
#1\:\delimsize\|\:#2%
}

\DeclarePairedDelimiterXPP\Prob[1]{\mathbb{P}}{\lbrace}{\rbrace}{}{

#1}

\DeclarePairedDelimiterXPP{\lnorm}[2]{}{\lVert}{\rVert}{_{#2}}{#1}


\newcommand{\cS}{\ensuremath{\mathcal{S}}}
\newcommand{\cT}{\ensuremath{\mathcal{T}}}

\newcommand{\cX}{\ensuremath{\mathcal{X}}}
\newcommand{\cY}{\ensuremath{\mathcal{Y}}}


\newtheoremstyle{mytheoremstyle} 
    {\topsep}                    
    {\topsep}                    
    {\itshape}                   
    {}                           
    {\bf}                        
    {.}                          
    {.5em}                       
    {}  

\theoremstyle{mytheoremstyle}
\newtheorem{lemma}{Lemma}

\newtheorem{proposition}{Proposition}
\newtheorem{definition}{Definition}

\newtheorem{remark}{Remark}


\interdisplaylinepenalty=2500

\begin{document}
\title{A Variational Approach to Privacy and Fairness} 

 \author{%
   \IEEEauthorblockN{Borja Rodríguez-Gálvez, Ragnar Thobaben, and Mikael Skoglund}
   \IEEEauthorblockA{Division of Information Science and Engineering (ISE)\\
        KTH Royal Institute of Technology\\
        Email: \{borjarg,ragnart,skoglund\}@kth.se}
    \thanks{This paper was previously presented at the non-archival PPAI workshop from the AAAI 2021 conference. This work was funded in part by the Swedish research council under contract 2019-03606. A full version of this paper is accessible at \url{https://arxiv.org/abs/2006.06332} and the code is available at \url{https://github.com/burklight/VariationalPrivacyFairness}.}
 }

\maketitle

\begin{abstract}
   In this article, we propose a new variational approach to learn private and/or fair representations. This approach is based on the Lagrangians of a new formulation of the privacy and fairness optimization problems that we propose. In this formulation, we aim to generate representations of the data that keep a prescribed level of the relevant information that is not shared by the private or sensitive data, while minimizing the remaining information they keep. The proposed approach (i) exhibits the similarities of the privacy and fairness problems, (ii) allows us to control the trade-off between utility and privacy or fairness through the Lagrange multiplier parameter, and (iii) can be comfortably incorporated to common representation learning algorithms such as the VAE, the $\beta$-VAE, the VIB, or the nonlinear IB.
\end{abstract}

\section{Introduction}
\label{sec:introduction}

Currently, many systems rely on machine learning algorithms to make decisions and draw inferences. That is, they use previously existing data in order to shape some stage of their decision or inference mechanism. Usually, this data contains private or sensitive information, e.g., the identity of the person from which a datum was collected or their membership to a minority group. Thus, an important problem occurs when the data used to train such algorithms leaks this information to the system, contributing to unfair decisions or to a privacy breach.

When the content of the private information is arbitrary and the task of the system is not defined, the problem is reduced to learning \emph{private representations} of the data; i.e., representations that are informative of the data (utility), but are not informative of the private information. Then, these representations can be employed by any system with a controlled private information leakage. If the informativeness is measured by the 
mutual information, the problem of generating private representations is known as the \emph{privacy funnel} (PF) \cite{du2012privacy, makhdoumi2014information}. 

When the task of the system is known, then the aim is to design strategies so that the system performs such a task efficiently while employing or leaking little sensitive information. The field of \emph{algorithmic fairness} has extensively studied this problem, especially for classification tasks and categorical sensitive information, c.f., \cite{dwork2012fairness, zafar2015fairness, chouldechova2018frontiers, corbett2018measure}. An interesting approach is that of learning \emph{fair representations} \cite{zemel2013learning, edwards2015censoring, zhao2019conditional}, where similarly to their private counterparts, the representations are informative of the task, but contain little sensitive information.

There is a compromise between information leakage and utility when designing private representations \cite{makhdoumi2014information}. Similarly
, it has been shown empirically \cite{calders2009building} and theoretically \cite{zhao2019inherent} that there is a trade-off between fairness and utility.

In this work, we investigate the trade-off between utility and privacy and between utility and fairness in terms of mutual information (details about the choice of mutual information in Appendix~\ref{app:why-mi}). More specifically, we aim at maintaining a certain level of the information about the data (for privacy) or the task (for fairness) that is not shared by the senstive attributes, while minimizing all the other information. We name these two optimization problems the \emph{conditional privacy funnel} (CPF) and the \emph{conditional fairness bottleneck} (CFB) due to their similarities with the PF \cite{makhdoumi2014information}, the information bottleneck (IB) \cite{tishby2000information}, and the recent conditional entropy bottleneck (CEB) \cite{fischer2020conditional}. 

We tackle both optimization problems with a variational approach based on their Lagrangian. For the privacy problem, we show that the minimization of the Lagrangians of the CPF and the PF is equivalent (see Appendix \ref{app:lagrangians_pf_cpf_equivalence}), meaning that the proposed approach also attempts at solving the PF. Moreover, this approach improves over current variational approaches to the PF by respecting the problem's Markov chain in the encoder distribution.

Finally, the resulting approaches for privacy and fairness can be implemented with little modification to common algorithms for representation learning like the variational autoencoder (VAE) \cite{kingma2013auto}, the $\beta$-VAE \cite{higgins2017beta}, the variational information bottleneck (VIB) \cite{nan2020variational}, or the nonlinear information bottleneck \cite{kolchinsky2019nonlinear}. Therefore, it facilitates the incorporation of private and fair representations in current applications (see Appendices~\ref{app:related_work} and \ref{app:modification_common_algorithms} for an extended related section and a guide on how to modify these algorithms).

We demonstrate our results both in the Adult dataset~\cite{Dua2019} and a toy dataset based on MNIST \cite{lecun1998gradient}. Further experiments on the COMPAS dataset \cite{dieterich2016compas} can be found in Appendix \ref{app:experiments}.

\section{Methods}
\label{sec:methods}

In this section we present our approach. First, we introduce the proposed models for the privacy and fairness problems. Then, we show a suitable Lagrangian formulation. Finally, we describe a variational approach to solve both problems.
\subsection{Problem formulations}
\label{subsec:problem_formulation}
\subsubsection{Privacy: the conditional privacy funnel (CPF)}
\label{subsubsec:privacy_formulation}
Suppose we want to share some data $X \in \cX$ so that third-parties can make statistical analyses and draw inferences from them. However, these data $X$ contains information about some private attributes $S \in \cX$ that we want to protect. For this reason, we encode the data $X$ into the representation $Y \in \cY$, forming the Markov chain $S \leftrightarrow X \leftrightarrow Y$, and then share $Y$.

This encoding, characterized by the conditional probability distribution $P_{Y|X}$, is designed so that the representation $Y$ keeps a certain level $r$ of the information about the data $X$ that is not shared with the private attributes $S$ (i.e., the light gray area in Figure \ref{fig:i_diagram_cpf}), while minimizing the information it keeps about the private attributes $S$ (i.e., the dark gray area in Figure \ref{fig:i_diagram_cpf}). That is,
\begin{equation}
	\smash{\arg \inf_{P_{Y|X}} \left \lbrace I(S;Y) \right \rbrace \textnormal{ s.t. } I(X;Y|S) \geq r.}
	\label{eq:conditional_privacy_funnel_opt}
\end{equation}
This formulation reassembles the privacy funnel (PF)~\cite{makhdoumi2014information}, since both minimize the information the representation $Y$ keeps about the private attributes $S$. Nonetheless, in the PF, the encoding is designed so that the representation $Y$ keeps a certain level $r'$ of information about the data $X$, disregarding if this information is also shared by the private attributes $S$. Hence, the optimization of the PF Lagrangian may lead to representations $Y$ that filter private information arbitrarily. In contrast, the CPF avoids this issue with an inherent additional constraint on the Lagrange multipliers (see Appendix~\ref{app:lagrangians_pf_cpf_equivalence}).
\subsubsection{Fairness: the conditional fairness bottleneck (CFB)}
\label{subsubsec:fairness_formulation}
Suppose we want to use (or share) some data $X \in \cX$ to draw inferences or make decisions about a task $T \in \cT$. However, these data $X$ and the task $T$ contain information about some sensitive attributes $S \in \cS$, and we do not want our inferences to be influenced by these sensitive attributes $S$. For this reason, we encode the data $X$ into a representation $Y \in \cY$ and then use $Y$ to draw inferences about the task $T$. Therefore, the Markov chains $S \leftrightarrow X \rightarrow Y$ and $T \leftrightarrow X \rightarrow Y$ hold.

This encoding, characterized by the conditional probability distribution $P_{Y|X}$, is designed so that the representation $Y$ keeps a certain level $r$ of the information about the task $T$ that is not shared by the sensitive attributes $S$ (i.e., the light gray area in Figure \ref{fig:i_diagram_cib}), while minimizing the information it keeps about the sensitive attributes $S$ and the information about the data $X$ that is not shared with the task $T$ (i.e., the dark and darker gray areas in Figure \ref{fig:i_diagram_cib}, respectively). That is, 
\begin{equation}
    \smash{\arg \inf_{P_{Y|X}} \left \lbrace I(S;Y) + I(X;Y|S,T) \right \rbrace \textnormal{ s.t. } I(T;Y|S) \geq r.}
    \label{eq:conditional_fairness_bottleneck_opt}
\end{equation}
This formulation differs from other approaches to fairness in two main points: (i)~Similarly to the IB \cite{tishby2000information}, the CFB does not only minimize the information the representation keeps about the sensitive attributes $S$, but also the information about the data $X$ that is irrelevant for the task $T$. That is, the CFB seeks a representation that is both \emph{fair} and \emph{relevant}, thus avoiding the risk of keeping \emph{nuisances} \cite{achille2018emergence} and harming its generalization capability. (ii)~Similarly to the CEB \cite{fischer2020conditional}, the CFB aims to produce a representation $Y$ that maintains a certain level $r$ of the information about the task $T$ that is not shared by the sensitive attributes $S$. This differs from formulations that aim to keep a certain level $r'$ of the information about the task $T$, disregarding if it is also shared by the sensitive attributes $S$.
%
\begin{figure}[htbp]
    \centering
    \subfloat[Conditional privacy funnel.\label{fig:i_diagram_cpf}]{{\includegraphics[width=0.25\textwidth]{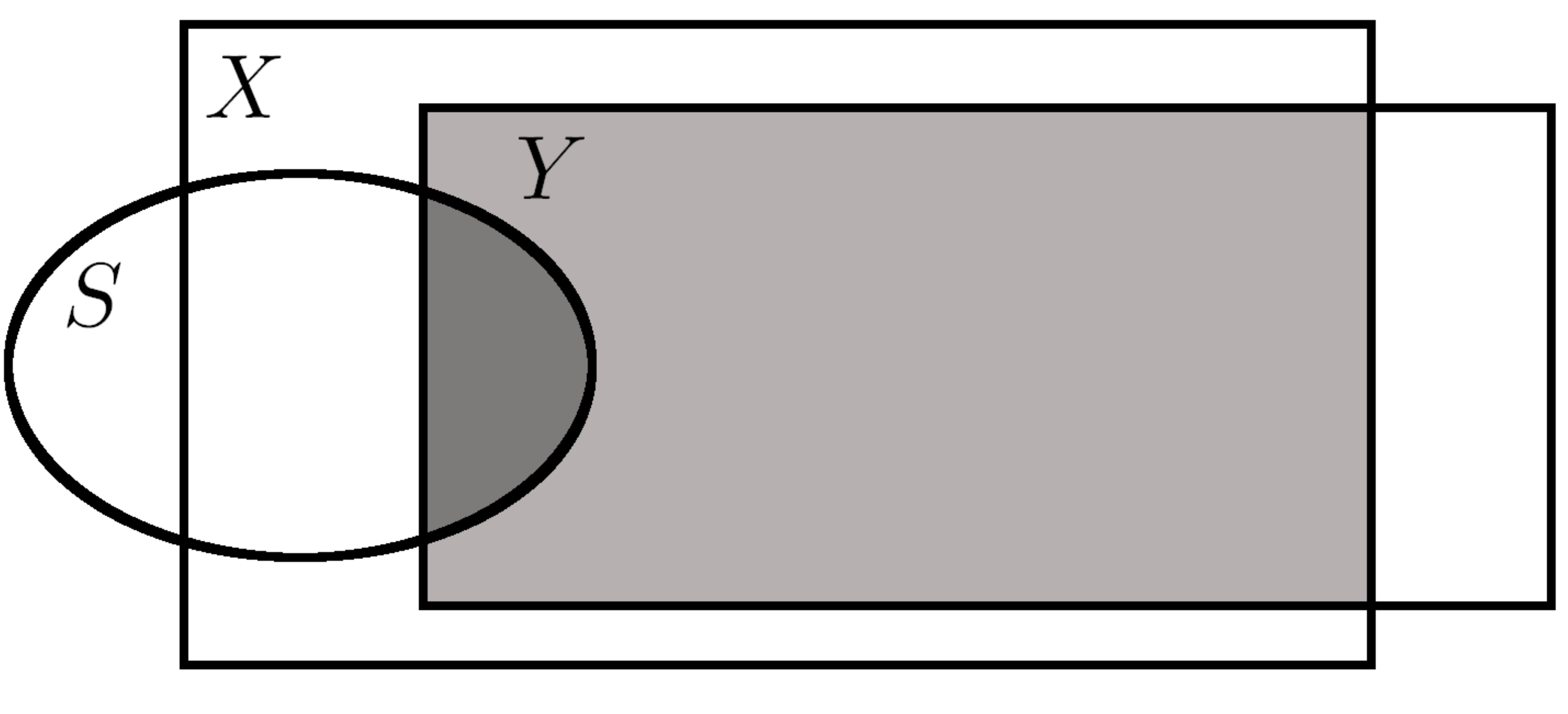}}} 
    \subfloat[Conditional fairness bottleneck. \label{fig:i_diagram_cib}]{{\includegraphics[width=0.25\textwidth]{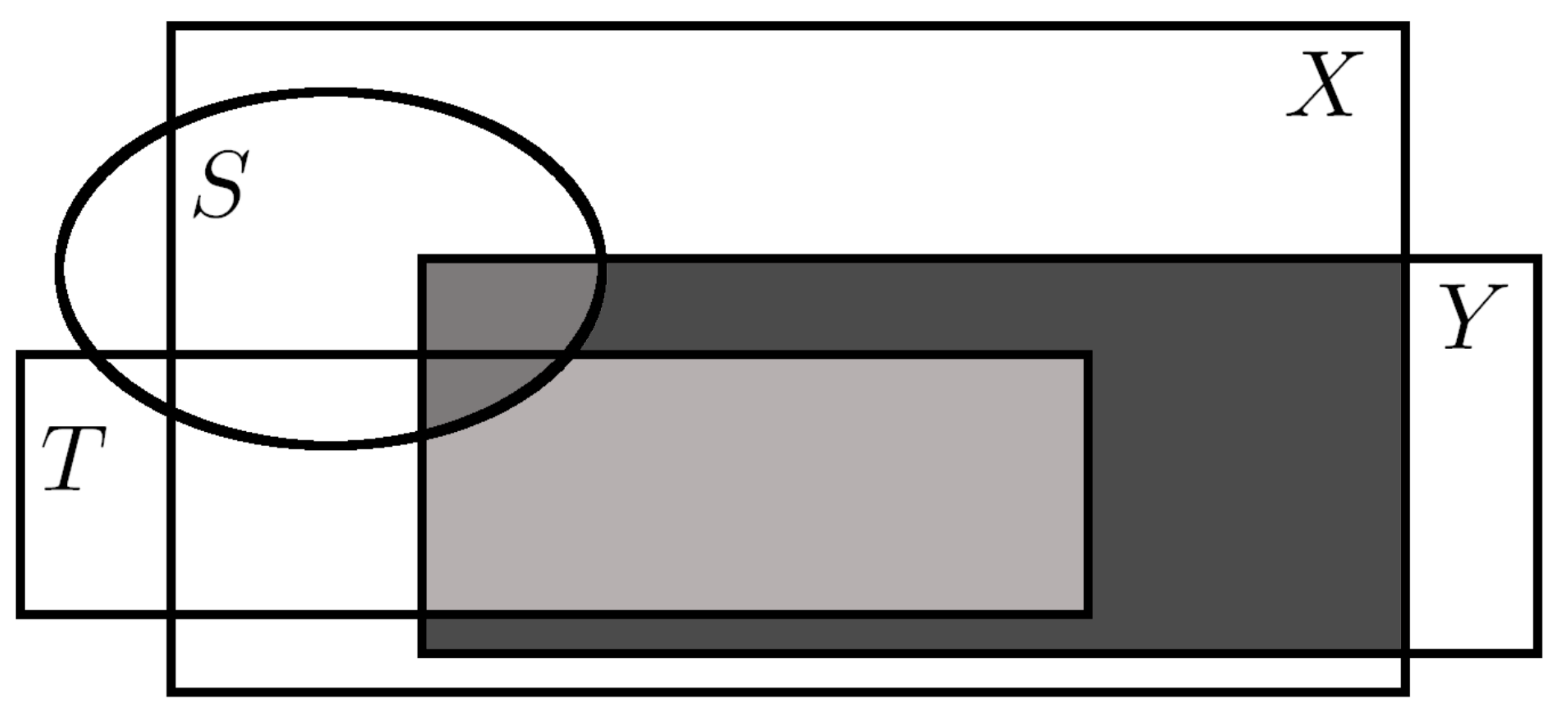}}}
    \caption{Information Diagrams \cite{yeung1991new} of (a) the CPF and (b) the CFB. In light gray, the relevant information we want $Y$ to keep. In dark and darker gray, the sensitive and irrelevant information, respectively, we want $Y$ to discard.}
    \label{fig:i_diagram}
\end{figure}
\subsection{The Lagrangians of the problems}
\label{subsec:lagrangians}
A common approach to solving optimization problems such as the CPF or the CFB is to minimize the \emph{Lagrangian} of the problem. The Lagrangian is a proxy of the trade-off between the function to optimize and the constraints on the optimization search space \cite[Chapter 5]{boyd2004convex}. Particularly, the Lagrangians of the CPF and the CFB are, ommiting the constant term $\lambda r$ in the optimization w.r.t. $P_{Y|X}$, respectively, 
\begin{align*}
    \smash{\mathcal{L}_{\textnormal{CPF}}(P_{Y|X}, \lambda)} &\smash{= I(S;Y) - \lambda I(X;Y|S) \textnormal{ and}} \\
    \smash{\mathcal{L}_{\textnormal{CFB}}(P_{Y|X}, \lambda)} &\smash{= I(S;Y) + I(X;Y|S,T) - \lambda I(T;Y|S),} 
\end{align*}
where $\lambda \geq 0 \in \mathbb{R}$ is the \emph{Lagrange multiplier} of the Lagrangian. This multiplier controls the trade-off between the information the representation $Y$ discards and the information it keeps.\footnote{Note that if $\lambda = 0$ the optimization only seeks for maximally compressed representations $Y$. Hence, trivial encoding distributions like a degenerate distribution $P_{Y|X}$ with density $p_{Y|X} = \delta(Y)$ minimize the Lagrangian.} More specifically, $\lambda$ controls exactly the trade-off between the information we want the representation to keep (i.e., the light gray area from Figure~\ref{fig:i_diagram}) and all the other information (i.e., the dark gray area from Figure~\ref{fig:i_diagram}).

In the following propositions, proved in Appendix \ref{app:lagrangians_variational_equivalence}, we present two alternative Lagrangians that are equivalent to the original Lagrangians, are more tractable, and exhibit similar properties and structure in the privacy and fairness problems.
\begin{restatable}{proposition}{equivalencelagrangianscpf} 
Minimizing $\mathcal{L}_{\textnormal{CPF}}(P_{Y|X}, \lambda)$ is equivalent to minimizing $\mathcal{J}_{\textnormal{CPF}}(P_{Y|X},\gamma)$, where $\gamma = \lambda + 1$ and  
\begin{equation}
	\smash{\mathcal{J}_{\textnormal{CPF}}(P_{Y|X},\gamma) = I(X;Y) - \gamma I(X;Y|S).}
\end{equation}
\label{prop:equivalence_lagrangians_cpf}
\end{restatable}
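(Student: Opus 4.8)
The plan is to show that the two Lagrangians are in fact \emph{identical} as functions of $P_{Y|X}$ (after the reparametrization $\gamma = \lambda+1$), so that the equivalence of the two minimization problems is immediate. The only structural fact I need is the Markov chain $S \leftrightarrow X \leftrightarrow Y$ that defines the CPF, which gives $I(S;Y|X) = 0$.

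First I would expand $I(X,S;Y)$ using the chain rule for mutual information in the two possible orders:
\begin{align*}
    I(X,S;Y) &= I(X;Y) + I(S;Y|X) = I(X;Y), \\
    I(X,S;Y) &= I(S;Y) + I(X;Y|S),
\end{align*}
where the first line uses $I(S;Y|X)=0$ from the Markov chain. Equating the two expressions yields the key identity $I(S;Y) = I(X;Y) - I(X;Y|S)$.

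Next I would substitute this identity directly into the CPF Lagrangian:
\begin{align*}
    \mathcal{L}_{\textnormal{CPF}}(P_{Y|X},\lambda)
    &= I(S;Y) - \lambda I(X;Y|S) \\
    &= I(X;Y) - I(X;Y|S) - \lambda I(X;Y|S) \\
    &= I(X;Y) - (\lambda+1)\, I(X;Y|S) \\
    &= \mathcal{J}_{\textnormal{CPF}}(P_{Y|X},\gamma),
\end{align*}
with $\gamma = \lambda + 1$. Since, for a fixed multiplier, $\mathcal{L}_{\textnormal{CPF}}(\cdot,\lambda)$ and $\mathcal{J}_{\textnormal{CPF}}(\cdot,\gamma)$ coincide pointwise over all feasible $P_{Y|X}$, they share the same set of minimizers, and ranging $\lambda$ over $[0,\infty)$ corresponds exactly to ranging $\gamma$ over $[1,\infty)$.

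I do not expect a real obstacle here: the argument is a one-line chain-rule manipulation enabled by the Markov structure. The only points worth a sentence of care are (i) noting that all the mutual informations involved are finite (or the identity is read in the extended reals) so the chain rule applies cleanly, and (ii) recalling that the constant term $\lambda r$ was already dropped from $\mathcal{L}_{\textnormal{CPF}}$, so no additive constant obstructs the pointwise equality — both remarks are already flagged in the surrounding text.
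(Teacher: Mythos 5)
Your proof is correct and follows essentially the same route as the paper's: both rest on the identity $I(S;Y) = I(X;Y) - I(X;Y|S)$ (a consequence of the Markov chain $S \leftrightarrow X \leftrightarrow Y$) substituted into $\mathcal{L}_{\textnormal{CPF}}$ to absorb the two $I(X;Y|S)$ terms into the factor $\lambda+1$. You merely make the chain-rule derivation of that identity explicit, which the paper leaves implicit.
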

\begin{restatable}{proposition}{equivalencelagrangianscfb} 
Minimizing $\mathcal{L}_{\textnormal{CFB}}(P_{Y|X}, \lambda)$ is equivalent to minimizing $\mathcal{J}_{\textnormal{CFB}}(P_{Y|X},\beta)$, where $\beta = \lambda + 1$ and 
\begin{equation}
	\smash{\mathcal{J}_{\textnormal{CFB}}(P_{Y|X},\beta) = I(X;Y) - \beta I(T;Y|S).}
\end{equation}
\label{prop:equivalence_lagrangians_cfb}
\end{restatable}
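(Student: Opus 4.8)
The plan is to show that $\mathcal{L}_{\textnormal{CFB}}(P_{Y|X},\lambda)$ and $\mathcal{J}_{\textnormal{CFB}}(P_{Y|X},\lambda+1)$ are in fact the \emph{same} function of $P_{Y|X}$, so that in particular they share minimizers. The only structural input I need is that $Y$ is produced from $X$ alone through the channel $P_{Y|X}$; hence $Y$ is conditionally independent of $(S,T)$ given $X$, i.e.\ the joint Markov chain $(S,T) \leftrightarrow X \leftrightarrow Y$ holds and $I(S,T;Y \mid X) = 0$.

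First I would compute $I(X,S,T;Y)$ in two ways using the chain rule for mutual information. Separating $X$ first, $I(X,S,T;Y) = I(X;Y) + I(S,T;Y\mid X) = I(X;Y)$ by the Markov property. Expanding instead in the order $S$, then $T$, then $X$, $I(X,S,T;Y) = I(S;Y) + I(T;Y\mid S) + I(X;Y\mid S,T)$. Equating the two expressions yields the identity
\begin{equation*}
I(S;Y) + I(X;Y\mid S,T) = I(X;Y) - I(T;Y\mid S).
\end{equation*}

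Substituting this identity into $\mathcal{L}_{\textnormal{CFB}}(P_{Y|X},\lambda) = I(S;Y) + I(X;Y\mid S,T) - \lambda I(T;Y\mid S)$ collapses it to $I(X;Y) - (\lambda+1)\, I(T;Y\mid S)$, which is exactly $\mathcal{J}_{\textnormal{CFB}}(P_{Y|X},\beta)$ with $\beta = \lambda + 1$. Since $\lambda \mapsto \lambda+1$ is a bijection from $[0,\infty)$ onto $[1,\infty)$, the two parametrized families of Lagrangians coincide, so minimizing one over $P_{Y|X}$ for a given multiplier is the same problem as minimizing the other for the corresponding multiplier. (The same argument, now with $T$ absent, reproduces Proposition~\ref{prop:equivalence_lagrangians_cpf} via the identity $I(S;Y) = I(X;Y) - I(X;Y\mid S)$.)

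I do not expect a genuine obstacle: the statement is essentially a one-line consequence of the chain rule once the Markov structure is invoked. The one point deserving a little care is justifying that the two separate chains $S\leftrightarrow X\rightarrow Y$ and $T\leftrightarrow X\rightarrow Y$ from the model combine into the joint statement $I(S,T;Y\mid X)=0$ — immediate because the encoder $P_{Y|X}$ depends on $X$ only, but it must be stated explicitly, as the displayed identity fails without it. I would also note that all mutual informations involved are assumed finite, so that the rearrangement is unambiguous.
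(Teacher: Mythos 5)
Your proof is correct and follows essentially the same route as the paper: both rewrite $I(S;Y)+I(X;Y|S,T)$ as $I(X;Y)-I(T;Y|S)$ under the Markov structure and absorb the extra $-I(T;Y|S)$ into the multiplier via $\beta=\lambda+1$. The only difference is that you make explicit, via the chain rule and the joint chain $(S,T)\leftrightarrow X\leftrightarrow Y$ (which holds because the encoder $P_{Y|X}$ uses $X$ alone), the identity that the paper applies silently in its second step — a welcome clarification, not a deviation.
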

The minimization of $\mathcal{J}_{\textnormal{CPF}}$ and $\mathcal{J}_{\textnormal{CFB}}$, by means of $\gamma$ and $\beta$, trades off the level of compression of the representation Y with the information 
it keeps, respectively, about the data $X$ and the task $T$ that is not shared by the sensitive attributes $S$.
\subsection{The variational approach}
\label{subsec:variational_approach}
We consider the minimization of $\mathcal{J}_{\textnormal{CPF}}$ and $\mathcal{J}_{\textnormal{CFB}}$ to solve the CPF and CFB problems. Furthermore, we assume that the probability density (or mass if $|\mathcal{Y}|$ is countable) functions $p_{Y|X}$ that describe the conditional probability distribution $P_{Y|X}$  exist and are parameterized by $\theta$, i.e., $p_{Y|X} = p_{Y|X,\theta}$.

The Markov chains of the CPF and the CFB characterize the densities
$p_{S,X,Y|\theta} = p_{S,X} \cdot p_{Y|X,\theta}$ and $p_{S,T,X,Y|\theta} = p_{S,T,X} \cdot p_{Y|X,\theta}$. The densities $p_{S,X}$ and $p_{S,T,X}$ can be inferred from the data and the density $p_{Y|X,\theta}$ is to be designed.

The term $I(X;Y)$ depends on the density $p_{Y|\theta}$, which is usually intractable. Similarly, the terms $I(X;Y|S)$ and $I(T;Y|S)$ depend on the densities $p_{X|S,Y,\theta}$ and $p_{T|S,Y,\theta}$, respectively, which are also usually intractable. Therefore, an exact optimization of $\theta$ is prohibitively computationally expensive. For this reason, we introduce the variational density approximations $q_{Y|\theta}$, $q_{X|S,Y,\phi}$, and $q_{T|S,Y,\phi}$, where the generative and inference densities are parametrized by $\phi$. 

Then, as previously done in, e.g., \cite{kingma2013auto, alemi2016deep, kolchinsky2019nonlinear}, we leverage the non-negativity of the relative entropy 
to bound $\mathcal{J}_{\textnormal{CPF}}$ and $\mathcal{J}_{\textnormal{CFB}}$ from above. More precisely, we bound $I(X;Y)$ from above and $I(X;Y|S)$ and $I(T;Y|S)$ from below, i.e.,
%
%
\begin{align}
    I(X;Y) 
    &\leq \mathbb{E}_{p_X} \left[  D_{\textnormal{KL}} \left( p_{Y|X,\theta} || q_{Y|\theta}) \right) \right], \label{eq:bound_ixy} \\
    I(X;Y|S) 
    &\geq \mathbb{E}_{p_{S,X,Y|\theta}} \left[ \log \left( \frac{q_{X|S,Y,\phi}}{p_{X|S}} \right) \right] \textnormal{, and} \label{eq:bound_ixy_s} \\
    I(T;Y|S) 
    &\geq \mathbb{E}_{p_{S,T,Y|\theta}} \left[ \log \left( \frac{q_{T|S,Y,\phi}}{p_{T|S}} \right) \right]. \label{eq:bound_iyt_s}
\end{align}
Finally, we can jointly learn $\theta$ and $\phi$ through gradient descent. First, we note that the terms $\mathbb{E}_{p_{(S,X)}}[\log p_{X|S}]$ and $\mathbb{E}_{p_{(S,T)}}[\log p_{T|S}]$ do not depend on the parametrization, and can therefore be discarded from the optimization. Second, we leverage the \emph{reparametrization trick} \cite{kingma2013auto}, which allows us to compute an unbiased estimate of the gradients. That is, we let $p_{Y|X}dY = p_E dE$, where $E$ is a random variable and $Y = f(X,E;\theta)$ is a deterministic function. Lastly, we estimate $p_{S,X}$ and $p_{S,T,X}$ as the data's empirical densities.

In practice, if we have a dataset of $N$ samples $\left(x^{(i)}, s^{(i)}\right)$ for the CPF or $N$ samples $\left(x^{(i)}, s^{(i)}, t^{(i)}\right)$ for the CFB, we minimize, respectively, the following cost functions:
\begin{align}
    \tilde{\mathcal{J}}_{\textnormal{CPF}}&(\theta,\phi,\gamma) = \frac{1}{N}  \sum\nolimits_{i=1}^N  D_{\textnormal{KL}} \left( p_{Y|X=x^{(i)},\theta} || q_{Y|\theta} \right) \nonumber \\ 
    &- \gamma \mathbb{E}_{p_E} \left[ \log \left( q_{X|S=s^{(i)},Y=f(x^{(i)},E), \phi} (x^{(i)})\right)\right] \label{eq:cpf_cost}
\end{align}
\begin{align}
    \tilde{\mathcal{J}}_{\textnormal{CFB}}&(\theta,\phi,\beta) = \frac{1}{N} \sum\nolimits_{i=1}^N  D_{\textnormal{KL}} \left( p_{Y|X=x^{(i)},\theta} || q_{Y|\theta} \right) \nonumber \\
    &- \beta \mathbb{E}_{p_E} \left[ \log \left( q_{T|S=s^{(i)},Y=f(x^{(i)},E), \phi} (t^{(i)})\right)\right], \label{eq:cfb_cost}
\end{align}
where the expectation over $E \in \mathcal{E}$ is usually
estimated with a naive Monte Carlo of a single sample.

An \emph{a posteriori} interpretation of this approach is that if the encoder compresses the representation $Y$ assuming that the decoder will use both $Y$ and the private or sensitive attributes $S$, then the encoder will discard the information about $S$ contained in the original data $X$ in order to generate $Y$.
\begin{remark}
\label{remark:easily_adopted}
The resulting cost functions for the CPF and the CFB ressemble those of the VAE \cite{kingma2013auto}, the $\beta$-VAE \cite{higgins2017beta}, the VIB \cite{alemi2016deep}, or the nonlinear IB \cite{kolchinsky2019nonlinear}. Consider the (common) case that the decoder density is estimated with a neural network. If such a network is modified so that it receives as input both the representation and the private or sensitive attributes instead of
only the representation, then the optimization of these algorithms results in private and/or fair representations (see Appendix~\ref{app:modification_common_algorithms} for the details).
\end{remark}
%

\section{Results}
\label{sec:results}

In this section, we present experiments on two datasets to showcase the performance of the presented variational approach to the privacy and fairness problems. First, we show its performance in a dataset commonly used for benchmarking both tasks. Second, we show the performance on high-dimensional data on a toy dataset designed for this purpose. The encoder density is modeled with an isotropic Gaussian distribution, i.e.,  $p_{Y|X,\theta} = \mathcal{N}(Y;\mu_{\textnormal{enc}}(X;\theta), \sigma_{\theta}^2 I_d)$, 
where $\mu_{\textnormal{enc}}$ is a neural network and $d$ is the dimension of the representation. The marginal density of the representation 
is also modeled as an isotropic Gaussian $q_{Y|\theta} = \mathcal{N}(Y;0,I_d)$. Finally, the decoder density, $q_{X|S,Y,\phi}$ or $q_{T|S,Y,\phi}$, 
is modeled with a product of categorical (for discrete data) and/or isotropic Gaussians (for continuous data), e.g., $q_{X|S,Y,\phi} = \textnormal{Cat}(X_1;\rho_{\textnormal{dec}}(Y,S;\phi)) \mathcal{N}(X_2;\mu_{\textnormal{dec}}(S,Y;\phi), \sigma_{\phi}^2)$ if $X$ 
consists of a discrete variable $X_1$ and a continuous variable $X_2$. These and additional experiments are detailed in Appendix \ref{app:experiments}. 
\paragraph{Adult dataset} The Adult dataset (available at the UCI machine learning repository \cite{Dua2019}) contains $45,222$ samples from the 1994 U.S. Census. Each sample comprises 15 features such as, e.g., \emph{gender}, \emph{age}, or \emph{income level} (binary variable stating if the income level is higher than $\$ 50,000$). We followed the experimental set-up from \cite{zemel2013learning}, where the sensitive attribute $S$ is the gender, the task $T$ is the income level, and the data $X$ is the rest of the features. 
\paragraph{Toy dataset (Colored MNIST)} The MNIST dataset \cite{lecun1998gradient} is a collection of $70,000$ grayscale $28 \times 28$ images of hand-written digits from $0$ to $9$. The Colored MNIST is a modification of the former dataset where each digit is randomly colored in either red, green, or blue. We considered that the data $X$ are the $3 \times 28 \times 28$ digit images, the sensitive attribute $S$ is the digit's color, and the task $T$ is digit's number.
\subsection{Privacy}
\label{subsec:results_privacy}
The proposed approach is able to control the trade-off between a private and an informative representation for both the Adult and the Colored MNIST datasets. We minimized~\eqref{eq:cpf_cost} for different values of $\gamma \in [1,50]$, thus controlling the trade-off between
the compression $I(X;Y)$ and the informativeness of the representation independent of the private data $I(X;Y|S)$ (see Figures~\ref{fig:adult_ixy_vs_minus_hx_given_sy} and~\ref{fig:colored_mnist_ixy_vs_minus_hx_given_sy}). Hence, as suggested by Proposition \ref{prop:equivalence_lagrangians_cpf}, 
the multiplier $\gamma$ also controls the amount of private information the representation keeps (see Figures \ref{fig:adult_gammas_vs_isy} and \ref{fig:colored_gammas_vs_isy}).
\begin{figure}[htbp]
    \centering
    \subfloat[Privacy on Adult.\label{fig:adult_ixy_vs_minus_hx_given_sy}]{{\includegraphics[width=0.23\textwidth]{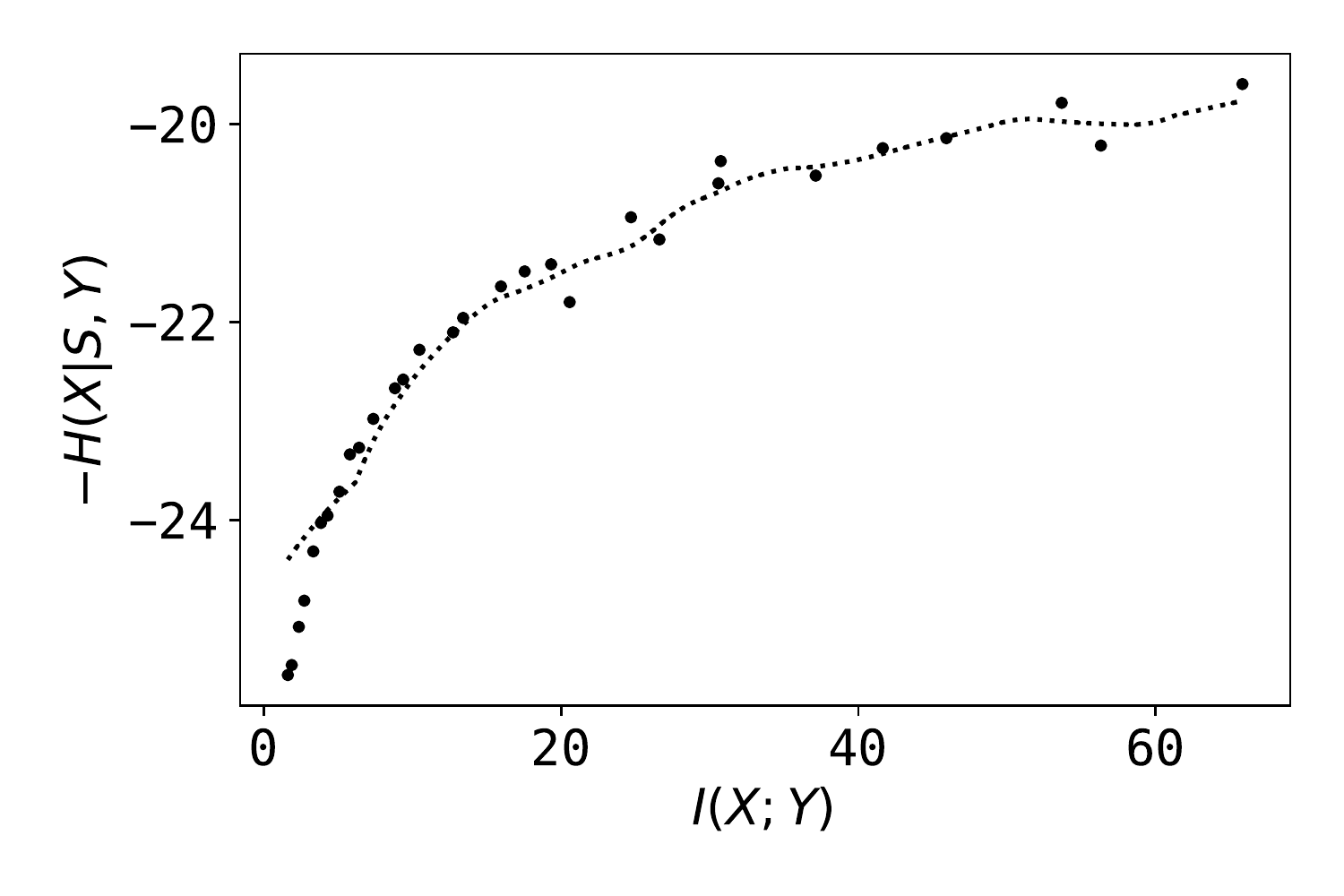}}}  
    \subfloat[Privacy on Colored MNIST.\label{fig:colored_mnist_ixy_vs_minus_hx_given_sy}]{{\includegraphics[width=0.23\textwidth]{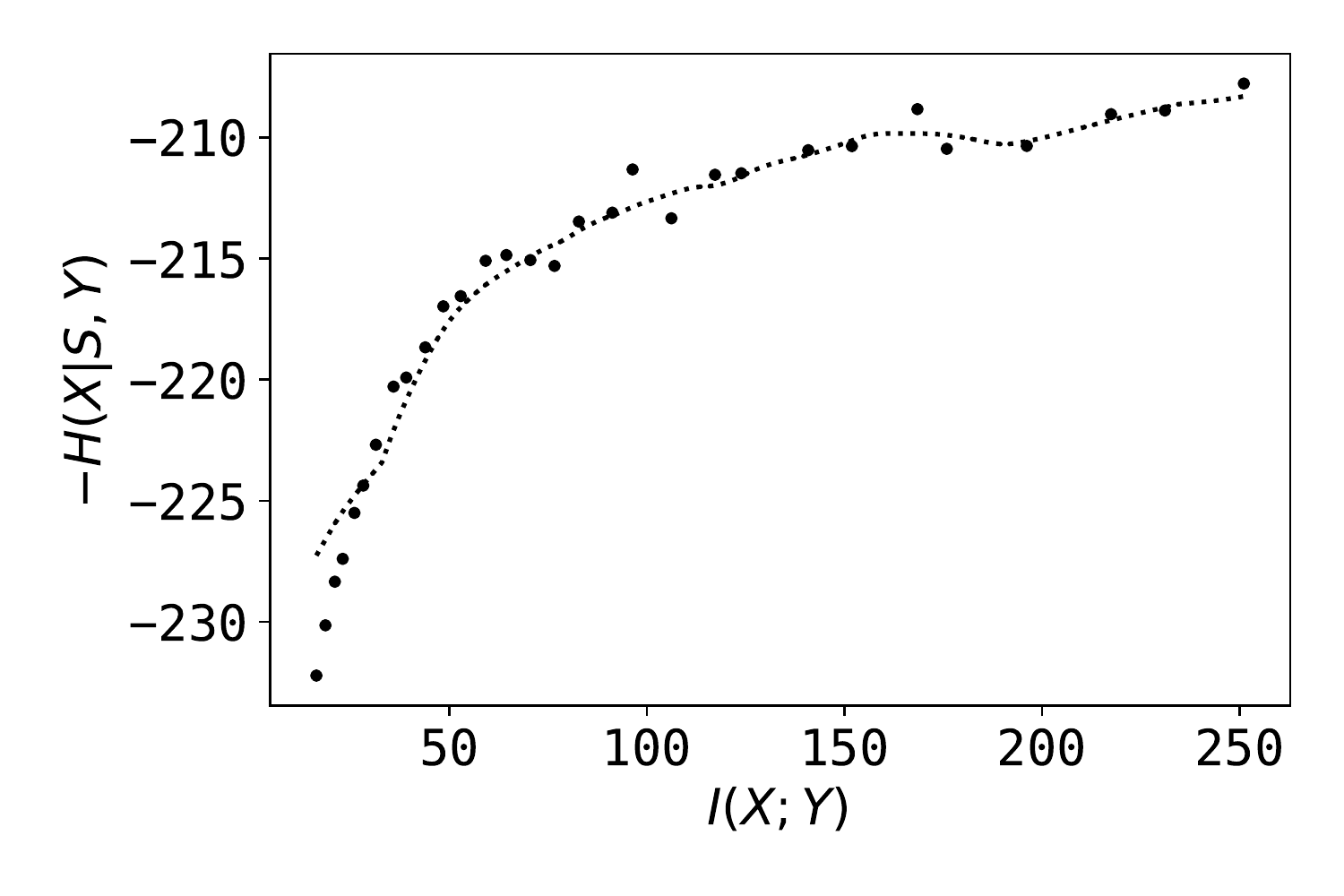}}}
    
    \subfloat[Fairness on Adult. \label{fig:adult_ixy_vs_ity_given_s}]{{\includegraphics[width=0.23\textwidth]{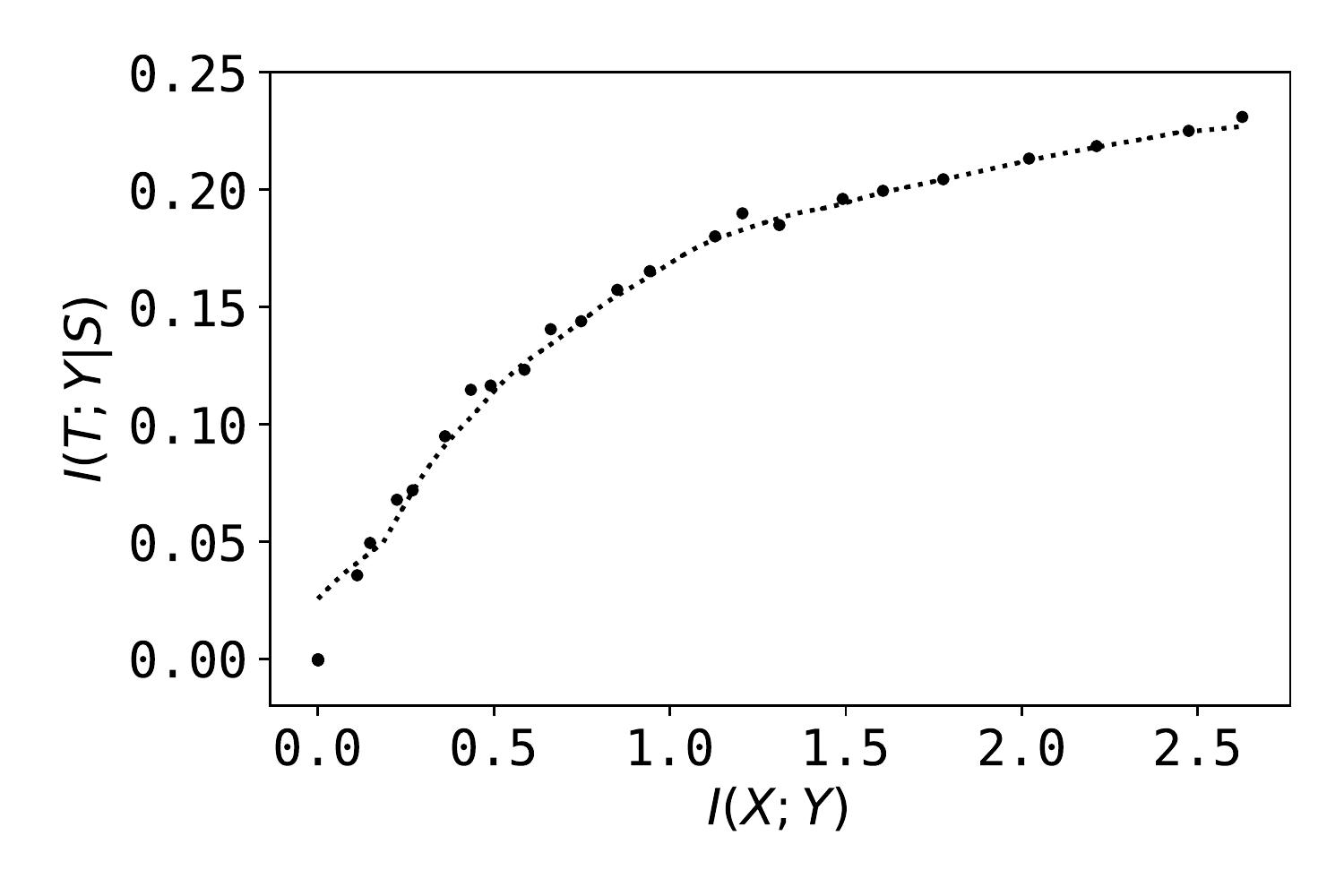}}}
    \subfloat[Fairness on Colored MNIST. \label{fig:colored_mnist_ixy_vs_ity_given_s}]{{\includegraphics[width=0.23\textwidth]{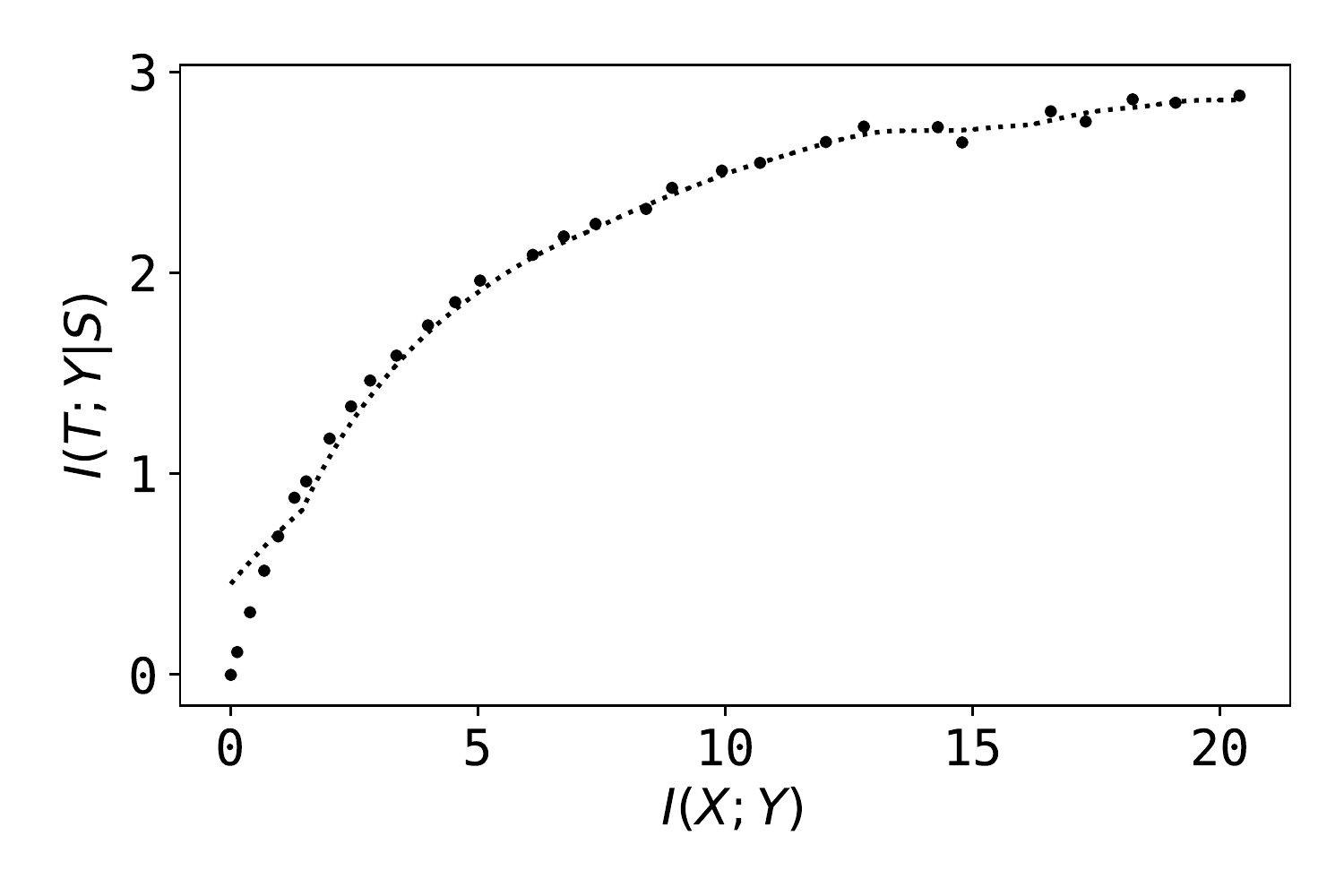}}}
    
    \subfloat[Privacy on Adult. \label{fig:adult_gammas_vs_isy}]{{\includegraphics[width=0.23\textwidth]{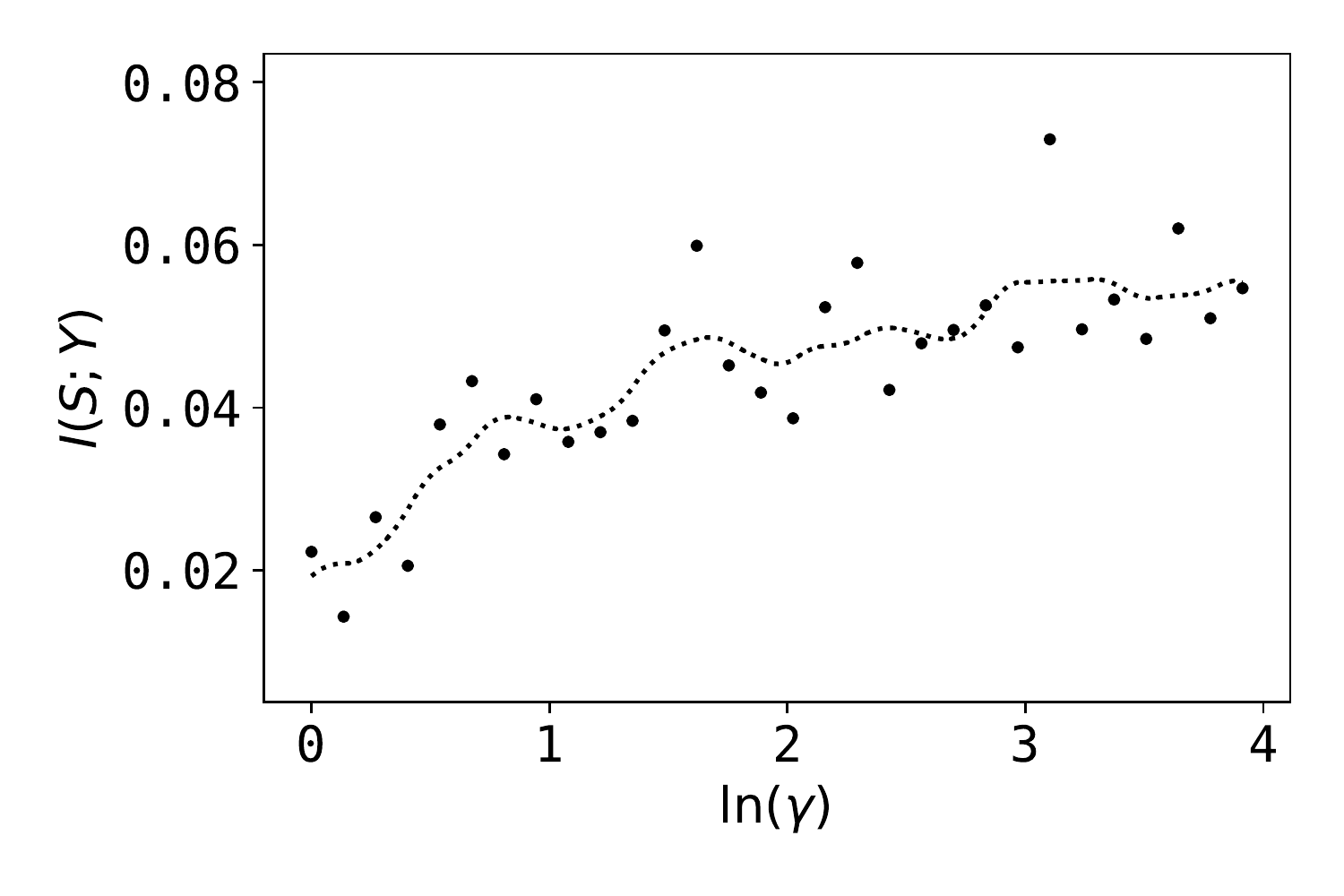}}} 
    \subfloat[Privacy on Colored MNSIT. \label{fig:colored_gammas_vs_isy}]{{\includegraphics[width=0.23\textwidth]{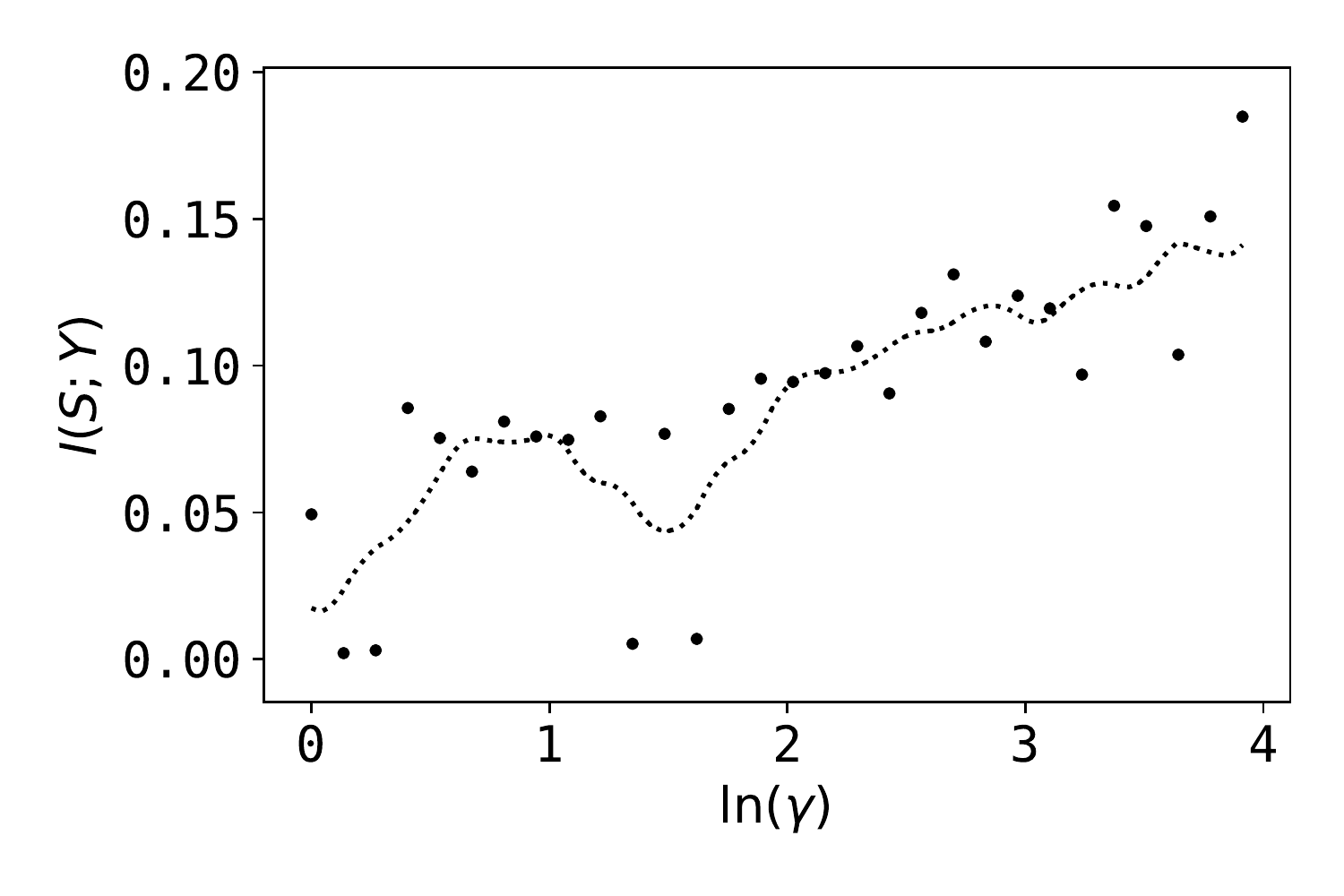}}}
    
    \subfloat[Fairness on Adult. \label{fig:adult_betas_vs_isy}]{{\includegraphics[width=0.23\textwidth]{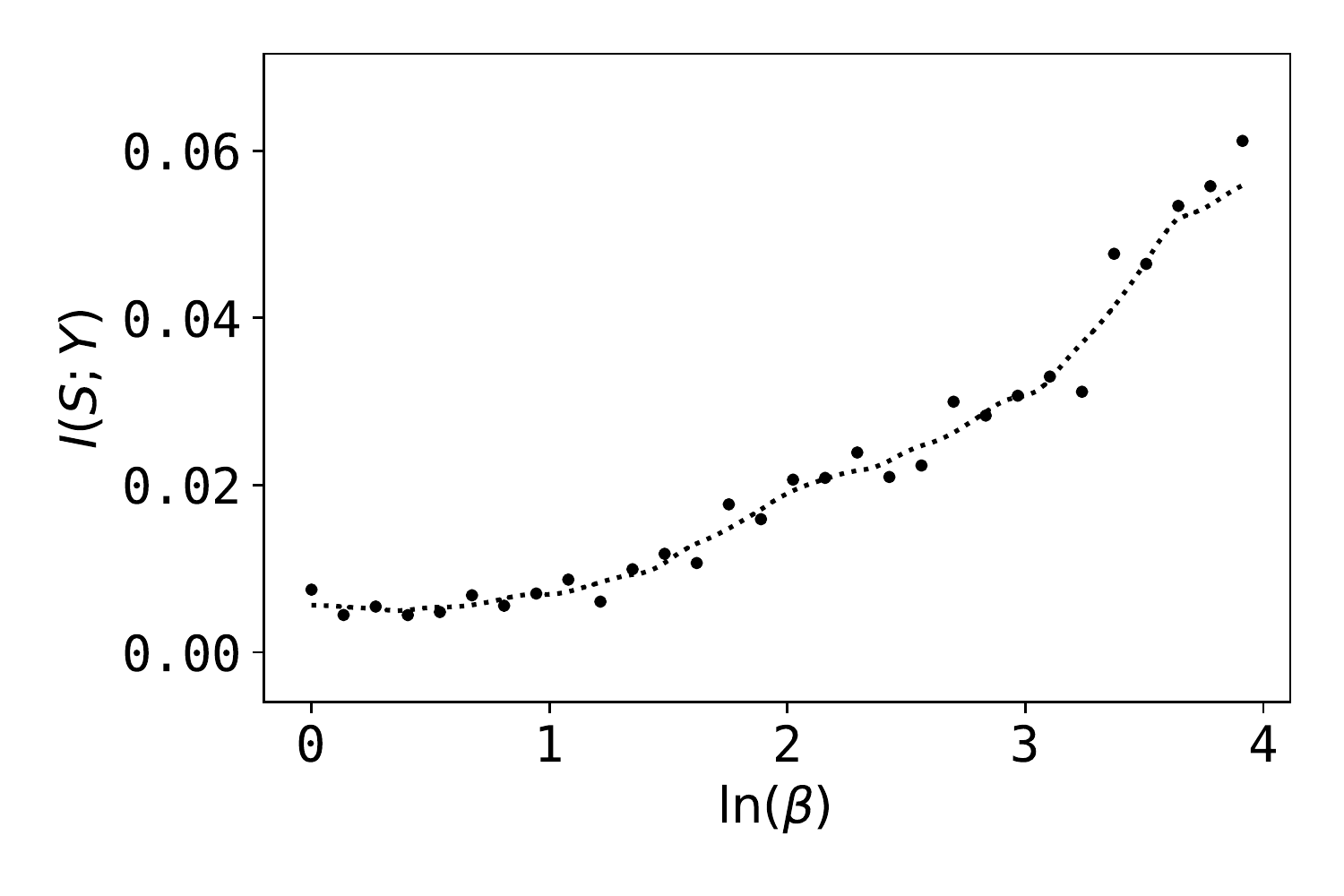}}}
    \subfloat[Fairness on Colored MNIST. \label{fig:colored_mnist_betas_vs_isy}]{{\includegraphics[width=0.23\textwidth]{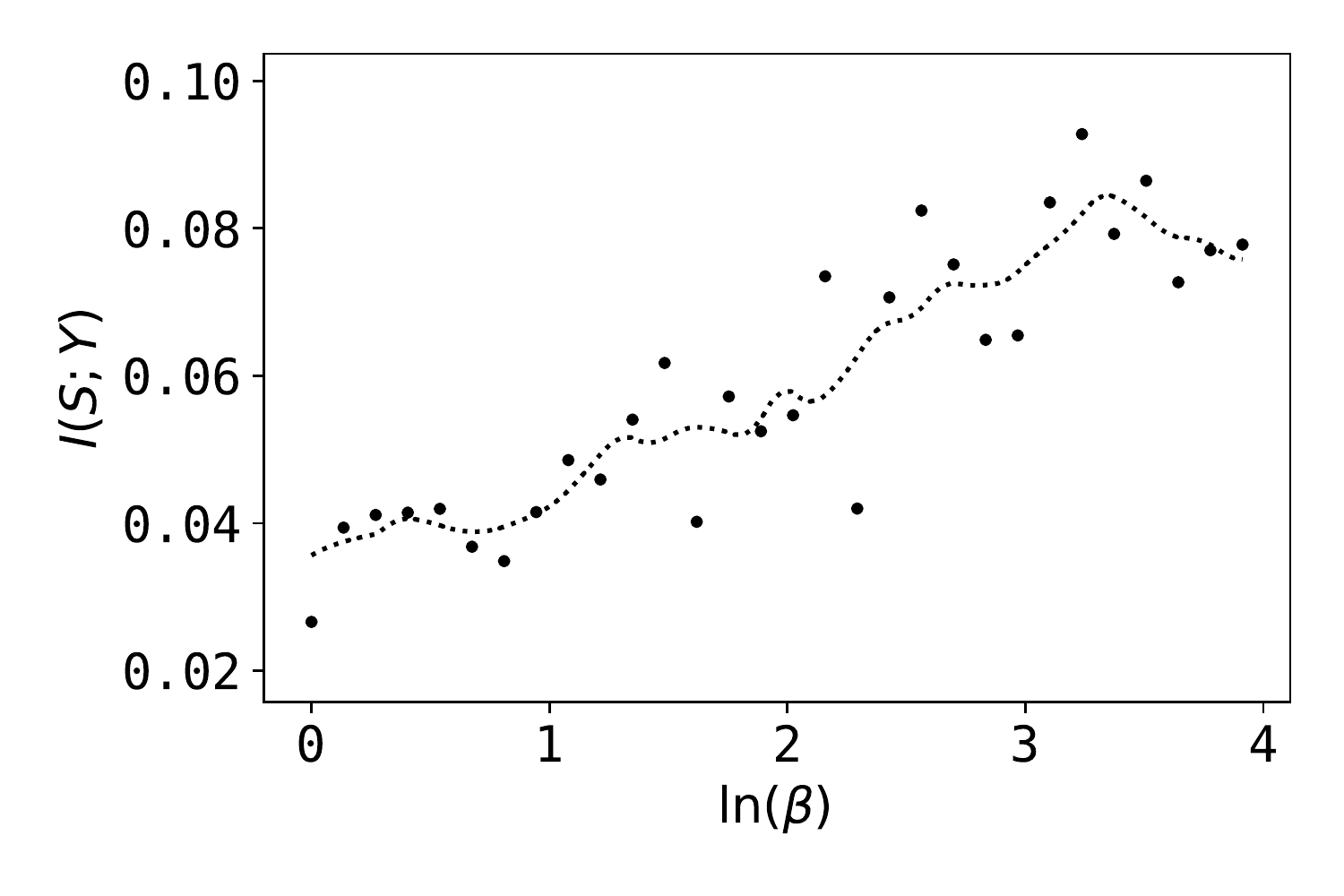}}}
    \caption{
    First two rows: representation's compression $I(X;Y)$ vs. non-private information retained about the data $I(X;Y|S)$ or the task $I(T;Y|S)$. Since $I(X;Y|S) = H(X|X) - H(X|S,Y)$ and $H(X|S)$ does not depend on $Y$, only $-H(X|S,Y)$ is reported. Last two rows: private information leaked $I(S;Y)$ (estimated with MINE~\cite{belghazi2018mutual}) vs. different values of the Lagrange multipliers $\gamma \in [1,50]$ and $\beta \in [1,50]$. The dots are the computed empirical values and the lines the moving average of their linear interpolation.
    }
\end{figure}

As an illustration, we constructed a representation
with the same dimension to the digit images by minimizing~\eqref{eq:cpf_cost} with $\gamma = 1$ (Figure~\ref{fig:private_representation}). The representation is
both informative and private; e.g., the 2D UMAP \cite{mcinnes2018umap} vectors of the representation are mingled with respect to the digits' color, as opposed to the UMAP vectors of the original images, where the vectors are clustered by the color of the digits (see Figures~\ref{fig:x_reduction} and~\ref{fig:y_reduction}). 
\begin{figure}[htbp]
    \centering
    \subfloat[Original's data $X$ samples. \label{fig:data_representation}]{{\includegraphics[width=0.2\textwidth]{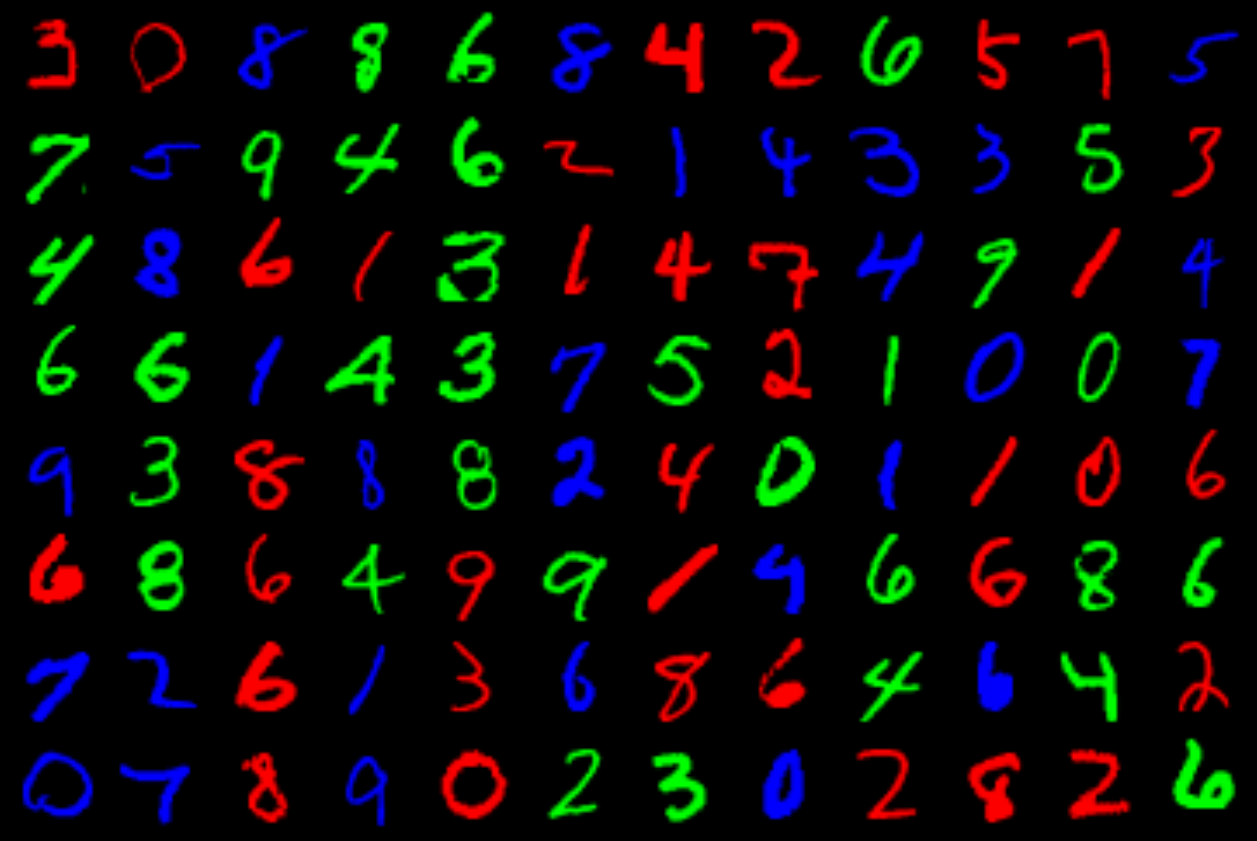}}} \
    \subfloat[Representation's $Y$ samples. \label{fig:private_representation}]{{\includegraphics[width=0.2\textwidth]{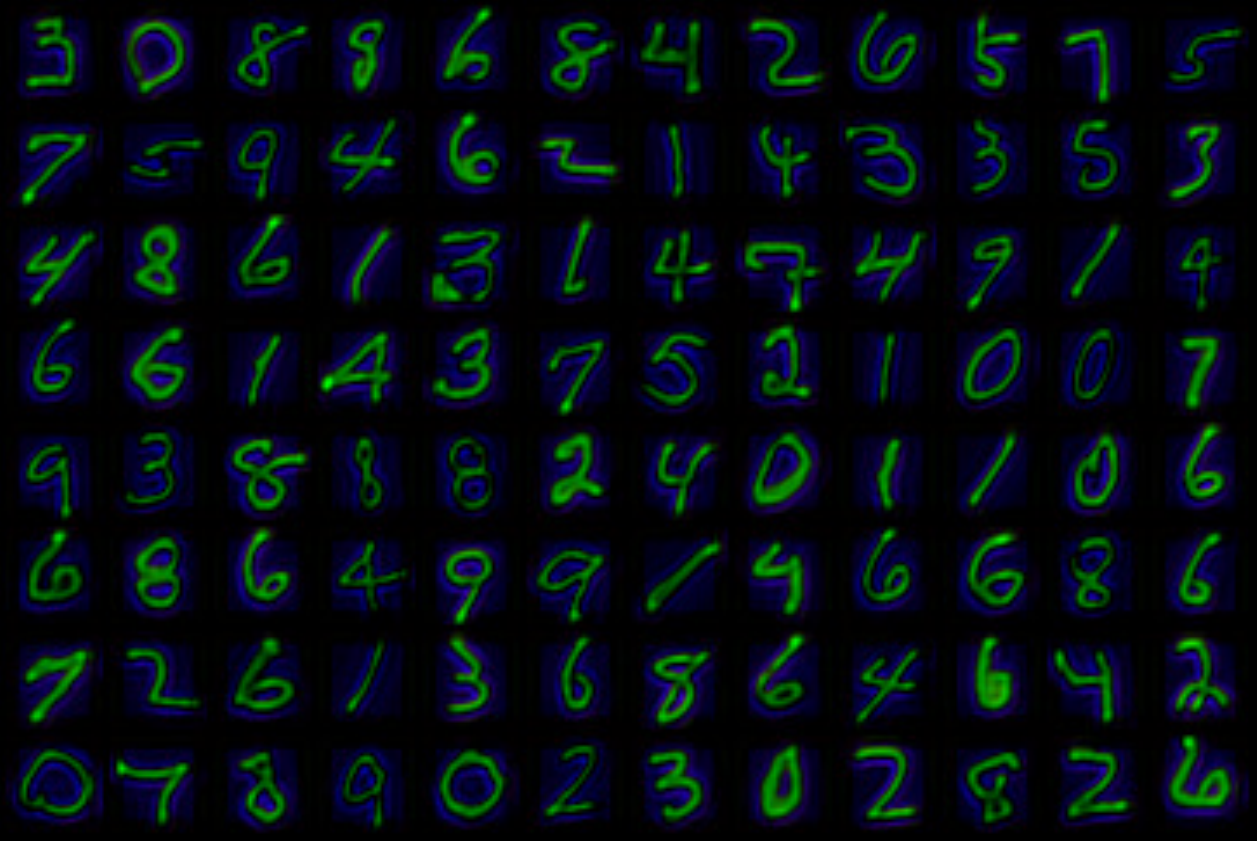}}} 
    
    \subfloat[Original's data $X$ UMAP. \label{fig:x_reduction}]{{\includegraphics[width=0.2\textwidth]{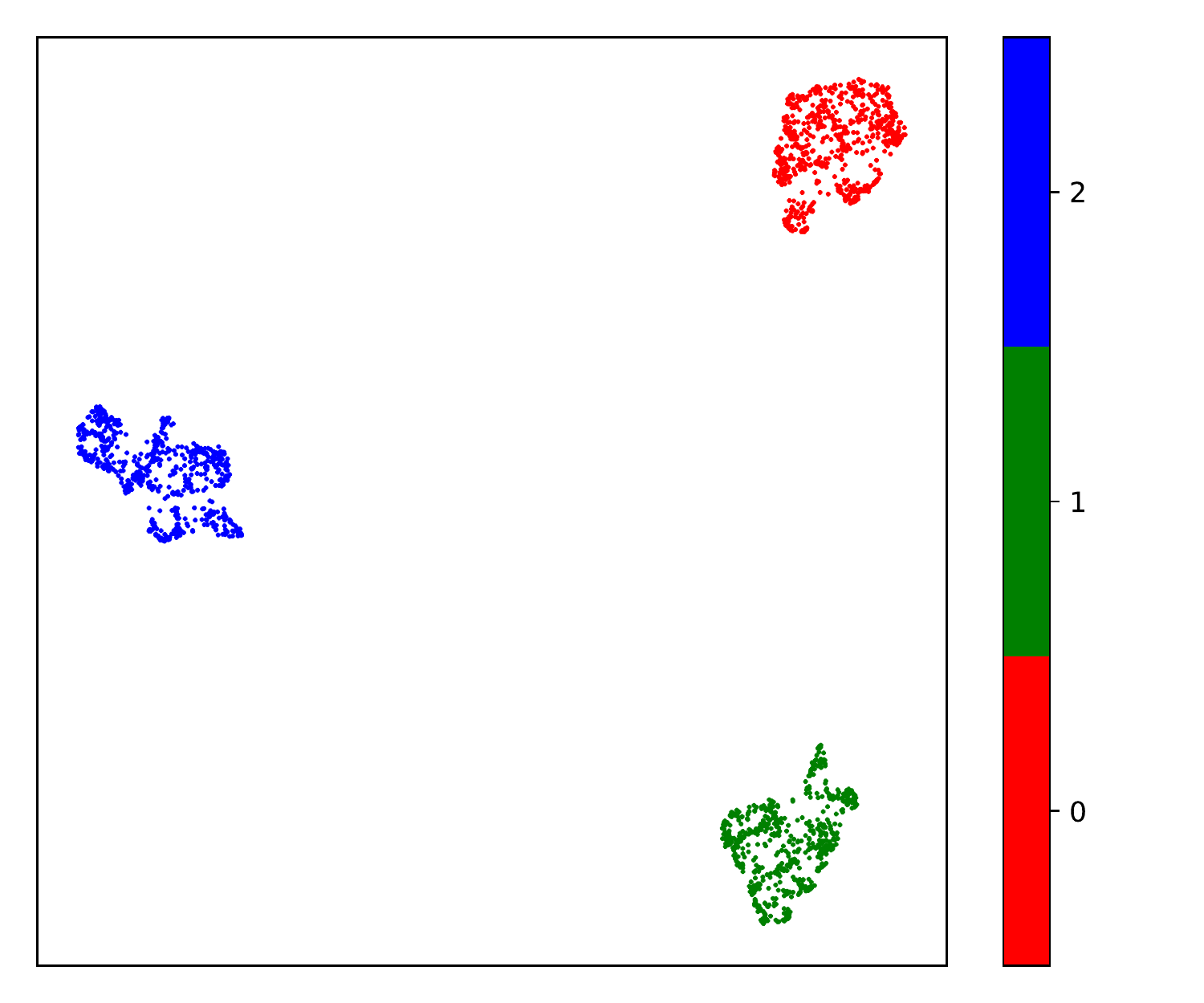}}}
    \subfloat[Representation's $Y$ UMAP. \label{fig:y_reduction}]{{\includegraphics[width=0.2\textwidth]{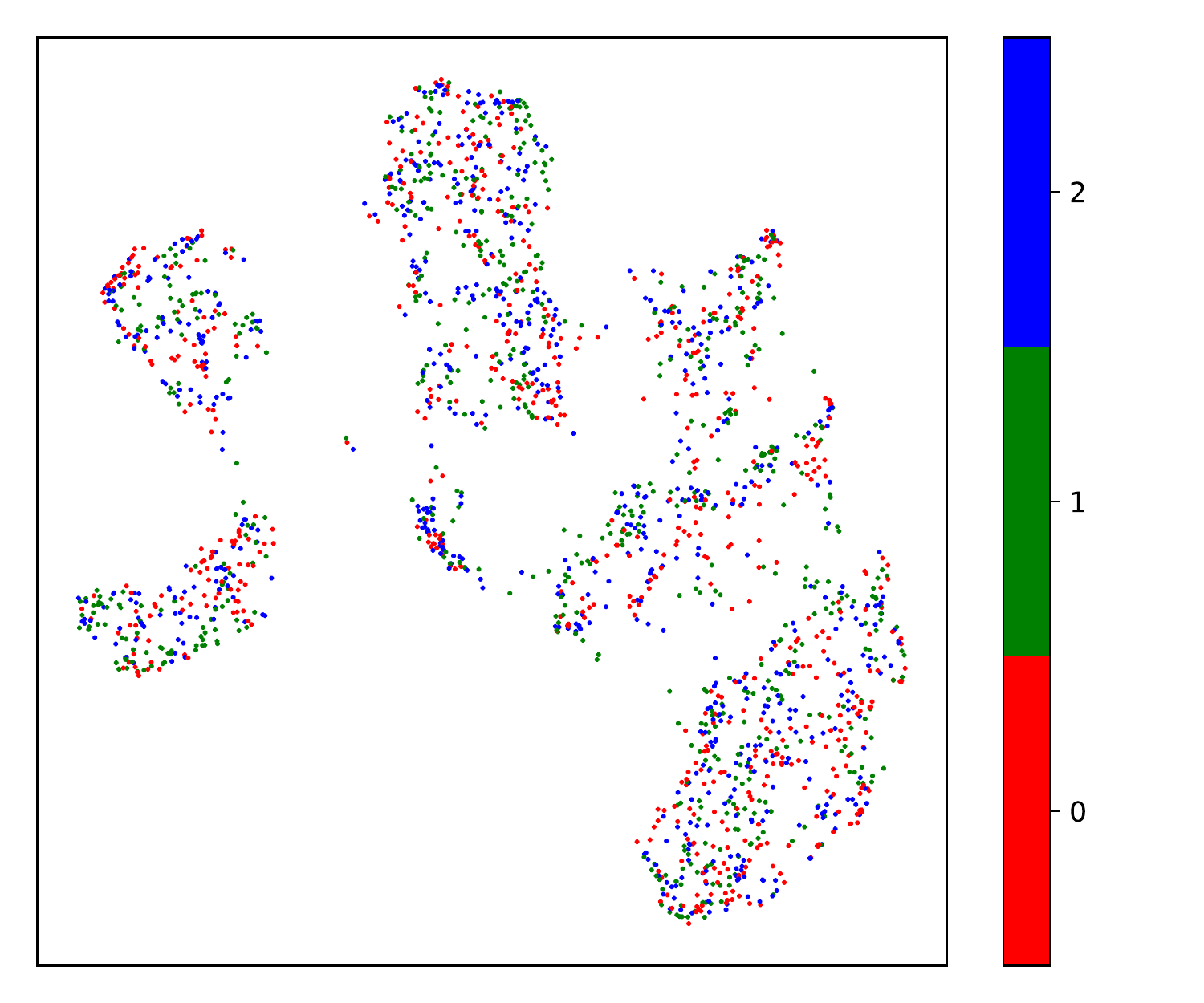}}}
    \caption{Random samples of Colored MNIST and their UMAP dimensionality reduction vectors. Each UMAP vector is colored with the same color to the digit they represent. The results are obtained for $\gamma = 1$.}
    \label{fig:example_cpf}
\end{figure}

Compared to other variational approaches to the PF like the \cite[PPVAE]{nan2020variational} and \cite[VFAE]{louizos2015variational}, the proposed approach performed better in terms of information leakage $I(S;Y)$ and accuracy of non-linear attackers 
(see Table~\ref{table:privacy_adult}). More specifically, the leaked information is about an order of magnitude lower than the obtained in the other methods and the accuracy of a random forest attacker is below the prior probability (0.67), while the other methods allow successful attacks of 0.8 -- 0.9 accuracy.
An explanation for this phenomenon is that some private information is leaked to the representations $Y$ of the PPVAE and the VFAE via their encoding density $p_{Y|(S,X,\theta)}$, which does not respect the Markov chain $S \leftrightarrow X \rightarrow Y$. 
\begin{table}[htbp]
 \centering
 \caption{Random forest attacker's accuracy and private information leaked on the Adult dataset. Parameter ranges of the PPVAE and VFAE: $\eta^{-1} \in [1,50]$ and $\delta \in [N_{\textnormal{batch}},1000N_{\textnormal{batch}}]$.}
 \begin{tabular}{c|c|c}
    \toprule
    Methods & Attacker's accuracy (S) & $I(S;Y)$ \\
    \midrule
    Ours & \textbf{0.60 -- 0.64} & \textbf{0.01 -- 0.08} \\
    PPVAE  & 0.79 -- 0.93 & 0.29 -- 0.63 \\
    VFAE & 0.81 -- 0.95 & 0.28 -- 0.44 \\
    \bottomrule
 \end{tabular}
 \label{table:privacy_adult}
\end{table}
\subsection{Fairness}
\label{subsec:results_fairness}
\begin{table*}[htbp!]
 \centering
 \caption{{Fairness metrics with different methods on Adult dataset. Displayed as: Logistic regression / Random forest. The best hyperparameters for the other methods have been selected (see Appendix \ref{app:experiments}}). These models and compared to our model with the hyperparameters that reaches the most similar performance.}
 \label{table:comparison_other_methods}
 \begin{tabular}{c|c|c|c|c|c}
    \toprule
    Methods & Accuracy (T) & Accuracy (S) & Discrimination & Error gap & EO gap \\
    \midrule
    LFR \cite{zemel2013learning} & 0.84 / 0.84 & 0.66 / \textbf{0.98} & 0.07 / \textbf{0.16} & 0.10 / 0.12 & \textbf{0.18} / 0.07 \\
    Ours ($\beta = 17.0$) & 0.82 / 0.80 & 0.66 / 0.62 & 0.08 / 0.08 & 0.12 / 0.13 & 0.09 / 0.09\\
    \midrule 
    FFVAE \cite[$\alpha=200$]{creager2019flexibly} &0.77 / 0.75 & 0.67 / 0.61 & 0.00 / 0.01 & 0.17 / 0.12 & 0.04 / 0.04 \\
    CFAIR \cite[$\lambda=1000$]{zhao2019conditional} & 0.77 / 0.71 & 0.69 / 0.65 & 0.02 / 0.01 & 0.17 / 0.12 & 0.02 / 0.04 \\
    Ours ($\beta = 1.96$) & 0.76 / 0.72 & 0.67 / 0.61 & 0.00 / 0.00 & 0.19 / 0.15 & 0.00 / 0.00\\
    \bottomrule
 \end{tabular}
\end{table*}

\begin{figure}[htbp]
	\centering
    \subfloat[Task accuracy. \label{fig:betas_vs_accuracy}]{{\includegraphics[width=0.23\textwidth]{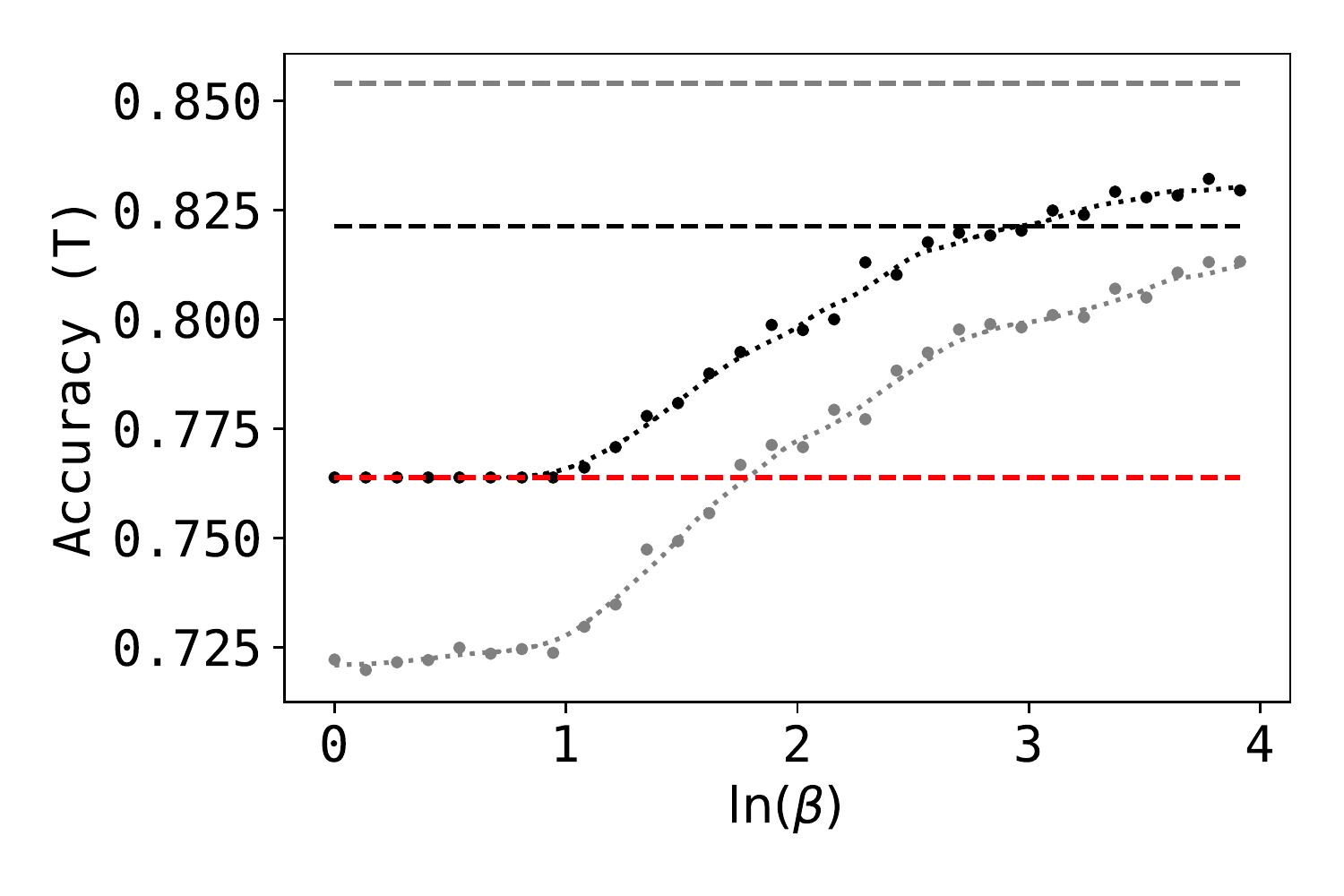}}}
    \subfloat[Discrimination.\label{fig:betas_vs_discrimination}]{{\includegraphics[width=0.23\textwidth]{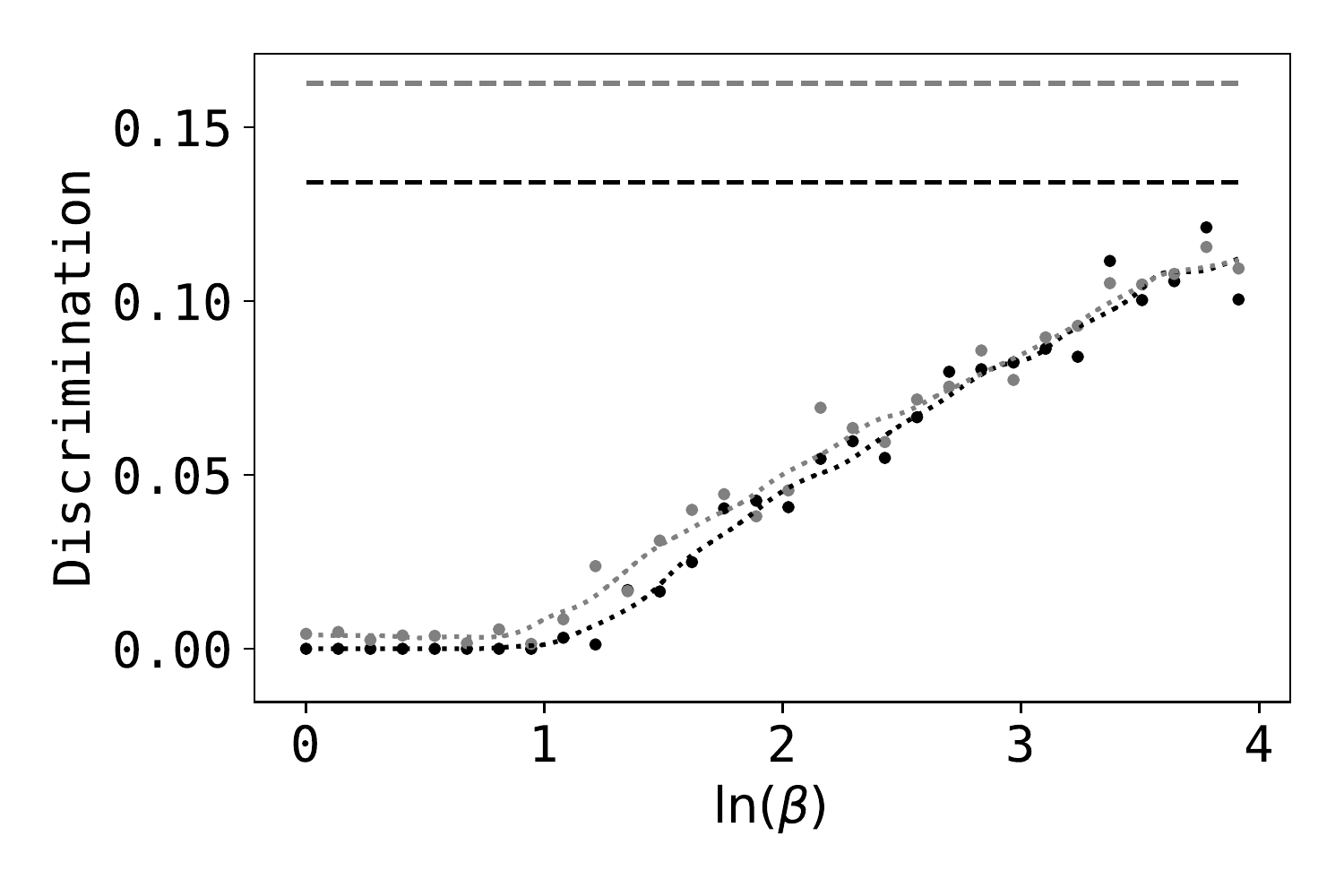}}}
    
    \subfloat[Equalized odds gap. \label{fig:betas_vs_eq_odds}]{{\includegraphics[width=0.23\textwidth]{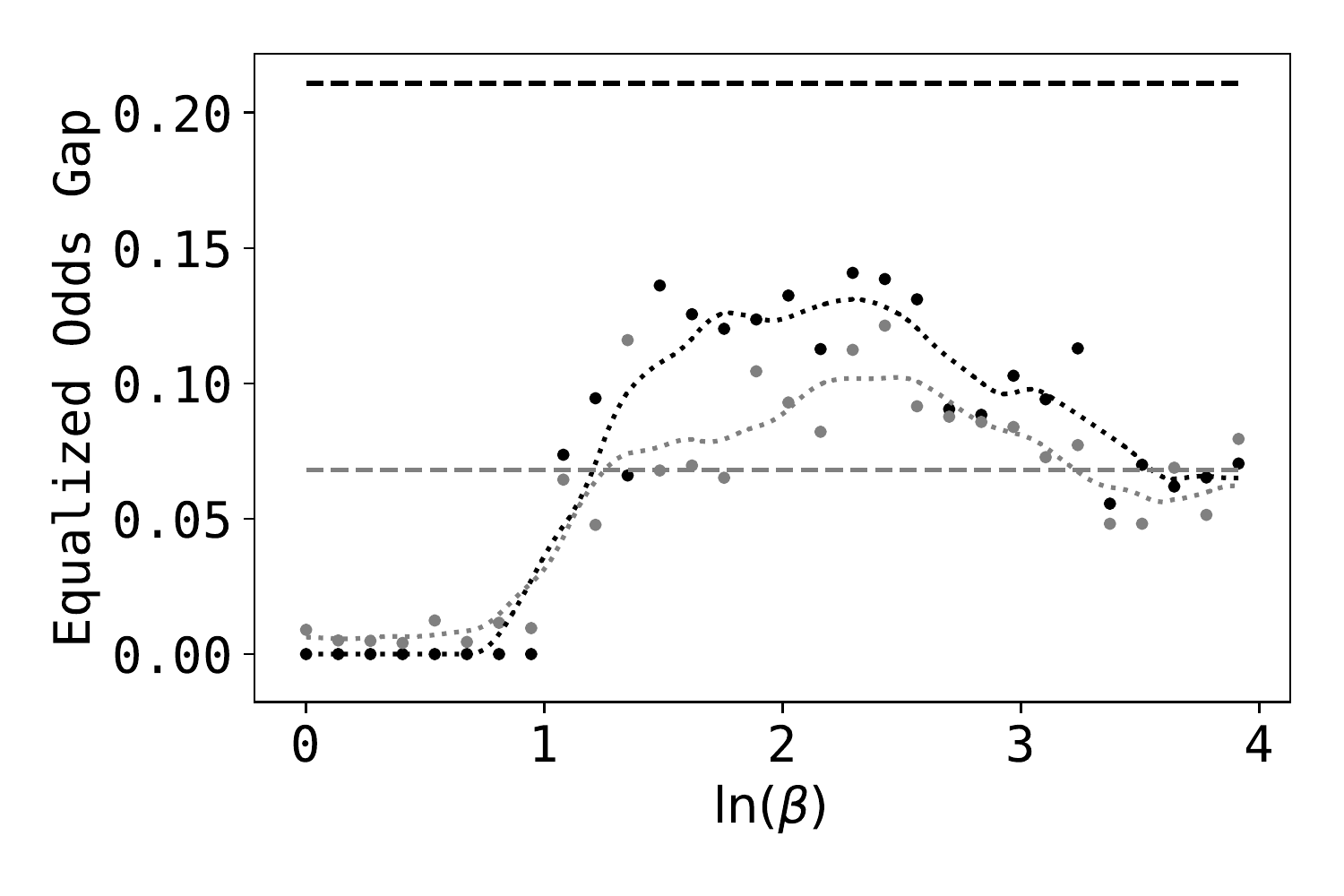}}}
    \subfloat[Error gap. \label{fig:betas_vs_err_gap}]{{\includegraphics[width=0.23\textwidth]{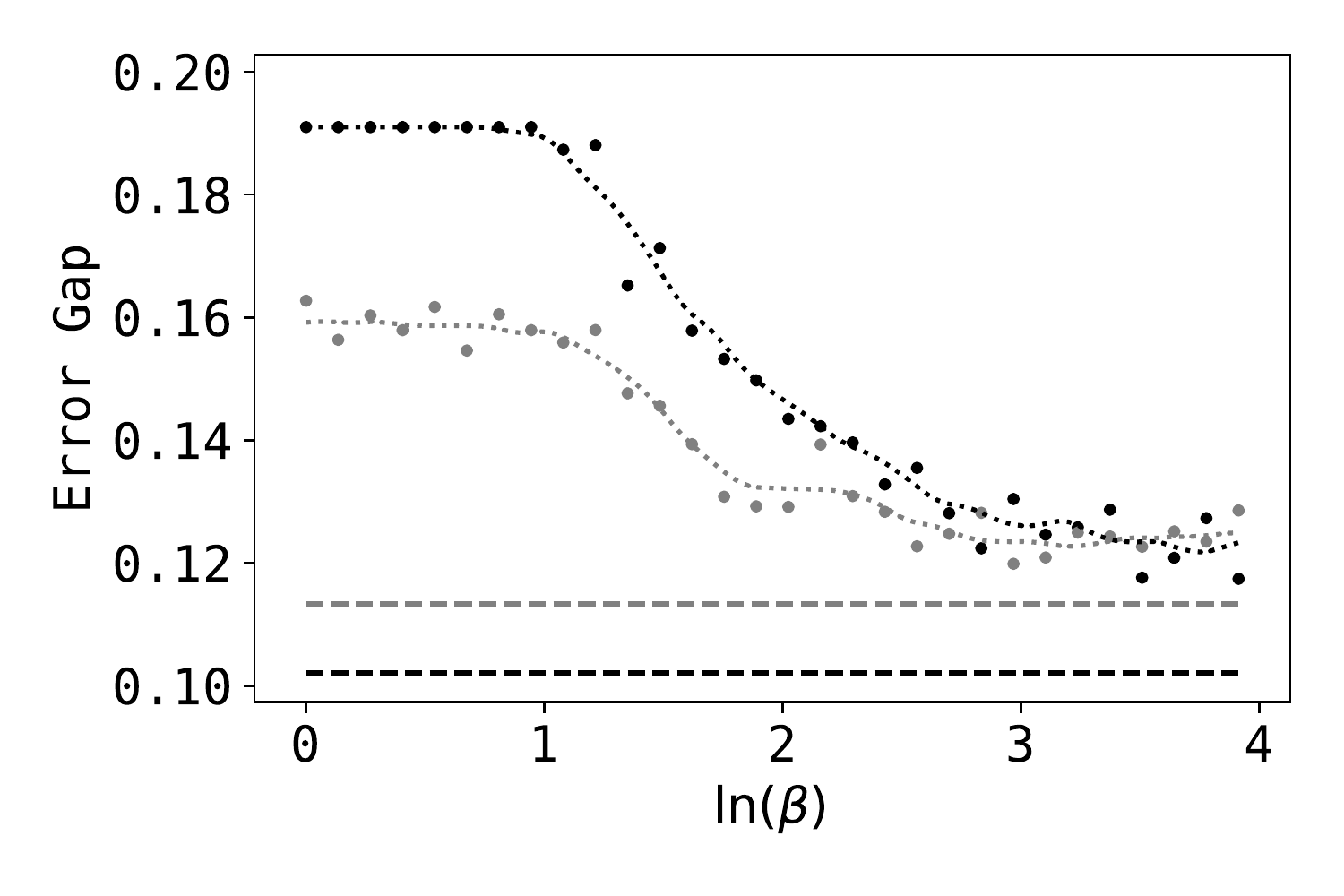}}}
    
    \subfloat[Sensitive data accuracy. \label{fig:betas_vs_accuracy_s}]{{\includegraphics[width=0.23\textwidth]{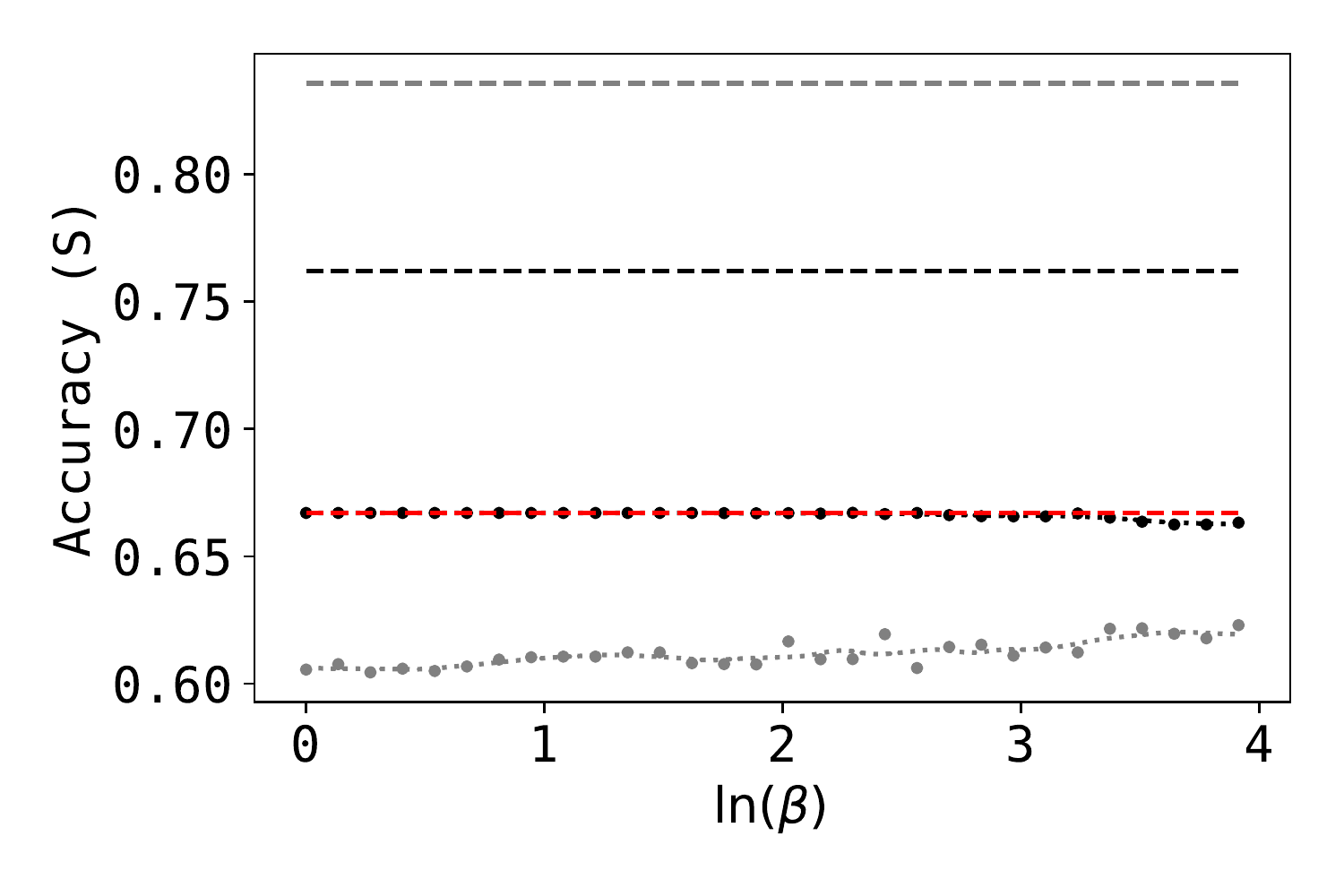}}} 
    \subfloat{{\includegraphics[width=0.17\textwidth]{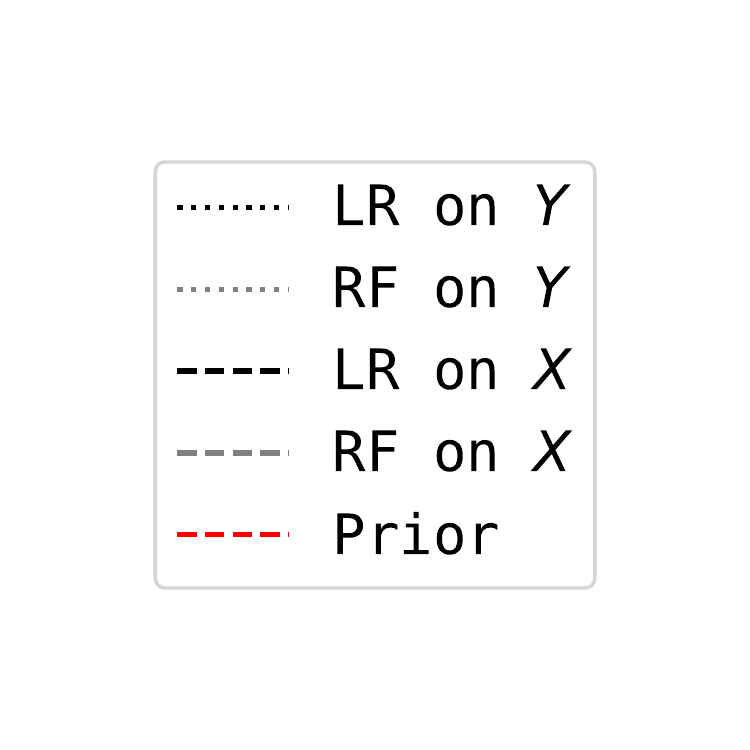}}}
    \caption{
    Behavior of different utility and group fairness indicators of a logistic regression (LR, in black) and a random forest (RF, in gray) of the fair representations (dots and dotted line) and the original data (dashed line) learned with $\beta \in [1,50]$ on the Adult dataset. The dashed line in red is the accuracy of a prior-based classifier.
    }
\end{figure}

The proposed variational approach is also able to control the trade-off between fair and accurate representations. We minimized~\eqref{eq:cfb_cost} for different values of $\beta \in [1,50]$, thus controlling the trade-off between the compression $I(X;Y)$ and the informativeness of the representations independent of the sensitive attributes $I(T;Y|S)$ (see Figures~\ref{fig:adult_ixy_vs_ity_given_s} and~\ref{fig:colored_mnist_ixy_vs_ity_given_s}). Thus, as hinted by Proposition~\ref{prop:equivalence_lagrangians_cfb}, the multiplier $\beta$ also controls the amount of 
private information leaked (see Figures~\ref{fig:adult_betas_vs_isy} and~\ref{fig:colored_mnist_betas_vs_isy}). 

Furthermore, in the Adult dataset, the Lagrange multiplier $\beta$ allows us to control the behavior of different utility and group fairness indicators (defined in Appendix~\ref{app:fairness_indicators}), namely the accuracy, the error gap, and the discrimination (or demographic parity gap). That is, the higher the value of $\beta$, the higher the accuracy and the discrimination, and the lower the error gap (see Figures~\ref{fig:betas_vs_accuracy}, \ref{fig:betas_vs_discrimination}, and~\ref{fig:betas_vs_err_gap}). The behavior of the discrimination is enforced by the minimization of $I(S;Y)$, as discussed in Remark~\ref{remark:cfb_and_cpf_encourage_demographic_parity} from Appendix~\ref{app:fairness_indicators}. However, there is no clear indication of the effect of $\beta$ on the accuracy of adversarial predictors on the sensitive data (which is still below the prior probability of the biased training dataset) and on the equalized odds (see Figures~\ref{fig:betas_vs_accuracy_s} and~\ref{fig:betas_vs_eq_odds}). The equalized odds are not optimized in this scenario since the resulting representation $Y$ is such that $I(S;Y;T) < 0$ as discussed in Remark~\ref{remark:cfb_and_cpf_equalized_odds} from Appendix~\ref{app:fairness_indicators}. An example where the equalized odds gap is minimized is presented in Appendix~\ref{app:experiments}.

The representation generated from the proposed method is more robust to non-linear adversaries (as shown in Table \ref{table:comparison_other_methods} by the smaller accuracy on $S$ and smaller discrimination for random forest adversaries) than the standard baseline from \cite[LFR]{zemel2013learning}. Compared with other state-of-the-art methods, either those employing variational inference \cite[FFVAE]{creager2019flexibly} or adversarial training \cite[CFAIR]{zhao2019conditional}, the proposed method for generating fair representations achieves similar results (see Table~\ref{table:comparison_other_methods}), albeit having an easier cost function and requiring to train less networks, respectively (see Appendix~\ref{app:related_work}). 

\section{Discussion}
\label{sec:discussion}

In this article, we studied the problem of mitigating private or sensitive information $S$ leakage into data-driven systems through the training data $X$. We formalized the trade-off between the relevant information for the system that is not shared by the private or sensitive attributes $S$ and the remaining information as a constrained optimization problem. When the task of the system is unknown, the problem is referred as learning private representations and the formalization is the conditional privacy funnel (CPF); and when it is kwnon the problem is referred as learning fair representations and the formalization is the conditional fairness bottleneck (CFB).

To solve these problems, we proposed a variational approach based on the Lagrangians of the CPF and the CFB. This approach leads to a simple structure highlighting the similarities between private and fair representation learning. Moreover, in practice, private and fair representations can be learned with little modification to the implementation of common algorithms such as the VAE \cite{kingma2013auto}, the $\smash{\beta}$-VAE \cite{higgins2017beta}, the VIB \cite{alemi2016deep}, or the nonlinear IB \cite{kolchinsky2019nonlinear}. Namely, modifying the decoder neural network so it receives both the representation $Y$ and the private or sensitive attributes $S$ as an input. Then, the learned representations can be fed to any algorithm of choice. For this reason, the efforts for reducing unfair decisions and privacy breaches will be small for many practitioners.

The proposed formulation has some limitations due to the non-convexity of the CPF and the CFB (see Appendix~\ref{app:convexity_problems}). Moreover, the proposed approach also faces limitations due to its variational nature. In Appendix~\ref{app:limitations}, we reflect on these limitations and propose future directions to overcome them.


\IEEEtriggeratref{55}
\bibliographystyle{IEEEtran}
\bibliography{references}

\newpage
\onecolumn
\appendices
\section{Motivation of mutual information as utility and privacy/fairness metric}
\label{app:why-mi}

The main reason to choose mutual information (and conditional mutual information) as the measure of utility and privacy and/or fairness is the tractability of this metric and the fact that it allowed us to (i) draw connections between the privacy and fairness problems and (ii) derive an algorithm that could be easily incorporated to current approaches to representation learning.

Other than that, even though there are some caveats of using mutual information as a measure of privacy (see \cite{issa2019operational}) or fairness, this metric has several operational meanings. Namely:

\begin{itemize}
    \item \textbf{Utility.} The conditional mutual information $I(X;Y|S)$ (or $I(T;Y|S)$) is a measure of the relevance of the representation $Y$ to explain $X$ (or $T$). This is in line with the information bottleneck \cite{tishby2000information, alemi2016deep, kolchinsky2019nonlinear} and other common representation learning algorithms \cite{hjelm2018learning}, \cite{makhdoumi2014information}. 
        
    An intuition would be that if we want to keep the average distortion smaller than a certain quantity, and we consider the distortion to be the log-loss, then we want that the (conditional) mutual information is greater than a, different, certain quantity (see e.g. \cite[Section III-B]{makhdoumi2014information}).
    
    \item \textbf{Privacy.} The mutual information $I(S;Y)$ is the average cost gain by an adversary with the self-information or log-loss cost function \cite[Lemma 1]{makhdoumi2014information}. This has also been used in other works such as \cite{chatzikokolakis2010statistical, zhu2005anonymity}. Moreover, an upper bound on mutual information limits both the Bayesian and minimax risks that can be achieved in any inference based on the data; see e.g. \cite[Chapter 15.3 on Fano's method]{MW}.
    
    \item \textbf{Fairness.} As explained in Remark~\ref{remark:fairness_indicators_representations} minimizing $I(S;Y)$ encourages demographic parity and minimizing $I(S;Y|T)$ encourages equalized odds.
\end{itemize}

%
\section{Equivalences of the Lagrangians}
In this section of the appendix, we show how minimizing the Lagrangians of the CPF and the CFB problems is equivalent to minimizing other Lagrangians. First, in \ref{app:lagrangians_pf_cpf_equivalence} we show that minimizing the Lagrangian of the CPF is equivalent to minimizing the Lagrangian of the PF, meaning that the conditional probability distributions $P_{Y|X}$ obtained using the Lagrangian of the CPF could have been obtained through the Lagrangian of the PF, too. Then, in \ref{app:lagrangians_variational_equivalence} we show that minimizing the CPF and CFB Lagrangians is equivalent to minimizing the Lagrangians that are used in the variational approach we propose in this paper.
\subsection{Equivalence of the Lagrangians of the privacy funnel and the CPF}
\label{app:lagrangians_pf_cpf_equivalence}
The privacy funnel is defined in a similar way to the CPF. It is an optimization problem that tries to design an encoding probability distribution $P_{Y|X}$ such that the representation $Y$ keeps a certain level $r'$ of information about the data of interest $X$, while minimizing the information it keeps about the private data $S$ \cite{makhdoumi2014information}. That is, 
\begin{equation*}
    \arg \inf_{P_{Y|X}} \lbrace I(S;Y) \rbrace \textnormal{ s.t. } I(X;Y) \geq r'.
\end{equation*}
Therefore, the Lagrangian of the privacy funnel problem is
\begin{equation*}
    \mathcal{L}_{\textnormal{PF}}(P_{Y|X}, \alpha) = I(S;Y) - \alpha I(X;Y),
\end{equation*}
where $\alpha \geq 0$ is the Lagrange multiplier of $\mathcal{L}_{\textnormal{PF}}(P_{Y|X}, \alpha)$. This multiplier controls the trade-off between the information the representations keep about the private and the original data. If $\alpha=1$, then $\mathcal{L}_{\textnormal{PF}}(P_{Y|X},1) = - I(X;Y|S)$, for which optimal values of the encoding distribution $P_{Y|X}$ can filter private information arbitrarily. If $\alpha > 1$ this problem is even more pronounced. If $\alpha = 0$, trivial encoding distributions like a degenerate distribution $P_{Y|X}$ with density $p_{Y|X} = \delta(Y)$ are minimizers of the Lagrangian. Therefore, in practice one might need to restrict $\alpha$ to the range $(0,1)$.
\begin{proposition} 
\label{prop:eq_pf_cpf}
Minimizing $\mathcal{L}_{\textnormal{CPF}}(P_{Y|X}, \lambda)$ is equivalent to minimizing $\mathcal{L}_{\textnormal{PF}}(P_{Y|X}, \alpha)$, where $\alpha = \lambda / (\lambda + 1)$.
\end{proposition}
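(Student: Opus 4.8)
The plan is to exhibit a single algebraic identity that relates $\mathcal{L}_{\textnormal{CPF}}$ and $\mathcal{L}_{\textnormal{PF}}$ up to a strictly positive multiplicative constant, so that the two Lagrangians have identical minimizers over $P_{Y|X}$.

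First I would use the Markov chain $S \leftrightarrow X \leftrightarrow Y$ of the CPF, which forces $I(S;Y|X) = 0$. Expanding $I(X,S;Y)$ with the chain rule for mutual information in two ways gives
\begin{equation*}
  I(X;Y) + I(S;Y|X) = I(X,S;Y) = I(S;Y) + I(X;Y|S),
\end{equation*}
so that $I(X;Y|S) = I(X;Y) - I(S;Y)$. This is the same identity that underlies Proposition~\ref{prop:equivalence_lagrangians_cpf}, so one could equivalently start from $\mathcal{J}_{\textnormal{CPF}}$ and the value $\gamma = \lambda+1$.

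Second, I would substitute this identity into $\mathcal{L}_{\textnormal{CPF}}(P_{Y|X},\lambda) = I(S;Y) - \lambda I(X;Y|S)$ to obtain
\begin{equation*}
  \mathcal{L}_{\textnormal{CPF}}(P_{Y|X},\lambda) = (1+\lambda)\,I(S;Y) - \lambda\,I(X;Y) = (1+\lambda)\Bigl( I(S;Y) - \tfrac{\lambda}{1+\lambda}\,I(X;Y) \Bigr) = (1+\lambda)\,\mathcal{L}_{\textnormal{PF}}\bigl(P_{Y|X}, \tfrac{\lambda}{\lambda+1}\bigr).
\end{equation*}
Since $\lambda \ge 0$, the factor $1+\lambda$ is a strictly positive constant that does not depend on $P_{Y|X}$, so $\arg\inf_{P_{Y|X}} \mathcal{L}_{\textnormal{CPF}}(P_{Y|X},\lambda) = \arg\inf_{P_{Y|X}} \mathcal{L}_{\textnormal{PF}}(P_{Y|X},\alpha)$ with $\alpha = \lambda/(\lambda+1)$, which is exactly the claim.

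I do not expect a genuine obstacle: the two things to be careful about are (i) invoking the Markov property in the right place to eliminate $I(S;Y|X)$, and (ii) observing that rescaling an objective by a positive constant preserves its set of minimizers, so the equivalence survives even though the omitted constant terms $\lambda r$ and $\alpha r'$ differ between the two Lagrangians. It is also worth noting that the map $\lambda \mapsto \lambda/(\lambda+1)$ sends $[0,\infty)$ onto $[0,1)$, which dovetails with the earlier remark that the PF multiplier $\alpha$ must be restricted to $(0,1)$ to avoid degenerate encoders or arbitrary leakage of private information.
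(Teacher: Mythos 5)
Your proof is correct and follows essentially the same route as the paper: rewrite $I(X;Y|S) = I(X;Y) - I(S;Y)$ using the Markov chain $S \leftrightarrow X \rightarrow Y$, collect terms into $(\lambda+1)\bigl(I(S;Y) - \tfrac{\lambda}{\lambda+1} I(X;Y)\bigr)$, and note that the positive factor $\lambda+1$ leaves the set of minimizers unchanged. Your explicit justification of the key identity via the two chain-rule expansions of $I(X,S;Y)$ is a nice touch the paper leaves implicit, but the argument is otherwise identical.
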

\begin{proof}
If we manipulate the expression of the CPF Lagrangian we can see how the minimizing $ \mathcal{L}_{\textnormal{CPF}}(P_{Y|X}, \lambda)$ is equivalent to minimizing $\mathcal{L}_{\textnormal{PF}}(P_{Y|X}, \alpha)$, where $\alpha = \lambda / (\lambda + 1)$. More specifically,

\begin{align*}
	\arg &\inf_{P_{Y|X} \in \mathcal{P}} \left \lbrace \mathcal{L}_{\textnormal{CPF}}(P_{Y|X}, \lambda) \right \rbrace \nonumber \\ 
	&= \arg \inf_{P_{Y|X} \in \mathcal{P}} \left \lbrace I(S;Y) - \lambda I(X;Y|S) \right \rbrace \\
	&=  \arg \inf_{P_{Y|X} \in \mathcal{P}} \left \lbrace I(S;Y) - \lambda \left( I(X;Y) - I(S;Y)\right) \right \rbrace \\
	&=  \arg \inf_{P_{Y|X} \in \mathcal{P}} \left \lbrace (\lambda + 1) I(S;Y) - \lambda I(X;Y) \right \rbrace \\&=  \arg \inf_{P_{Y|X} \in \mathcal{P}} \left\lbrace  (\lambda + 1) \left( I(S;Y) - \frac{\lambda}{\lambda + 1} I(X;Y) \right) \right \rbrace \\
	&= \arg \inf_{P_{Y|X} \in \mathcal{P}} \left \lbrace \mathcal{L}_{\textnormal{PF}}\left(P_{Y|X}, \frac{\lambda}{\lambda + 1}\right) \right \rbrace,
\end{align*}
where $\mathcal{P}$ is the set of probability distributions over $\mathcal{Y}$ such that if $P_{Y|X=x} \in \mathcal{P}$ for all $x \in \mathcal{X}$, then the Markov chain $S \leftrightarrow X \rightarrow Y$ holds.
\end{proof}
We note how the relationship $\alpha = \lambda / (\lambda + 1)$ maintains $\alpha \in [0,1)$ for $\lambda \geq 0$. This showcases how the CPF poses a more restrictive problem, in the sense that as long as $\lambda < \infty$ there are no solutions of the problem that filter private information arbitrarily.
\subsection{Equivalence of the Lagrangians used for the minimization}
\label{app:lagrangians_variational_equivalence}
\equivalencelagrangianscpf*
\begin{proof}
If we manipulate the expression of the CPF Lagrangian we can see how minimizing $ \mathcal{L}_{\textnormal{CPF}}(P_{Y|X}, \lambda)$ is equivalent to minimizing $\mathcal{J}_{\textnormal{CPF}}(P_{Y|X}, \gamma)$, where $\gamma = \lambda + 1$. More specifically, %
\begin{align*}
	\arg &\inf_{P_{Y|X} \in \mathcal{P}} \left \lbrace \mathcal{L}_{\textnormal{CPF}}(P_{Y|X}, \lambda) \right \rbrace  \nonumber \\ 
	&= \arg \inf_{P_{Y|X} \in \mathcal{P}} \left \lbrace I(S;Y) - \lambda I(X;Y|S) \right \rbrace \\
	&=  \arg \inf_{P_{Y|X} \in \mathcal{P}} \left \lbrace I(X;Y) - I(X;Y|S) - \lambda I(X;Y|S) \right \rbrace \\
	&=  \arg \inf_{P_{Y|X} \in \mathcal{P}} \left \lbrace I(X;Y) - (\lambda + 1) I(X;Y|S) \right \rbrace \\
	&= \arg \inf_{P_{Y|X} \in \mathcal{P}} \left \lbrace \mathcal{J}_{\textnormal{CPF}}(P_{Y|X}, \lambda + 1) \right \rbrace,
\end{align*}
where $\mathcal{P}$ is the set of probability distributions over $\mathcal{Y}$ such that if $P_{Y|X=x} \in \mathcal{P}$ for all $x \in \mathcal{X}$, then the Markov chain $S \leftrightarrow X \rightarrow Y$ holds.
\end{proof}
\equivalencelagrangianscfb*
\begin{proof}
If we manipulate the expression of the CFB Lagrangian we can see how  minimizing $ \mathcal{L}_{\textnormal{CFB}}(P_{Y|X}, \lambda)$ is equivalent to minimizing $\mathcal{J}_{\textnormal{CFB}}(P_{Y|X}, \beta)$, where $\beta = \lambda + 1$. More specifically, 
\begin{align*}
&\arg \inf_{P_{Y|X} \in \mathcal{P}} \left \lbrace \mathcal{L}_{\textnormal{CFB}}(P_{Y|X}, \lambda) \right \rbrace \nonumber \\
& \ \ \ = \arg \inf_{P_{Y|X} \in \mathcal{P}} \left \lbrace I(S;Y) + I(X;Y|S,T) - \lambda I(T;Y|S) \right \rbrace \\
	&\ \ \ = \arg \inf_{P_{Y|X} \in \mathcal{P}} \left \lbrace I(X;Y) - I(T;Y|S) - \lambda I(T;Y|S) \right \rbrace \\
	&\ \ \ = \arg \inf_{P_{Y|X} \in \mathcal{P}} \left \lbrace I(X;Y) - (\lambda + 1) I(T;Y|S) \right \rbrace \\
	&\ \ \ = \arg \inf_{P_{Y|X} \in \mathcal{P}} \left \lbrace \mathcal{J}_{\textnormal{CFB}}(P_{Y|X}, \lambda + 1) \right \rbrace,
\end{align*}
where $\mathcal{P}$ is the set of probability distributions over $\mathcal{Y}$ such that if $P_{Y|X=x} \in \mathcal{P}$ for all $x \in \mathcal{X}$, then the Markov chains $S \leftrightarrow X \rightarrow Y$ and $T \leftrightarrow X \rightarrow Y$ hold.
\end{proof}

\section{Extended Related Work}
\label{app:related_work}

\subsection{Privacy}
\label{subsec:related_work_privacy}
If the secret information $S$ is the identity of the samples or their membership to a certain group, the field of \emph{differential privacy} (DP) provides a theoretical framework for defining privacy and several mechanisms able to generate privacy-preserving queries about the data $X$ and explore such data, see, e.g., \cite{dwork2014algorithmic}. If, on the other hand, the secret information is arbitrary, variants of DP such as \cite[Bounded DP]{dwork2006calibrating} and \cite[Attribute DP]{kifer2011no} or the theoretical framework introduced in \cite{du2012privacy} are commonly adopted. The privacy funnel~\cite{makhdoumi2014information} is a special case of the latter, when the utility and the privacy are measured with the mutual information.

The original greedy algorithm to compute the PF \cite{makhdoumi2014information} assumes the data is discrete or categorical and do not scale. For this reason,
methods that attempt at learning non-parametric encoding densities $p_{Y|X}$ such as \cite{romanelli2019generating} and other
approaches that take advantage of the scalability of deep learning 
emerged. For instance, \cite{edwards2015censoring} and \cite{hamm2017minimax} learn the representations through adversarial learning but are limited to $S \in \lbrace 0,1 \rbrace$ and do not offer an information theoretic interpretation. Similar to us, in the the privacy preserving variational autoencoder (PPVAE) \cite{nan2020variational} and the unsupervised version of the variational fair autoencoder (VFAE) \cite{louizos2015variational} they learn such representations with variational inference.

At their core, the PPVAE and the unsupervised VFAE end up minimizing the cost functions
\begin{align}
    \mathcal{J}_{\textnormal{PPVAE}}(\theta,\phi,\eta) = \mathbb{E}_{p_{(S,X)|\theta}}[D_{\textnormal{KL}}(p_{Y|(S,X,\theta)}||q_{Y|\theta})] 
    - \eta^{-1} \mathbb{E}_{p_{(S,X,Y)|\theta}}[\log(q_{X|(S,Y,\phi)})]
    \label{eq:eq_ppvae}
\end{align}
and $\mathcal{J}_{\textnormal{VFAE}}(\theta,\phi,\delta) =  \mathcal{J}_{\textnormal{PPVAE}}(\theta,\phi,1) + \delta \mathcal{J}_{\textnormal{MMD}}(\theta,\phi)$, where $\mathcal{J}_{\textnormal{MMD}}(\theta,\phi)$ is a maximum-mean discrepancy term. 
Even though the resulting function to optimize is similar to ours, it is important to note that the encoding density in these works is $p_{Y|(S,X,\theta)}$, which does not respect the problem's Markov chain $S \leftrightarrow X \rightarrow Y$. Therefore, the optimization search space includes representations $Y$ that contain information about the private data $S$ that is not even contained in the original data $X$. Moreover, the private data $S$ is \emph{needed} to generate the representations $Y$, which is problematic since it might not be available during inference.
\subsection{Fairness}
\label{subsec:related_work_fairness}
The field of algorithmic fairness is mainly dominated by the notions of \emph{individual fairness}, where the sensitive data $S$ is the identity of the data samples, and \emph{group fairness}, where $S$ is a binary variable that represents the membership of the data samples to a certain group. There are several approaches that aim at producing classifiers that ensure either of these notions of fairness; 
e.g., discrimination-free naive Bayes \cite{calders2009building}, constrained logistic regression, hinge loss, and support vector machines \cite{zafar2015fairness}, or regularized logistic regression through the Wasserstein distance \cite{jiang2019wasserstein}. 

Other lines of work on algorithmic fairness are based on causal inference \cite{kilbertus2017avoiding, kusner2017counterfactual,nabi2018fair, zhang2018mitigating, wang2019equal, madras2019fairness} and 
data massaging \cite{kamiran2010classification}, where the values of the labels of the training data are changed so that the training data is fair.

The notion of fair representations, introduced by \cite{zemel2013learning}, boosted the advances on algorithmic fairness due to the expressiveness of deep learning. These advances are mainly dominated by adversarial learning \cite{zemel2013learning, edwards2015censoring, zhao2019conditional}, even though there are recent variational approaches, too \cite{louizos2015variational, creager2019flexibly}.

The main differences with the variational approach from \cite{creager2019flexibly} are our simple cost function (which does not require to train an additional adversary discriminator) and that we discard the information that is not necessary to draw inferences about $T$. They generate two representations, $Y_{\textnormal{sens}}$ and $Y_{\textnormal{non-sens}}$, that contain the information about the sensitive data and the original data, respectively, without taking into account the task at hand.
At inference time, the sensitive representations $Y_{\textnormal{sens}}$ are corrupted with noise or discarded, and thus the non-sensitive representations $Y_{\textnormal{non-sens}}$ from \cite{creager2019flexibly} serve a similar purpose to the representations $Y$ obtained with our approach.
Compared to the \emph{variational fair autoencoder} \cite{louizos2015variational}, our encoding density does not require the sensitive information $S$, which might not be available during inference, thus not breaking the Markov chain $S \leftrightarrow X \rightarrow Y$. 

\section{Modification of common algorithms to obtain private and/or fair representations}
\label{app:modification_common_algorithms}
In this section of the appendix, we discuss
the simple changes needed to common representation learning algorithms to implement our proposed variational approach. First, we show how common unsupervised learning algorithms can be modified to the variational approach to 
the CPF, thus generating private representations. Then, we show how common supervised learning algorithms can be modified to the variational approach to the CFB, thus generating fair representations. 
\paragraph{Common unsupervised learning algorithms} The cost function of the $\beta$-VAE \cite{higgins2017beta} and the VIB \cite{alemi2016deep} (when the target variable is the identity of the samples) is
\begin{align}
    \mathcal{F}_{\textnormal{uns}}(\theta,\phi,\eta) = \frac{1}{N} \sum_{i=1}^N D_{\textnormal{KL}} \left( p_{Y|X=x^{(i)},\theta} || q_{Y|\theta} \right) - \eta^{-1}  \mathbb{E}_{p_E} \left[ \log \left( q_{X|Y=f(x^{(i)},E), \phi} (x^{(i)})\right)\right],
    \label{eq:beta_vae_cost}
\end{align}
where $\eta$ is a parameter that controls the trade-off between the compression of the representations $Y$ and their ability to reconstruct the original data $X$. Similarly, the VAE \cite{kingma2013auto} cost function is $\mathcal{F}_{\textnormal{uns}}(\theta,\phi,1)$.
\paragraph{Comparison with the CPF} If we compare~\eqref{eq:beta_vae_cost} with the cost function of the CPF $\tilde{\mathcal{J}}_{\textnormal{CPF}}(\theta,\phi,\gamma)$, we observe that the only difference (provided that $\eta^{-1} = \gamma$) is the decoding density. In the CPF the decoding density of the original data $X$ depends both on the representation $Y$ and on the private attributes $S$, while in~\eqref{eq:beta_vae_cost} it only depends on the representation $Y$. Therefore, the cost function $\mathcal{F}_{\textnormal{uns}}(\theta,\phi,\eta)$ is recovered from the cost function of the CPF in the case that $q_{X|(S,Y,\phi)} = q_{X|(Y,\phi)}$. However, this is not desirable, since it means that the representation $Y$ contains all the information from the private attributes $S$ necessary to reconstruct $X$.
\paragraph{Modifications to obtain private representations} In these usupervised learning algorithms  \cite{kingma2013auto, higgins2017beta, alemi2016deep} the decoding (or generative) density is parametrized with neural networks, e.g., $q_{X|Y,\phi} = \textnormal{Cat}(X;\rho_{\textnormal{dec}}(Y;\phi))$ if $X$ is discrete and $q_{X|Y,\phi} = \mathcal{N}(X;\mu_{\textnormal{dec}}(Y;\phi),\sigma_{\textnormal{dec}}(Y;\phi)^2 I_{d_{\textnormal{dec}}})$ if $X$ is continuous, where $\rho_{\textnormal{dec}}, \mu_{\textnormal{dec}}$, and $\sigma_{\textnormal{dec}}$ are neural networks and $d_{\textnormal{dec}}$ is the dimension of $X$. In this work, the decoding density can also be parametrized with neural networks, e.g., $q_{X|Y,\phi} = \textnormal{Cat}(X;\rho'_{\textnormal{dec}}(S,Y;\phi))$ if $X$ is discrete and $q_{X|Y,\phi} = \mathcal{N}(X;\mu'_{\textnormal{dec}}(S,Y;\phi),\sigma'_{\textnormal{dec}}(S,Y;\phi)^2I_{d_{\textnormal{dec}}})$ if $X$ is continuous, where $\rho'_{\textnormal{dec}}, \mu'_{\textnormal{dec}}$, and $\sigma'_{\textnormal{dec}}$ are neural networks. Therefore, if the decoding density neural networks from \cite{kingma2013auto, higgins2017beta, alemi2016deep} are modified so that they take the private attributes $S$ as an input, then the resulting algorithm is the one proposed in this paper. 
\paragraph{Common supervised learning algorithms} The cost function of the VIB \cite{alemi2016deep} and the nonlinear IB \cite{kolchinsky2019nonlinear} is 
\begin{align}
    \mathcal{F}_{\textnormal{sup}}(\theta,\phi,\eta) = \frac{1}{N} \sum_{i=1}^N D_{\textnormal{KL}} \left( p_{Y|X=x^{(i)},\theta} || q_{Y|\theta} \right)  - \eta^{-1}  \mathbb{E}_{p_E} \left[ \log \left( q_{T|Y=f(x^{(i)},E), \phi} (t^{(i)})\right)\right],
    \label{eq:vib_cost}
\end{align}
where $\eta$ is a parameter that controls the trade-off between the compression of the representations $Y$ and their ability to draw inferences about the task $T$. 
\paragraph{Comparison with the CFB} Similarly to the comparison of the unsupervised learning algorithms and the CPF, we observe that~\eqref{eq:vib_cost} only differs with the cost function $\tilde{\mathcal{J}}_{\textnormal{CFB}}(\theta,\phi, \beta)$ in the decoding density, i.e., the cost function $\mathcal{F}_{\textnormal{sup}}(\theta,\phi,\eta)$ can be recovered from $\tilde{\mathcal{J}}_{\textnormal{CFB}}(\theta,\phi, \beta)$ by setting $q_{T|S,Y,\phi} = q_{T|Y,\phi}$. The inference density of the task $T$ only depends on the representation $Y$ in \cite{alemi2016deep, kolchinsky2019nonlinear}, while in our work it depends both on the representation $Y$ and the sensitive attributes $S$. Hence, in these works the representations
contains all the information from the sensitive attributes $S$ necessary to draw inferences about the task $T$. 
\paragraph{Modifications to obtain fair representations} The argument is analogous to the one for the modifications of unsupervised learning algorithms to obtain private representations. The only modification required in these supervised learning algorithms \cite{alemi2016deep, kolchinsky2019nonlinear} is to modify the decoding density neural networks to receive the sensitive attributes $S$ as an input as well as the representations $Y$.
\paragraph{Invariants of the algorithms} In all these works \cite{kingma2013auto, higgins2017beta, alemi2016deep, kolchinsky2019nonlinear} and ours, the first (or the compression) term is usually calculated assuming that the encoder density is parametrized with neural networks, e.g., $p_{Y|X,\theta} = \mathcal{N}(Y;\mu_{\textnormal{enc}}(X;\theta),\sigma_{\textnormal{enc}}(X;\theta)^2 I_d)$, which allows the representations to be constructed using the reparametrization trick, e.g., $Y = \mu_{\textnormal{enc}}(X;\theta) + \sigma_{\textnormal{enc}}(X;\theta) E$, where $E \sim \mathcal{N}(0,I_d)$, $d$ is the dimension of the representations, and $I_d$ is the $d$-dimensional identity matrix. Then, the marginal density of the representations is set so that the Kullback-Leibler divergence has either a closed expression, a simple way to estimate it, or a simple upper bound, e.g., $q_{Y|\theta} = \mathcal{N}(Y;0, I_d)$ or $q_{Y|\theta} = \frac{1}{N} \sum_{i=1}^N p_{Y|X=x^{(i)},\theta}$, where $x^{(i)}$ are the input data samples. Moreover, the loss function applied to the output of the decoding density and the optimization algorithm, e.g., stochastic gradient descent or Adam \cite{kingma2014adam}, can remain the same in these works and ours, too.
\begin{remark}
The aforementioned modifications can also be introduced in other algorithms with cost functions with additional terms to $\mathcal{F}_{\textnormal{uns}}$ and $\mathcal{F}_{\textnormal{sup}}$. For example, adding a maximum-mean discrepancy (MMD) term on the representation priors to avoid the information preference problem like in the InfoVAE \cite{zhao2019infovae}; adding an MMD term on the encoder densities to enforce privacy or fairness like in the VFAE \cite{louizos2015variational}; or adding a total correlation penalty to the representation's marginal to enforce disentangled representations like in the Factor-VAE, the $\beta$-TCVAE, or the FFVAE \cite{kim2018disentangling, chen2018isolating}, \cite{creager2019flexibly}.
\end{remark}

\section{Details of the experiments}
\label{app:experiments}

In this section of the appendix, we include an additional experiment on the COMPAS dataset \cite{dieterich2016compas} and describe the details of the experiments performed to validate the approach proposed in this paper. The code is at \url{https://github.com/burklight/VariationalPrivacyFairness}.

\subsection{Results on the COMPAS dataset}

\paragraph{COMPAS dataset.} The ProPublica COMPAS dataset \cite{dieterich2016compas}\footnote{Available 
in the Kaggle website.}  contains $6,172$ samples of different attributes of criminal defendants in order to classify if they will recidivate within two years or not. These attributes include \emph{gender}, \emph{age}, or \emph{race}. In both tasks, we followed the experimental set-up from \cite{zhao2019conditional} and considered $S$ to be a binary variable stating if the defendant is African American and $X$ to be the rest of attributes. For the fairness task, we considered $T$ to be the binary variable stating if the defendant recidivated or not. Since this dataset was not previously divided between training and test set, we randomly splitted the dataset with $70$\% of the samples ($4,320$) for training and the rest ($1,852$) for testing.

\begin{table*}[h!]
 \centering
 \caption{{Fairness metrics with different methods on the COMPAS dataset. Displayed as: Logistic regression / Random forest. The best hyperparameters for the other methods have been selected, as shown in the following section. These models are compared to our model with most similar results.}}
 \label{table:comparison_other_methods_compas}
 \begin{tabular}{c|c|c|c|c|c}
    \toprule
    Methods & Accuracy (T) & Accuracy (S) & Discrimination & Error gap & EO gap \\
    \midrule
    LFR \cite{zemel2013learning} & 0.64 / 0.67 & 0.51 / \textbf{0.99} & 0.14 / \textbf{0.21} & 0.07 / 0.03 & 0.11 / \textbf{0.19} \\
    Ours ($\beta = 34.8$) & 0.63 / 0.58 & 0.59 / 0.52 & 0.16 / 0.10 & 0.06 / 0.04 & 0.13 / 0.08\\
    \midrule 
    FFVAE \cite[$\alpha=200$]{creager2019flexibly} & 0.54 / 0.51 & 0.50 / 0.50 & 0.00 / 0.01 & 0.12 / 0.05 & 0.00 / 0.04 \\
    CFAIR \cite[$\lambda=10$]{zhao2019conditional}\footnotemark  & - & - & 0.04 & 0.00 & 0.02 \\
    Ours ($\beta = 7.76$) & 0.54 / 0.52 & 0.52 / 0.49  & 0.00 / 0.02  & 0.14 / 0.04  & 0.00 / 0.03 \\
    \bottomrule
 \end{tabular}
\end{table*}
\footnotetext{Results extracted from the original paper's Figure 2. The fairness metrics were computed on the output of the discriminator network, not on an independent logistic regression or random forest.}

Similarly to the previous experiments, the proposed approach controls the trade-off between private and informative representations and between fair and accurate representations. In Figure \ref{fig:compas_trade_off_information} we see how the trade-off between the compression level $I(X;Y)$ and the informativeness of the representations independent of the private data $I(X;Y|S)$ and between the compression level $I(X;Y)$ and the predictability of the representations without  the sensitive data $I(X;Y|S)$ is controlled by the private and the fair representations, respectively. Moreover, we can also see how the amount of information the representations keep about the private or the sensitive data is commanded by the Lagrange multipliers $\gamma$ and $\beta$.

Compared with other variational approaches to the PF \cite[PPVAE]{nan2020variational} and \cite[VFAE]{louizos2015variational}, as happened with the Adult dataset, the proposed approach controlled better the information the representations $Y$ contained about the sensitive attribute $S$. In particular, the PPVAE contained between 0.63 and 1 bit of information about $S$ for $\eta^{-1} \in [1,50]$, and the VFAE contained between 0.68 and 1 bit for $\delta \in [N_{\textnormal{batch}},1000 N_{\textnormal{batch}}]$. Note that 1 bit is the maximum information that $Y$ can contain about $S$, since $H(S) = 1$ in this scenario.  With respect to membership attacks, our method was a slightly weaker to linear attackers than the PPVAE and the VFAE, allowing an accuracy in the range $[0.54,0.60]$, compared to their respective ranges of $[0.46, 0.62]$ and $[0.45,0.59]$. That is, considering the prior probability of $0.51$, the best parameter of our method conceded the attacker a $3\%$ accuracy that the other methods did not allow. However, once the attacker employed more sophisticated attacks, such as a random forest, our method maintained a range of $[0.52,0.56]$, while the PPVAE and the VFAE allowed almost a perfect recovery of the group membership, with respective accuracy ranges between $0.96$ and $0.99$ and between $0.96$ and $1.0$, as was suggested by the amount of bits of information their representations contained about $S$. As before, this behavior can be explained due to the Markov chain violation of the encoder densities $p_{Y|S,X,\theta}$ of these approaches.

Furthermore, the Lagrange multiplier $\beta$ also allows us to control the behavior of the accuracy, the error gap, and the discrimination for the COMPAS dataset (Figures \ref{fig:compas_betas_vs_accuracy}, \ref{fig:compas_betas_vs_err_gap}, and \ref{fig:compas_betas_vs_discrimination}). Moreover, in this scenario, as shown in Figures \ref{fig:compas_betas_vs_eq_odds} and \ref{fig:compas_betas_vs_accuracy_s}, an increase of $\beta$ also increased the equalized odds level and the accuracy on $S$ of adversarial classifiers (even though they remained below their values obtained with the original data $X$ for all the $\beta$ tested). These results on the equalized odds, even though not generalizable since we have the counter-example of the Adult dataset, indicate that in some situations this quanitty can be controlled with our approach. More specifically, we believe this happens when we can guarantee that $I(S;Y;T)$ is non-negative as explained in Remark \ref{remark:cfb_and_cpf_equalized_odds}.

Finally, we also observe, as for the Adult dataset, how the proposed method's representations are more robust against non-linear adversaries than the representations obtained with the baseline from \cite[LFR]{zemel2013learning} (see Table~\ref{table:comparison_other_methods_compas}). Moreover, the method performs similarly as state-of-the-art methods based on adversarial learning \cite[CFAIR]{zhao2019conditional} and variational inference \cite[FFVAE]{creager2019flexibly}, as shown in Table~\ref{table:comparison_other_methods_compas}.

\begin{figure}[ht]
    \centering
    \subfloat[\label{fig:compas_ixy_vs_minus_hx_given_sy}]{{\includegraphics[width=0.33\textwidth]{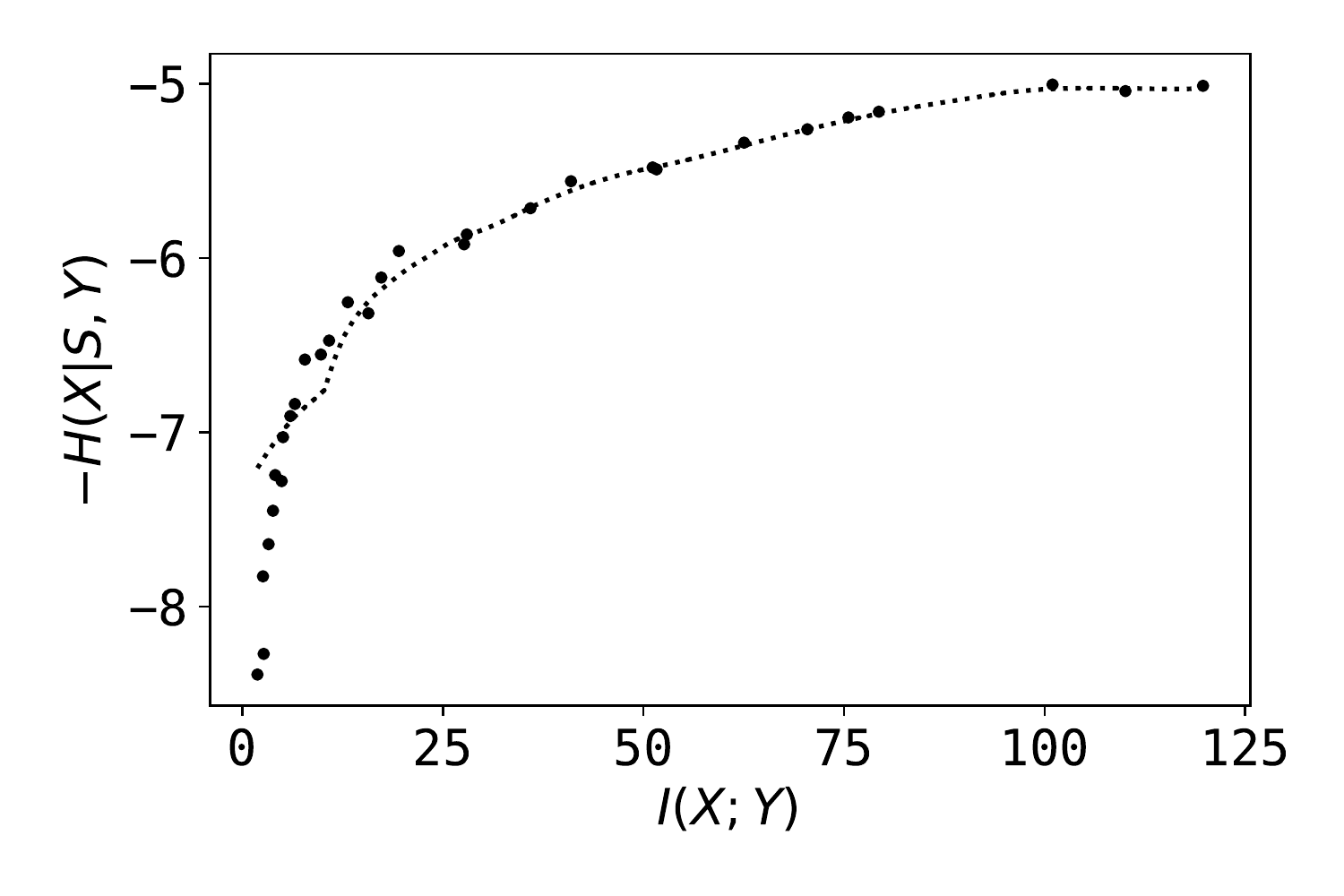}}}  
    \subfloat[\label{fig:compas_gammas_vs_isy}]{{\includegraphics[width=0.33\textwidth]{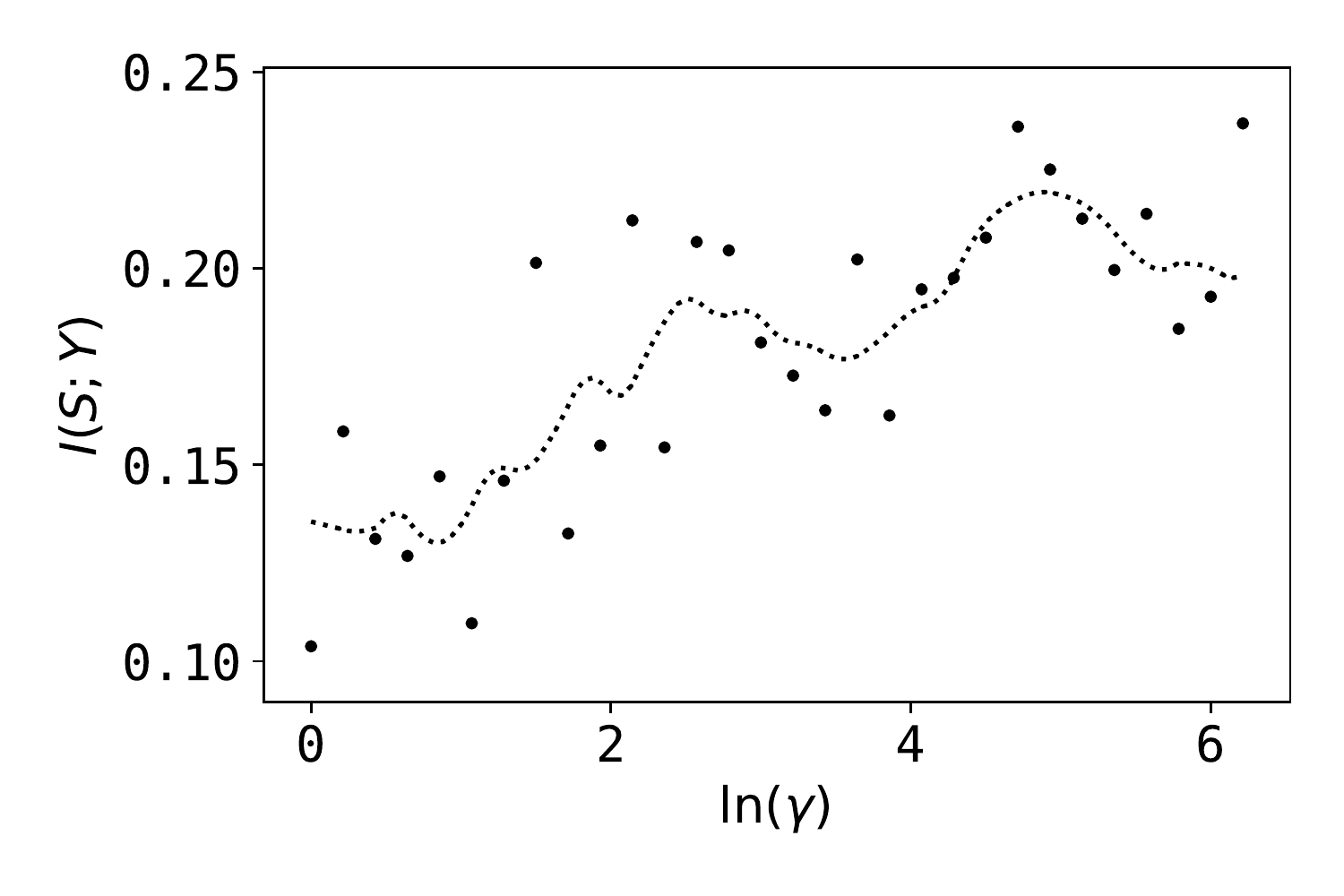}}}
    
    \subfloat[\label{fig:compas_betas_vs_isy}]{{\includegraphics[width=0.33\textwidth]{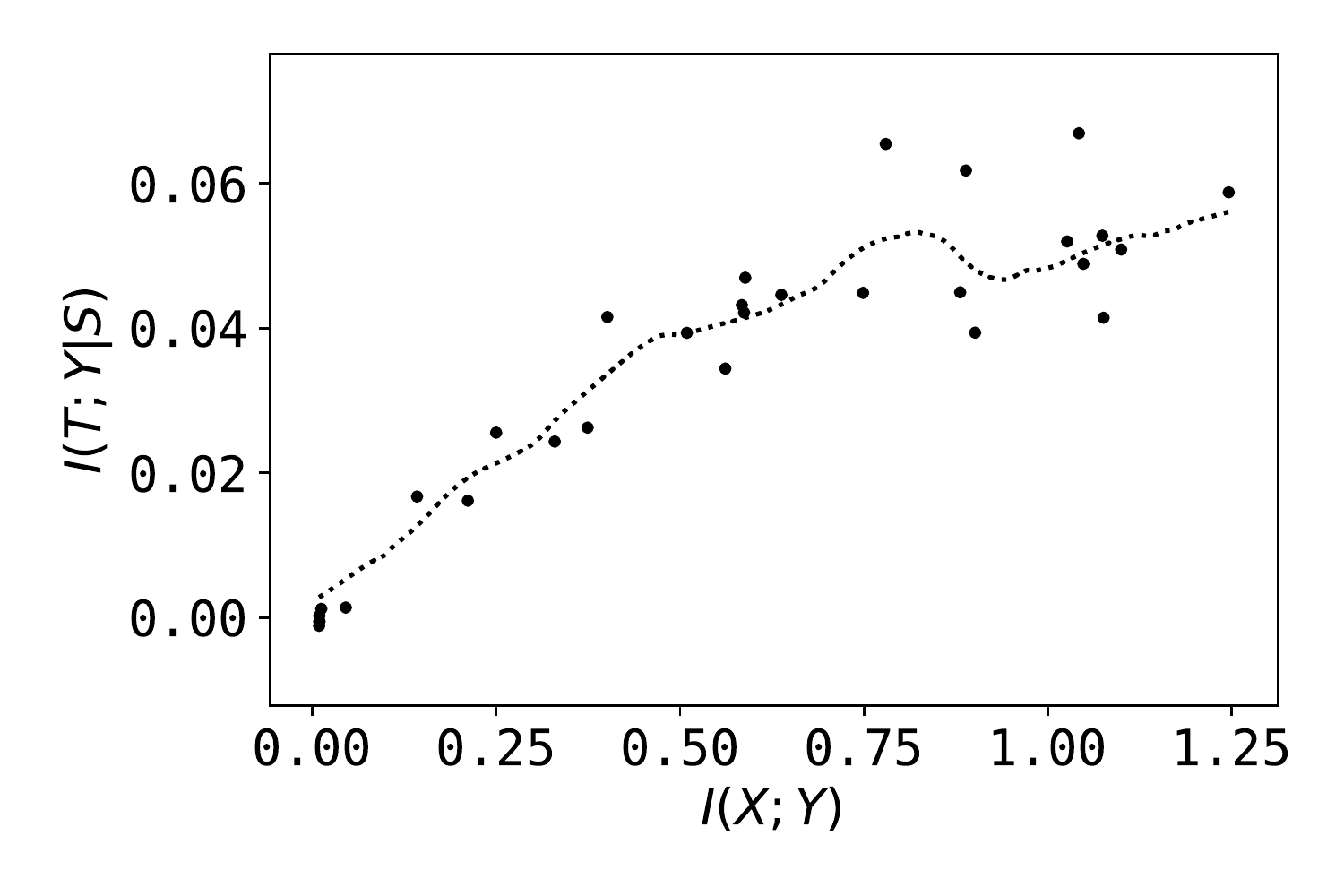}}}
    \subfloat[\label{fig:compas_mnist_betas_vs_isy}]{{\includegraphics[width=0.33\textwidth]{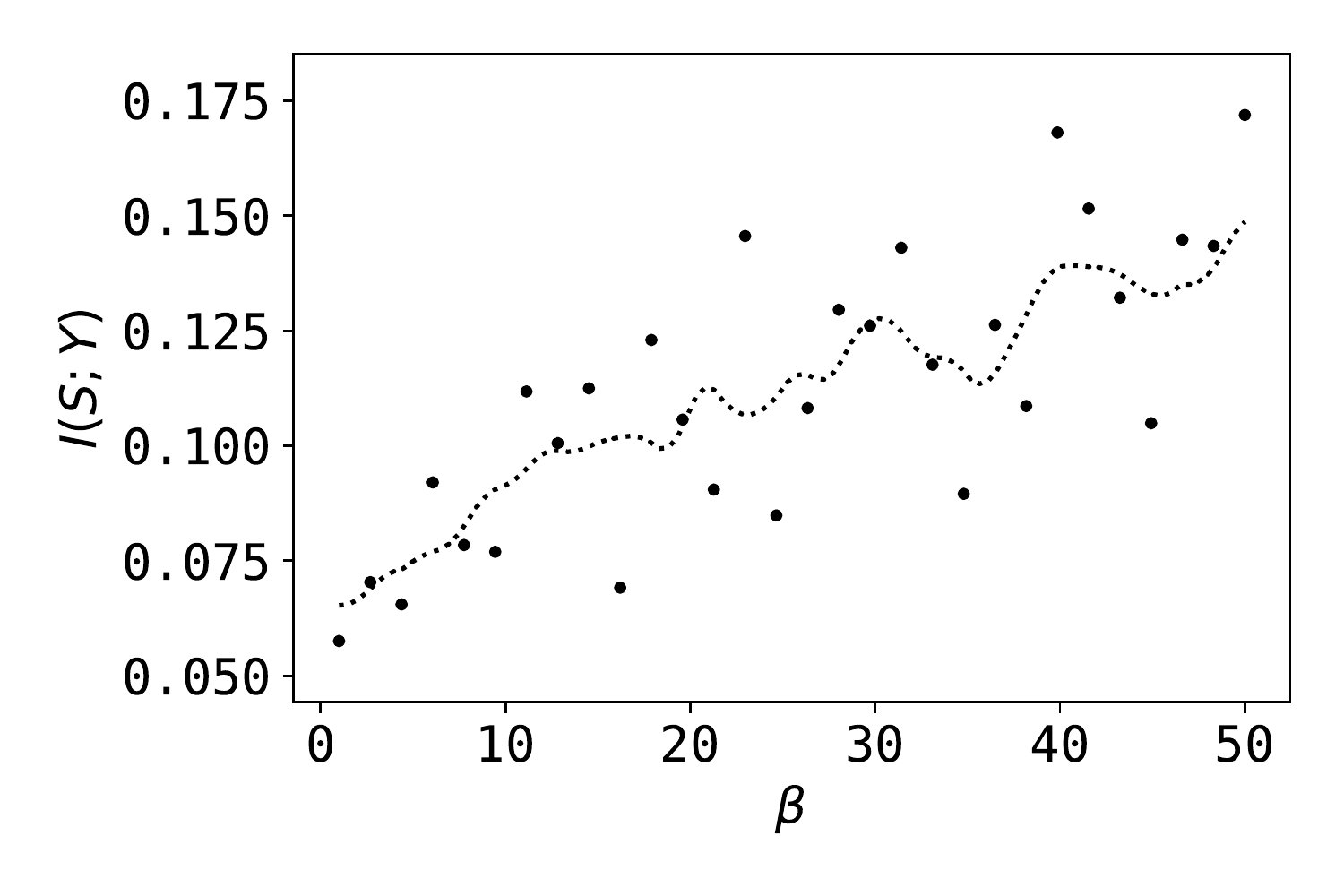}}}
    \caption{Trade-off between (a) the private representations compression $I(X;Y)$ and the non-private information retained $I(X;Y|S)$ and (b) the Lagrange multiplier $\gamma \in [1,500]$ and the private information $I(S;Y)$ kept by the representations. Since $I(X;Y|S) = H(X|S) - H(X|S,Y)$ and $H(X|S)$ does not depend on $Y$, the reported quantity is $-H(X|S,Y)$. Moreover, trade-off between (c) the fair representations compression $I(X;Y)$ and the non-sensitive information retained about the task $I(T;Y|S)$ and (d) the Lagrange multiplier $\beta \in [1,50]$ and the sensitive information kept by the representations. All quantities are obtained for the COMPAS dataset. The dots are the computed empirical values and the lines are the moving average of the 1D linear interpolations of such points.}
    \label{fig:compas_trade_off_information}
\end{figure}

\begin{figure*}[ht]
	\centering
    \subfloat[\label{fig:compas_betas_vs_accuracy}]{{\includegraphics[width=0.32\textwidth]{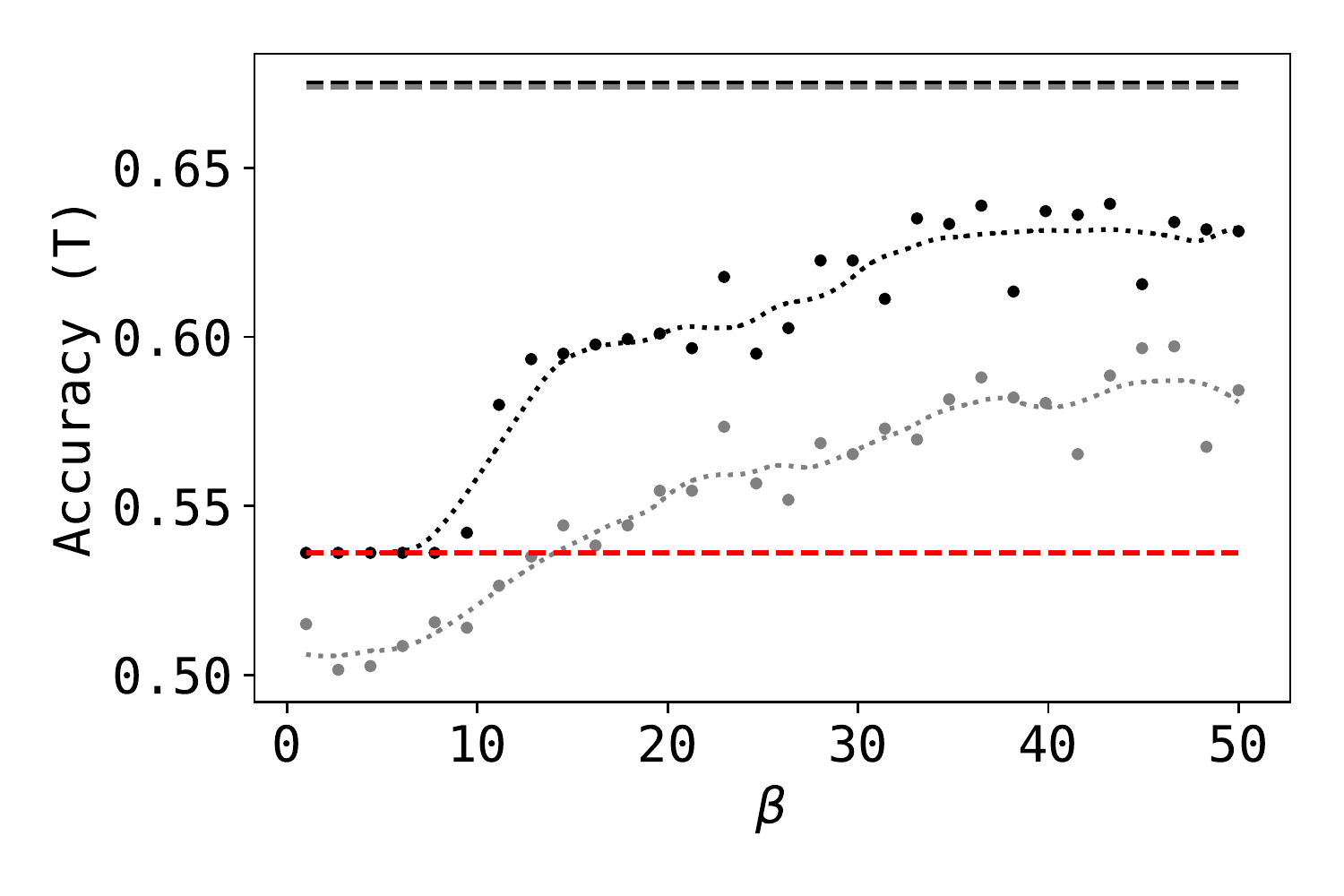}}}
    \subfloat[\label{fig:compas_betas_vs_discrimination}]{{\includegraphics[width=0.32\textwidth]{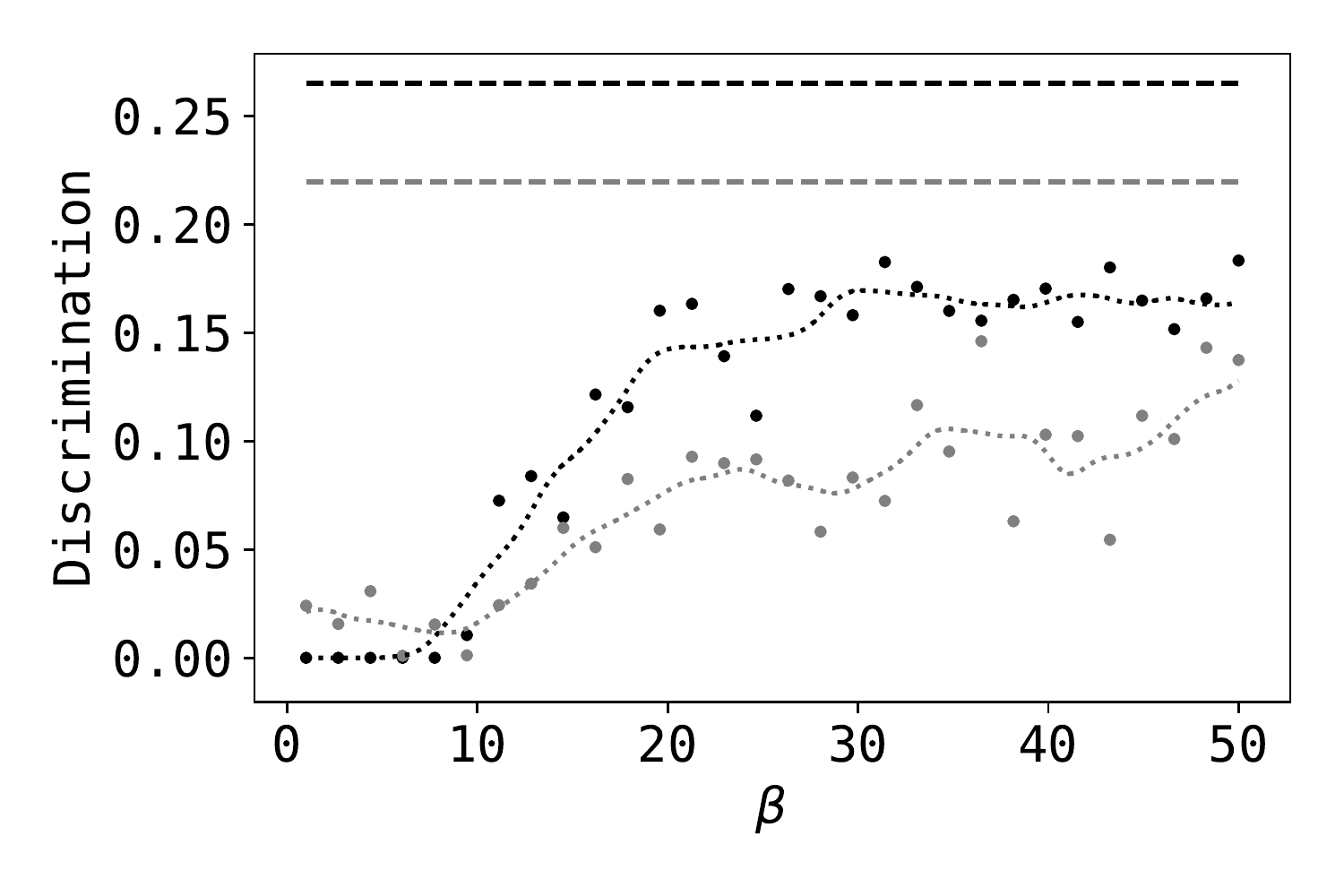}}}
    \subfloat[\label{fig:compas_betas_vs_eq_odds}]{{\includegraphics[width=0.32\textwidth]{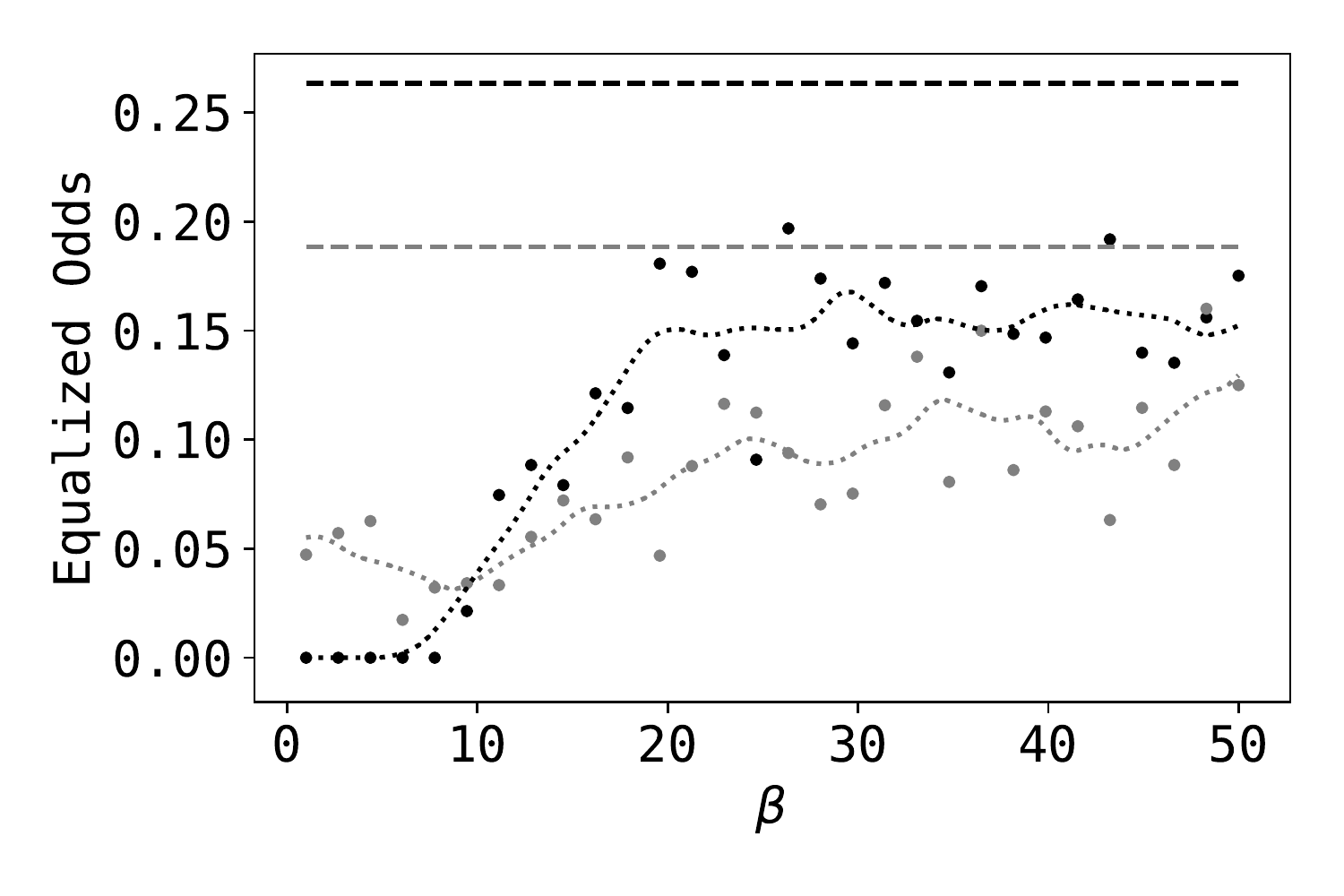}}}
    
    \subfloat[\label{fig:compas_betas_vs_err_gap}]{{\includegraphics[width=0.32\textwidth]{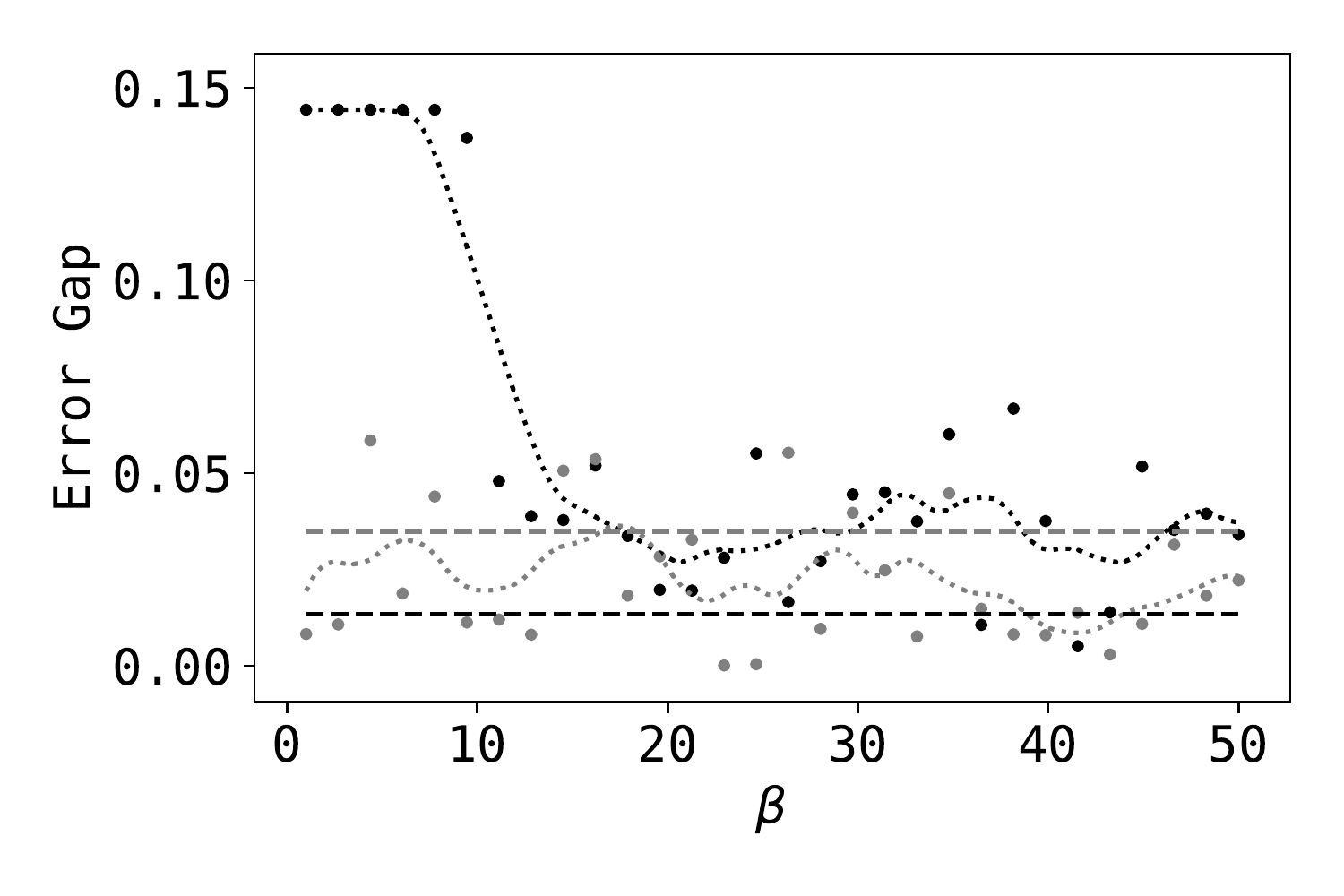}}}
    \subfloat[\label{fig:compas_betas_vs_accuracy_s}]{{\includegraphics[width=0.32\textwidth]{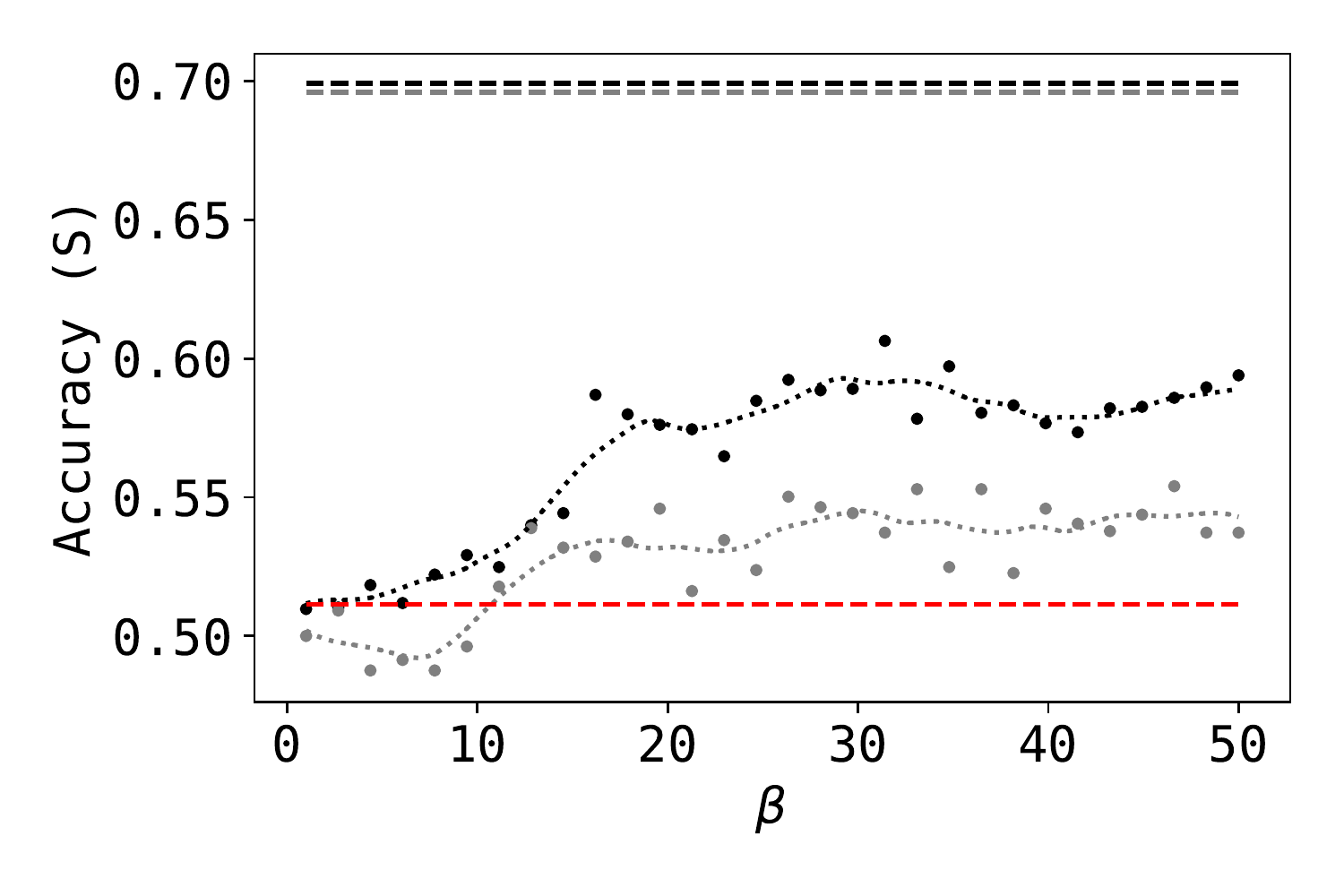}}}
    \subfloat{{\includegraphics[width=0.17\textwidth]{Figures/adult/legend.pdf}}}
    \caption{
    Behavior of (a) the accuracy on $T$, (b) the discrimination, (c) the equalized odds gap, (d) the error gap, and (e) the accuracy on $S$ of a logistic regression (LR, in black) and a random forest (RF, in gray) of the fair representations (dots and dotted line) and the original data (dashed line) learned with $\beta \in [1,50]$ on the COMPAS dataset. The dashed line in red is the accuracy of a prior-based classifier.}
\end{figure*}

\subsection{Experimental details}

\paragraph{Encoders} In all the experiments performed, we modeled the encoding density as an isotropic Gaussian distribution, i.e., $p_{Y|X,\theta} = \mathcal{N}(Y;\mu_{\textnormal{enc}}(X;\theta),\sigma_{\theta}^2 I_2)$, so that $Y = \mu_{\textnormal{enc}}(X;\theta) + \sigma_\theta E$, where $E \sim \mathcal{N}(0, I_2)$, $\mu_{\textnormal{enc}}$ is a neural network, $\sigma_\theta$ is also optimized via gradient descent but is not calculated with $X$ as an input, and where the representations have $2$ dimensions. The neural networks in each experiment were: 
\begin{itemize}
    \item For the Adult dataset, $\mu_{\textnormal{enc}}$ was a multi-layer perceptron with a single hidden layer with 100 units and ReLU activations.
    \item For the Colored MNIST dataset, $\mu_{\textnormal{enc}}$ was the convolutional neural network \texttt{CNN-enc-1} for both the privacy and fairness experiments, and the convolutional neural network \texttt{CNN-enc-2} for the example from Figure \ref{fig:example_cpf}. Both architectures are described in Table \ref{table:cnn_architectures}.
    \item For the COMPAS dataset, $\mu_{\textnormal{enc}}$ was a multi-layer perceptron with a single hidden layer with 100 units and ReLU activations.
\end{itemize}
Moreover, the marginal density of the representations was modeled as an isotropic Gaussian of unit variance and zero mean; i.e., $q_{Y|\theta} = \mathcal{N}(Y;0,I_2)$.

\begin{table*}[ht]
  \caption{Convolutional neural network architectures employed for the Colored MNIST dataset. The network modules are the following: \texttt{Conv2D(cin,cout,ksize,stride,pin,pout)} and \texttt{ConvTrans2D(cin,cout,ksize,stride,pin,pout)} represent, respectively, a 2D convolution and transposed convolution, where \texttt{cin} is the number of input channels, \texttt{cout} is the number of output channels, \texttt{ksize} is the size of the filters, \texttt{stride} is the stride of the convolution, \texttt{pin} is the input padding of the convolution, and \texttt{pout} is the ouptut padding of the convolution; \texttt{MaxPool2D(ksize,stride,pin)} represents a max-pooling layer, where the variables mean the same than for the convolutional layers; \texttt{Linear(nu)} represents a linear layer, where \texttt{nu} are the output units; and \texttt{BatchNorm}, \texttt{ReLU6}, \texttt{Flatten}, \texttt{Unflatten}, and \texttt{Sigmoid} represent a batch normalization, ReLU6, flatten, unflatten and Sigmoid layers, respectively.}
\label{table:cnn_architectures}
  \centering
  \begin{tabular}{l|p{145mm}}
    \toprule
    Name & Architecture \\
    \midrule
    \texttt{CNN-enc-1} & \texttt{Conv2D(3,5,5,2,1,0)~- BatchNorm~- RelU6~- Conv2D(5,50,5,2,0,0)~- BatchNorm~- RelU6~- Flatten~- Linear(100)~- BatchNorm~- ReLU~- Linear(2)} \\
     \texttt{CNN-enc-2} & \texttt{Conv2D(3,5,5,0,2,0)~- BatchNorm~- RelU6~- Conv2D(3,5,5,0,2,0)~- BatchNorm~- RelU6~- Conv2D(3,5,5,0,2,0)}  \\ 
     \texttt{CNN-dec-1} & \texttt{Linear(100)~- BatchNorm~- ReLU6~- Linear(1250)~- Unflatten~- BatchNorm~- ReLU~- ConvTrans2D(50,5,5,2,0,0)~- BatchNorm~- ReLU~- ConvTrans2D(5,3,5,2,1,1)~- Sigmoid} \\
     \texttt{CNN-dec-2} & \texttt{Conv2D(3,5,5,0,2)~- BatchNorm~- RelU6~- Conv2D(5,50,5,0,2,0)~- BatchNorm~- RelU6~- Conv2D(5,50,5,0,2,0)~- BatchNorm~- RelU6~- Conv2D(5,50,5,0,2,0)~- Sigmoid} \\
     \texttt{CNN-mine} & \texttt{Conv2D(3,5,5,1,1,0)~- MaxPool2D(5,2,2)~- RelU6~- Conv2D(5,50,5,1,0,0)~- MaxPool2D(5,2,2)~- RelU6~- Flatten~- Linear(100)~- ReLU6~- Linear(50)~- ReLU6~- Linear(1)}\\
    \bottomrule
  \end{tabular}
\end{table*}

\paragraph{Decoders} In all the experiments performed for the fairness problem, the target task variable $T$ was binary. Hence, we modeled the inference density with a Bernoulli distribution\footnote{Note that the Bernoulli distribution is a categorical distribution with two possible outcomes.}; i.e., $q_{T|S,X,\phi} = \textnormal{Bernoulli}(T;\rho_{\textnormal{dec}}(S,Y;\phi))$, where $\rho_{dec}$ is a neural network with a Sigmoid activation function in the output. In the privacy problem, if $X$ was a collection of random variables $(X_1, X_2, ..., X_C)$, the generative density was modeled as the product of $C$ categorical and isotropic Gaussians, depending if the variables were discrete or continuous. That is, $q_{X|(S,Y,\phi)} = \prod_{j=1}^C \textnormal{Cat}(X_j;\rho_{\textnormal{dec},j}(S,Y;\phi))
^{\mathbb{I}[X_j = \textnormal{Discrete}]}$ $\mathcal{N}(X_j;\mu_{\textnormal{dec},j}(S,Y;\phi), \sigma_{\phi,j}^2)^{\mathbb{I}[X_j = \textnormal{Continuous}]}$, where the continuous random variables are $1$-dimensional. In practice, the densities were parametrized with a neural network with $K = \sum_{j=1}^C K_j^{\mathbb{I}[X_j = \textnormal{Discrete}]} 1^{\mathbb{I}[X_j = \textnormal{Continuous}]}$ output neurons, where $K_j$ are the number of classes of the categorical variable $X_j$ and each group of output neurons defines each multiplying density; either as the logits of the $K_j$ classes or the parameter (mean) of the Gaussian (the variances were also optimized via gradient descent but were not calculated with $S$ nor $Y$ as an input). If $X$ was an image, the generative density was modeled as a product of $3C$ Bernoulli densities, where $C$ is the number of pixels of the image and the $3$ comes from the RGB channels. The neural networks in each experiments were:
\begin{itemize}
    \item For the Adult dataset, the decoding neural network was a multi-layer perceptron with a single hidden layer with 100 units and ReLU activations. For the fairness task, the output was 1-dimensional with a Sigmoid activation function. For the privacy task, the output was $121$-dimensional. The input of the network was a concatenation of $Y$ and $S$.
    \item For the Colored MNIST dataset and the fairness task, the decoding neural network was also a multi-layer perceptron with a single hidden layer with 100 units, ReLU activations, and a 1-dimensional output with a Sigmoid activation function. For the privacy task, the decoding neural network was the \texttt{CNN-dec-1} for the normal experiments and the \texttt{CNN-dec-2} for the example of Figure \ref{fig:example_cpf}. The input linear layers took as an input a concatenation of $Y$ and $S$ and in the convolutional layers $S$ was introduced as a bias.
    \item For the COMPAS dataset, the decoding neural network also was a multi-layer perceptron with a single hidden layer with 100 units and ReLU activations. For the fairness task, the output was 1-dimensional with a Sigmoid activation function. For the privacy task, the output was $19$-dimensional. The input of the network was a concatenation of $Y$ and $S$.
\end{itemize}
\paragraph{Hyperparameters and random seed} The hyperparameters employed in the experiments to train the encoder and decoder networks are displayed in Table \ref{table:hyperparameters}, and the optimization algorithm used was Adam \cite{kingma2014adam}. All random seeds were set to 2020. 

\paragraph{Hardware and software} All experiments were run with PyTorch \cite{paszke2019pytorch}, NumPy \cite{harris2020array}, and scikit-learn \cite{scikit-learn} on a Nvidia Tesla P100 PCIE GPU of 16Gb of RAM.

\begin{table*}[h!]
  \caption{Hyperparameters employed to optimize the encoder and decoder networks of the experiments.}
\label{table:hyperparameters}
  \centering
  \begin{tabular}{l|c c c c}
    \toprule
    Dataset (task) & Epochs & Learning rate & Lagrange multiplier ($\beta$ or $\gamma$) & Batch size \\
    \midrule
    Adult (fairness) & 150 & $10^{-3}$ & 1-50 (logarithmically spaced) & 1024 \\
    Adult (privacy) & 150 & $10^{-3}$ & 1-50 (logarithmically spaced) & 1024 \\
    Colored MNIST (fairness) & 250 & $10^{-3}$ & 1-50 (logarithmically spaced) & 1024 \\
    Colored MNIST (privacy) & 500 & $10^{-3}$ & 1-50 (logarithmically spaced) & 1024 \\
    Colored MNIST (example) & 500 & $10^{-3}$ & 1 & 2048 \\
    COMPAS (fairness) & 150 & $10^{-4}$ & 1-50 (linearly spaced) & 64 \\
    COMPAS (privacy) & 250 & $10^{-4}$ & 1-500 (logarithmically spaced) & 64\\
    \bottomrule
  \end{tabular}
\end{table*}

\begin{table*}[h!]
  \caption{Hyperparameters employed to optimize the MINE networks of the experiments.}
\label{table:hyperparameters_mine}
  \centering
  \begin{tabular}{l|c c c}
    \toprule
    Dataset (task) & Iterations & Learning rate & Batch size\\
    \midrule
    Adult (fairness) & $5\cdot10^4$ & $10^{-3}$ & 2048 \\
    Adult (privacy) & $5\cdot10^4$ & $10^{-3}$ & 2048 \\
    Colored MNIST (fairness) & $5\cdot10^4$ & $10^{-3}$ & 2048 \\
    Colored MNIST (privacy) & $5\cdot10^4$ & $10^{-3}$ & 2048 \\
    COMPAS (fairness) & $5\cdot10^4$ & $10^{-3}$ & 463 \\
    COMPAS (privacy) & $5\cdot10^4$ & $10^{-3}$ & 463 \\
    \bottomrule
  \end{tabular}
\end{table*}

\begin{table*}[ht!]
 \centering
 \caption{{Fairness metrics with different values of $\alpha$ for the FFVAE. Displayed as: Logistic regression / Random forest.}}
 \label{table:ffvae_alpha_performance}
 \begin{tabular}{c|c|c|c|c|c|c}
    \toprule
    Dataset & $\alpha$ & Accuracy (T) & Accuracy (S) & Discrimination & Error gap & EO gap \\
    \midrule
    Adult & 1 & 0.76 / 0.74 & 0.66 / 0.64 & 0.07 / 0.10 & 0.18 / 0.16 & 0.10 / 0.12 \\ 
    Adult & 100 & 0.77 / 0.75 & 0.67 / 0.61 & 0.00 / 0.01 & 0.17 / 0.11 & 0.07 / 0.02 \\ 
    Adult & 200 & 0.77 / 0.75 & 0.67 / 0.61 & 0.00 / 0.01 & 0.17 / 0.12 & 0.04 / 0.04 \\ 
    Adult & 300 & 0.78 / 0.75 & 0.67 / 0.62 & 0.01 / 0.01 & 0.16 / 0.12 & 0.06 / 0.03 \\ 
    Adult & 400 & 0.78 / 0.75 & 0.67 / 0.61  & 0.01 / 0.01 & 0.16 / 0.12 & 0.05 / 0.05 \\
    \midrule
    COMPAS & 1 & 0.54 / 0.50 & 0.50 / 0.50 & 0.00 / 0.03 & 0.12 / 0.04 & 0.00 / 0.05 \\ 
    COMPAS & 100 & 0.54 / 0.49 & 0.52 / 0.50 & 0.00 / 0.02 & 0.12 / 0.05 & 0.00 / 0.05 \\ 
    COMPAS & 200 & 0.54 / 0.51 & 0.50 / 0.50 & 0.00 / 0.01 & 0.12 / 0.05 & 0.00 / 0.04 \\ 
    COMPAS & 300 & 0.54 / 0.52 & 0.52 / 0.49 & 0.00 / 0.05 & 0.12 / 0.01 & 0.00 / 0.06 \\ 
    COMPAS & 400 & 0.54 / 0.50 & 0.52 / 0.50  & 0.00 / 0.03 & 0.12 / 0.05 & 0.00 / 0.06 \\
    \bottomrule
 \end{tabular}
\end{table*}

\begin{table*}[ht!]
 \centering
 \caption{{Fairness metrics with different values of $\lambda$ for the CFAIR. Displayed as: Logistic regression / Random forest.}}
 \label{table:cfair_lambda_performance}
 \begin{tabular}{c|c|c|c|c|c|c}
    \toprule
    Dataset & $\lambda$ & Accuracy (T) & Accuracy (S) & Discrimination & Error gap & EO gap \\
    \midrule
    Adult & 0.1 & 0.78 / 0.74 & 0.68 / 0.67 & 0.03 / 0.03 & 0.14 / 0.12  & 0.01 / 0.00 \\
    Adult & 1 & 0.77 / 0.71 & 0.68 / 0.63 & 0.03 / 0.04 & 0.17 / 0.09 & 0.17 / 0.11 \\ 
    Adult & 10 & 0.77 / 0.70 & 0.67 / 0.64 & 0.00 / 0.03 & 0.17 / 0.10 & 0.04 / 0.04 \\ 
    Adult & 100 & 0.77 / 0.71 & 0.67 / 0.63 & 0.05 / 0.03 & 0.17 / 0.09 & 0.22 / 0.08 \\ 
    Adult & 1000 & 0.77 / 0.71 & 0.69 / 0.65 & 0.02 / 0.01 & 0.17 / 0.12 & 0.02 / 0.04 \\
    \bottomrule
 \end{tabular}
\end{table*}

\paragraph{Preprocessing} The input data $X$ for the Adult and the COMPAS dataset was normalized to have 0 mean and unit variance. The input data for the Colored MNIST dataset was scaled to the range $[0,1]$.
\paragraph{Information measures} The mutual information $I(X;Y)$ and the conditional entropy $I(X;Y|S)$ and $I(T;Y|S)$ were calculated with the bounds from~\eqref{eq:bound_ixy}, \eqref{eq:bound_ixy_s}, and~\eqref{eq:bound_iyt_s}, respectively. Since $H(X|S)$ was not directly obtainable, $-H(X|S,Y)$ was calculated and displayed instead. The mutual information $I(S;Y)$ was calculated using the mutual information neural estimator (MINE) with a moving average bias corrector with a an exponential rate of 0.1 \cite{belghazi2018mutual}; the resulting information was averaged over the last 100 iterations. The neural networks employed were a 2-hidden layer multi-layer perceptron with 100 ReLU6 activation functions for all the datasets and tasks, except from the example on the Colored MNIST dataset, where the \texttt{CNN-mine} from Table \ref{table:cnn_architectures} was used. In all tasks the input was a concatenation of $S$ and $Y$, except from the example on the Colored MNIST dataset, where in all convolutional layers the private data $S$ was added as a bias and in all linear layers $S$ was concatenated to the input. The hyperparameters used to train the networks are displayed in Table \ref{table:hyperparameters_mine}.

\paragraph{Group fairness and utility indicators} The accuracy on $T$ and $S$, the discrimination, and the error and equalized odds gaps were calculated using both the input data $X$ and the generated representations $Y$. They were calculated with a Logistic regression (LR) classifier and a random forest (RF) classifier with the default settings from \texttt{scikit-learn} \cite{scikit-learn}. The prior displayed on the accuracy on $T$ and $S$ figures is the accuracy of a classifier that only infers the majority class of $T$ and $S$, respectively, from the training dataset.

\paragraph{Comparison with PPVAE \cite{nan2020variational}} We implemented the algorithm as described in the original paper. We used the same neural network architecture (i.e., multi-layer perceptron with a single hidden layer with 100 units and ReLU activations) and training hyperparameters than in our method in order to provide a fair comparison. The original method was not prepared to handle other data that was not continuous, so we expanded the method assuming categorical, and Bernoulli output distributions for the non-continuous data. We studied 30 linearly equiespaced values of the hyperparameter $\eta^{-1}$ from~\eqref{eq:eq_ppvae} ranging from 1 to 50. 

\paragraph{Comparison with VFAE \cite{louizos2015variational}} We implemented the algorithm as described in the original paper. We used the same neural network architecture (i.e., multi-layer perceptron with a single hidden layer with 100 units and ReLU activations) and training hyperparameters than in our method in order to provide a fair comparison.  We studied 30 linearly equiespaced values of the hyperparameter $\delta$ multiplying the MMD term ranging from $N_{\textnormal{batch}}$ to $N_{\textnormal{batch}} 1000$. We calculated the MMD using random kitchen sinks \cite{rahimi2009weighted} with $D = 500$ and $\gamma = 1$ as in the original paper.

\paragraph{Comparison with LFR \cite{zemel2013learning}} We implemented the algorithm as described in the original paper. We used 10 prototypes ($K=10$) and adjusted the hyperparemeters $A_x$, $A_y$, and $A_z$ so that the L-BFGS found a feasible solution in $150,000$ iterations with a tolerance of $10^{-5}$. We observed that different values of the hyperparameters (as noted by \cite{zemel2013learning}) produced almost equivalent results, so we only report here the results for $A_x = 10
^{-4}$, $A_y = 0.1$, and $A_z = 500$.

\paragraph{Comparison with FFVAE \cite{creager2019flexibly}} We implemented the algorithm as described in the original paper. We used the same neural network architecture (i.e., multi-layer perceptron with a single hidden layer with 100 units and ReLU activations) and training hyperparameters than in our method in order to provide with a fair comparison. Since the sensitive variable has dimension 1, there was no need for an adversary discriminator and therefore the $\gamma$ parameter was not explored. We performed experiments with different values of $\alpha$ ranging from $1$ to $400$ as in the original paper and we observed that the value of the hyperparameter did not modify much the results, as shown in Table \ref{table:ffvae_alpha_performance}. We selected $\alpha = 200$ for the comparisons since the results for that $\alpha$ were the ones with the better trade-off between accuracy and fairness.

\paragraph{Comparison with CFAIR \cite{zhao2019conditional}} We implemented the algorithm as described in the original paper. For the experiments in the Adult dataset, we used the same neural network architecture (i.e., multi-layer perceptron with a single hidden layer with 100 units and ReLU activations) than in our method  for the encoder, decoder, and the adversarial decoders, which are equipped with a gradient reversal layer \cite{ganin2016domain}. Also, we employed the same training hyperparameters to ensure a fair comparison between the two methods. We performed experiments with different values of $\lambda$ ranging from $0.1$ to $1000$ as in the original paper. However, for the experiments in the COMPAS dataset, we could not obtain good results with our architecture and hyperparameters nor could we replicate their results with their architecture and hyperparameters. Hence, for this dataset, we report here the results displayed in the original paper.

\section{Group fairness and utility indicators}
\label{app:fairness_indicators}

In this section of the appendix, we define and put into perspective a series of metrics, employed in this article, that indicate the predicting and group fairness quality of a classifier.

\paragraph{Notation} Let $X \in \mathcal{X}$, $S \in \mathcal{S}$, and $T \in \mathcal{T}$ be random variables denoting the input data, the sensitive data, and the target task data, respectively. Let also $\mathcal{X} \subseteq \mathbb{R}^{d_{X}}$, $\mathcal{S} = \lbrace 0, 1 \rbrace$, and $\mathcal{T} = \lbrace 0, 1 \rbrace$. Let $w: \mathcal{X} \rightarrow \mathcal{T}$ be a classifier; that is, $w \in \mathcal{W}$ is a function that receives as an input an instance of the input data $x \in \mathcal{X}$ and outputs an inference about the target task value $t \in \mathcal{T}$ for that input data. Let us also consider the setting where we have a dataset that contains $N$ samples of the random variables, i.e., $D = \lbrace (x^{(i)}, s^{(i)}, t^{(i)}) \rbrace_{i=1}^N$. Finally, let $\hat{P}$ denote the empirical probability distribution on the dataset $D$, $\hat{P}_{S=\sigma}$ the empirical probability distribution on the subset of the dataset $D$ where $s^{(i)} = \sigma$, i.e., $\lbrace (x, s, t) \in D : s = \sigma \rbrace$, and $\hat{P}_{(S=\sigma, T=\tau)}$ the empirical probability distribution on the subset of the dataset $D$ where $s^{(i)} = \sigma$ and $t^{(i)} = \tau$, i.e., $\lbrace (x, s, t) \in D : s = \sigma \textnormal{ and } t = \tau \rbrace$.

A common metric to evaluate the performance (utility) of a classifier $w$ on a dataset is its \emph{accuracy}, which measures the fraction of correct classifications of $w$ on such a dataset.
\begin{definition} The \emph{accuracy} of a classifier $w$ on a dataset $D$ is
\begin{equation}
    \textnormal{Accuracy}(w,D) = \hat{P}(w(X) = T).
\end{equation}
\end{definition}
An ideally fair classifier $w$ would maintain \emph{demographic parity} (or \emph{statistical parity}) and \emph{accuracy parity}, which, respectively, mean that $w(X) \perp S$ (or, equivalently if $w$ is deterministic, that $\hat{P}_{S=0}(w(X) = 1) = \hat{P}_{S=1}(w(X) = 1)$) and that $\hat{P}_{S=0}(w(X) \not = T) = \hat{P}_{S=1}(w(X) \not = T)$ \cite{zhao2019conditional}. In other words, if a classifier has demographic parity, it means that it gives a positive outcome with equal rate to the members of $S=0$ and $S=1$. However, demographic parity might damage the desired utility of the classifier \cite{hardt2016equality}, \cite[Corollary 3.3]{zhao2019inherent}. Accuracy parity, on the contrary, allows the existance of perfect classifiers \cite{zhao2019conditional}. The metrics that assess the deviation of a classifier from demographic and accuracy parities are the \emph{discrimination} or \emph{demographic parity gap} \cite{zemel2013learning, zhao2019conditional} and the \emph{error gap} \cite{zhao2019conditional}.
\begin{definition} The \emph{discrimination} or \emph{demographic parity gap} of a classifier $w$ to the sensitive variable $S$ on a dataset $D$ is
\begin{align}
    \textnormal{Discrimin}\textnormal{ation}(w,D) = \nonumber \left| \hat{P}_{S=0}(w(X) = 1) - \hat{P}_{S=1}(w(X) = 1)\right|.
\end{align}
\end{definition}
\begin{definition} The \emph{error gap} of a classifier $w$ with respect to the sensitive variable $S$ on a dataset $D$ is
\begin{align}
    \textnormal{Error gap}(w,D) = \nonumber \left| \hat{P}_{S=0}(w(X) \not = T) - \hat{P}_{S=1}(w(X) \not = T)\right|.
\end{align}
\end{definition}
Another advanced notion of fairness is that of \emph{equalized odds} or \emph{positive rate parity}, which means that $\hat{P}_{(S=0,T=\tau)}(w(X)=1) = \hat{P}_{(S=1,T=\tau)}(w(X)=1)$, for all $\tau \in \lbrace 0, 1 \rbrace$ or, equivalently, that $w(X)~\perp~S~|~T$ \cite{hardt2016equality}. This notion of fairness requires that the true positive and false positive rates of the groups $S=0$ and $S=1$ are equal. The metric that assesses the deviation of a classifier from equalized odds is the \emph{equalized odds gap} \cite{zhao2019conditional}.
\begin{definition} The \emph{equalized odds gap} of a classifier $w$ with respect to the sensitive variable $S$ on a dataset $D$ is
\begin{align*}
    \textnormal{Equalized odds gap}(w,D) = 
    \max_{\tau \in \lbrace 0, 1 \rbrace} \left| \hat{P}_{(S=0,T=\tau)}(w(X) = 1) - \hat{P}_{(S=1,T=\tau)}(w(X) = 1) \right|.
\end{align*}
\end{definition}
\begin{remark}
\label{remark:fairness_indicators_representations}
In the particular case of learning fair representations, the classifier $w: \mathcal{X} \rightarrow \mathcal{T}$ consists of two stages: an encoder $w_{\textnormal{enc}}: \mathcal{X} \rightarrow \mathcal{Y}$ and a decoder $w_{\textnormal{dec}}: \mathcal{Y} \rightarrow \mathcal{T}$, where the intermediate variable $Y = w_{\textnormal{enc}}(X)$ is the fair representation of the data. Therefore:
\begin{enumerate}
    \item Minimizing $I(S;Y)$ encourages demographic parity, since 
    \begin{equation*}
        I(S;Y) = 0 \iff Y~\perp~S \implies w(X)~\perp~S.
    \end{equation*}
    \item Minimizing $I(S;Y|T)$ encourages equalized odds, since 
    \begin{equation*}
        I(S;Y|T) = 0 \iff Y~\perp~S~|~T \implies w(X)~\perp~S~|~T.
    \end{equation*}
\end{enumerate}
\end{remark}

\begin{remark}
\label{remark:cfb_and_cpf_encourage_demographic_parity}
Based on Remark~\ref{remark:fairness_indicators_representations}, we note that the variational approach to the CFB and the CPF for generating private and/or fair representations encourages demographic parity, since the minimization of the Lagrangians of such problems, $\mathcal{L}_{\textnormal{CFB}}$ and $\mathcal{L}_{\textnormal{CPF}}$, indeed minimizes $I(S;Y)$.
\end{remark}
\begin{remark}
\label{remark:cfb_and_cpf_equalized_odds}
Contrary to Remark~\ref{remark:cfb_and_cpf_encourage_demographic_parity} concerning the minimization of the demographic parity gap, we cannot say that the variational approach to the CFB and the CPF minimizes the equalized odds gap. 

Even though $I(S;Y) = I(S;Y|T) + I(S;Y;T)$, since $I(S;Y;T)$ can be negative \cite{yeung1991new}, then $I(S;Y)$ is not necessarily greater than $I(S;Y|T)$ and thus there is no guarantee that minimizing $I(S;Y)$ will minimize $I(S;Y|T)$ as well.

Therefore, the minimization of $\mathcal{L}_{\textnormal{CFB}}$ and $\mathcal{L}_{\textnormal{CPF}}$, which minimizes $I(S;Y)$, also minimizes the equalized odds gap when $I(S;Y;T) \geq 0$ and therefore $I(S;Y) \geq I(S;Y|T)$. That is, the CFB and the CPF minimize the equalized odds gap when there is no synergy between $X$ and $Y$ to learn about $S$; i.e., $I(S;Y) + I(S;X) \geq I(S;(X,Y))$.
\end{remark}
Continuing with the reflection on Remarks~\ref{remark:fairness_indicators_representations} and~\ref{remark:cfb_and_cpf_equalized_odds}, a constrained optimization formulation \textit{a la} CPF that leads to a minimization of the equalized odds gap is possible. Namely, 
\begin{equation*}
    \arg \inf_{P_{Y|X}} \left \lbrace I(S;Y|T) + I(X;Y|S,T) \right \rbrace \textnormal{ s.t. } I(T;Y|S) \geq r,
\end{equation*}
which minimizes the dark and darker gray areas from Figure~\ref{fig:i_diagram_cpf}, which correspond to the sensitive and irrelevant data, respectively, except the intersection between the sensitive data $S$, the representations $Y$, and the target $T$, which corresponds to $I(S;Y;T)$. In this formulation, it is ensured that the representations $Y$ also maintain a certain level of the information $r$ of the target that is not shared in the sensitive data $I(T;Y|S)$.

\section{Non-convexity of the CPF and the CFB}
\label{app:convexity_problems}
In this section of the appendix, we show how both the CPF and the CFB as defined in~\eqref{eq:conditional_privacy_funnel_opt} and~\eqref{eq:conditional_fairness_bottleneck_opt} are non-convex optimization problems. 
\begin{lemma} Let $X \in \mathcal{X}$, $S \in \mathcal{S}$, $Y \in \mathcal{Y}$, and $T \in \mathcal{T}$ be random variables. Then, 
\begin{enumerate}
    \item If the Markov chain $S \leftrightarrow X \rightarrow Y$ holds and the distributions of $X$ and $S$ are fixed, then $I(X;Y)$, $I(S;Y)$, and $I(X;Y|S)$ are convex functions with respect to the density $p_{Y|X}$.
    \item If, additionally, the Markov chain $T \leftrightarrow X \rightarrow Y$ holds and the distributions of $X$, $S$, and $T$ are fixed, then $I(X;Y|S,T)$ and $I(T;Y|S)$ are also convex functions with respect to the density $p_{Y|X}$.
\end{enumerate}
\label{lemma:mi_conv_funcs_pyx}
\end{lemma}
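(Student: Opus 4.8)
The plan is to reduce every quantity in the statement to a nonnegative mixture of terms to which the joint convexity of relative entropy applies, so that the only analytic input needed is the classical fact that $(p,q)\mapsto\KL{p}{q}$ is jointly convex on pairs of (sub)probability vectors (provable, e.g., from the log-sum inequality). The unifying structural observation is that, with the laws of $X$, $S$, $T$ held fixed, all the conditional output laws that will appear are \emph{linear} in the encoder $p_{Y\mid X}$: since $Y$ is generated from $X$ alone — so that $Y\perp(S,T)\mid X$ and $p_{Y\mid X=x,C=c}=p_{Y\mid X}(\cdot\mid x)$ for any sub-tuple $C$ of $(S,T)$ — we have
\begin{equation*}
  p_{Y\mid C=c}(y)=\sum_{x}p_{X\mid C=c}(x)\,p_{Y\mid X}(y\mid x),
\end{equation*}
which is a fixed linear combination of the coordinates of $p_{Y\mid X}$.

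First I would establish the base case: for a fixed input law, the mutual information of a channel is a convex function of that channel. Writing $I(X;Y)=\bE_{p_X}\!\left[\KL{p_{Y\mid X}}{p_Y}\right]$ with $p_Y$ the induced marginal, take encoders $p^{(0)}_{Y\mid X},p^{(1)}_{Y\mid X}$, a weight $\lambda\in[0,1]$, and their convex combination $p^{\lambda}_{Y\mid X}$; by linearity of marginalization $p^{\lambda}_{Y}=\lambda p^{(0)}_{Y}+(1-\lambda)p^{(1)}_{Y}$, so applying joint convexity of $\KL{\cdot}{\cdot}$ for each $x$ and averaging over $p_X$ shows that $I(X;Y)$ is convex in $p_{Y\mid X}$. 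The same argument, with the channel $p_{Y\mid S=s}(\cdot)=\sum_{x}p_{X\mid S=s}(x)\,p_{Y\mid X}(\cdot\mid x)$ and input law $p_S$, shows that $I(S;Y)$ is convex as a function of the channel $p_{Y\mid S}$; since $p_{Y\mid X}\mapsto p_{Y\mid S}$ is linear and convexity is preserved under precomposition with a linear map, $I(S;Y)$ is convex in $p_{Y\mid X}$ as well.

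Next I would handle the conditional terms by decomposing over the conditioning variable. Because $S\leftrightarrow X\to Y$ gives $p_{Y\mid X=x,S=s}=p_{Y\mid X}(\cdot\mid x)$, the slice $I(X;Y\mid S=s)$ is exactly the mutual information of the channel $p_{Y\mid X}$ with the fixed input $p_{X\mid S=s}$, hence convex in $p_{Y\mid X}$ by the base case, and $I(X;Y\mid S)=\sum_{s}p_S(s)\,I(X;Y\mid S=s)$ is a nonnegative combination of convex functions, hence convex. For part~2, the hypotheses make $Y$ conditionally independent of $(S,T)$ given $X$ (equivalently the joint law factors as $p_{S,T,X}\,p_{Y\mid X}$), so $p_{Y\mid X=x,S=s,T=t}=p_{Y\mid X}(\cdot\mid x)$; then $I(X;Y\mid S,T)=\sum_{s,t}p_{S,T}(s,t)\,I(X;Y\mid S=s,T=t)$ is a nonnegative mixture of base-case mutual informations (channel $p_{Y\mid X}$, input $p_{X\mid S=s,T=t}$), hence convex. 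Finally $I(T;Y\mid S)=\sum_{s}p_S(s)\,I(T;Y\mid S=s)$, and for each $s$ the variable $Y$ reaches $T$ through $p_{Y\mid T=t,S=s}(\cdot)=\sum_{x}p_{X\mid S=s,T=t}(x)\,p_{Y\mid X}(\cdot\mid x)$, a linear image of $p_{Y\mid X}$, with fixed input $p_{T\mid S=s}$; so each slice is convex (base case composed with a linear map) and the mixture is convex.

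The step most in need of care is the bookkeeping just used: verifying that the conditional output laws $p_{Y\mid S=s}$, $p_{Y\mid S=s,T=t}$, and $p_{Y\mid T=t,S=s}$ are genuinely linear in $p_{Y\mid X}$, which rests on $Y\perp(S,T)\mid X$ — precisely the modeling assumption underlying the CPF and CFB. Granting that, there is no hard analytic step: everything follows from joint convexity of $\KL{\cdot}{\cdot}$ together with closure of the convex functions under nonnegative combinations and precomposition with linear maps.
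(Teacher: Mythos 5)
Your proposal is correct and follows essentially the same route as the paper: the paper invokes the Cover--Thomas result that (conditional) mutual information is convex in the corresponding conditional channel for fixed input, and then observes that those channels are nonnegative linear images of $p_{Y\mid X}$ under the Markov chains, exactly the ``convexity of MI in the channel plus linearity of the induced channels'' skeleton you use. The only differences are cosmetic: you derive the base case from joint convexity of relative entropy instead of citing it, and you decompose the conditional informations into per-value slices $I(\cdot;\cdot\mid S=s)$, which is a clean (and arguably tidier) way of justifying the same preservation-of-convexity step.
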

\begin{proof}
We start the proof leveraging \cite[Theorem 2.7.4]{cover2012elements}, which, in our setting, tells us that:
\begin{itemize}
\item $I(X;Y)$ is a convex function of $p_{Y|X}$ if $p_{X}$ is fixed. 
\item $I(S;Y)$ is a convex function of $p_{Y|S}$ if $p_{S}$ is fixed.
\item $I(X;Y|S)$ is a convex function of $p_{Y|X,S}$ if $p_{X|S}$ is fixed.
\item $I(X;Y|S,T)$ is a convex function of $p_{Y|X,S,T}$ if $p_{X|S,T}$ is fixed.
\item $I(T;Y|S)$ is a convex function of $p_{Y|S,T}$ if $p_{T|S}$ is fixed.
\end{itemize}
Then, since $p_{Y|S} = \mathbb{E}_{p_{X|S}}[p_{Y|X}]$, $p_{Y|X,S} = \left(\frac{p_{X|S}}{p_X}\right) p_{Y|X}$, $p_{Y|X,S,T} = \left(\frac{p_{X|S,T}}{p_X}\right) p_{Y|X}$, and $p_{Y|S,T} = \mathbb{E}_{p_{X|S,T}}[p_{Y|X}]$ are non-negative weighted sums as defined in \cite[2.2.1]{boyd2004convex}, they preserve convexity. Hence, $I(S;Y)$, $I(X;Y|S)$, $I(X;Y|S,T)$, and $I(T;Y|S)$ are convex functions of $p_{Y|X}$, if $p_S$, $p_{X|S}$, $p_{X|S,T}$, and $p_{T|S}$ are fixed, respectively.
\end{proof}
\begin{proposition}
\label{prop:non_convexity_cpf}
Let us consider that the distributions of $S$ and $X$ are fixed and that the conditional distribution $P_{Y|X}$ has a density $p_{Y|X}$. Then, the CPF optimization problem is not convex.
\end{proposition}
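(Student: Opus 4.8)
The plan is to use the standard fact that a minimization problem is a convex program exactly when its objective is a convex function and its feasible set is a convex set, and then to show that the CPF fails the second condition. By Lemma~\ref{lemma:mi_conv_funcs_pyx}, the objective $I(S;Y)$ is a convex function of $p_{Y|X}$ on the admissible domain $\mathcal{P}$ of conditionals respecting $S\leftrightarrow X\rightarrow Y$, so the only possible source of non-convexity is the feasible set $\mathcal{C}_r=\{p_{Y|X}\in\mathcal{P}: I(X;Y|S)\ge r\}$. Again by Lemma~\ref{lemma:mi_conv_funcs_pyx}, $p_{Y|X}\mapsto I(X;Y|S)$ is convex but not affine, and a superlevel set $\{h\ge r\}$ of a convex, non-affine $h$ is in general not convex; I would turn this into a proof by exhibiting an explicit counterexample for a suitable fixed $p_{S,X}$ and a suitable $r$.

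For the counterexample, take $\mathcal{Y}=\mathcal{X}$ with $|\mathcal{X}|\ge 2$, fix any joint $p_{S,X}$ with $H(X|S)>0$, and fix a permutation $\pi$ of $\mathcal{X}$ that transposes two elements $a,b$ lying in the conditional support given some value of $S$. Consider the two deterministic encoders $p^{(0)}_{Y|X}(y\mid x)=\mathbbm{1}[y=x]$ and $p^{(1)}_{Y|X}(y\mid x)=\mathbbm{1}[y=\pi(x)]$. Both depend on $X$ only, hence lie in $\mathcal{P}$, and because $\pi$ is a bijection each of them makes $Y$ a deterministic invertible function of $X$, so $I(X;Y|S)=H(X|S)$ for both. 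The mixture $p^{(1/2)}_{Y|X}=\tfrac12 p^{(0)}_{Y|X}+\tfrac12 p^{(1)}_{Y|X}$ also lies in $\mathcal{P}$, but it assigns equal mass to $y=a$ and $y=b$ for both $x=a$ and $x=b$, so $Y$ no longer distinguishes $a$ from $b$; consequently $I(X;Y|S)=H(X|S)-\varepsilon$ for some $\varepsilon>0$ (and in fact $I(X;Y|S)=0$ if one additionally takes $X$ to be supported on $\{a,b\}$). Choosing $r\in(H(X|S)-\varepsilon,\,H(X|S)]$ (for instance $r=H(X|S)$ in the two-point case) gives $p^{(0)}_{Y|X},p^{(1)}_{Y|X}\in\mathcal{C}_r$ while their midpoint $p^{(1/2)}_{Y|X}\notin\mathcal{C}_r$. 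Hence $\mathcal{C}_r$ is not convex, so the CPF of~\eqref{eq:conditional_privacy_funnel_opt} is not a convex optimization problem.

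The main obstacle is choosing the right pair of feasible encoders: one needs two conditionals that each retain all the information about $X$ that is not shared with $S$, yet whose average destroys some of it, and the identity/permutation pair is the natural device for this. Once that pair is in hand, the remaining work --- verifying feasibility of the endpoints, computing the collapse of $I(X;Y|S)$ at the midpoint, and picking $r$ --- is routine. A minor point to keep track of is that the whole construction must stay inside the admissible class $\mathcal{P}$; this is automatic here because none of the three encoders uses $S$, so the Markov chain $S\leftrightarrow X\rightarrow Y$ is preserved throughout, and the analogous argument (with $T$ also never used by the encoders) would settle the non-convexity of the CFB as well.
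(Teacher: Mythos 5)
Your proof is correct, and it takes a somewhat different (and in fact more complete) route than the paper. The paper's own proof is a two-line structural argument: it invokes Lemma~\ref{lemma:mi_conv_funcs_pyx} to conclude that $I(X;Y|S)$ is convex in $p_{Y|X}$ and then declares the constraint $I(X;Y|S)\geq r$ ``concave,'' i.e., not of the standard convex-programming form $(\text{concave function})\geq r$; it never exhibits an actual violation of convexity of the feasible set. You use the same lemma to handle the objective, but you correctly observe that convexity of the constraint function alone does not force its superlevel sets to be non-convex (an affine or, say, exponential-type convex function has convex superlevel sets), and you close that gap with an explicit counterexample: the identity and transposition encoders both achieve $I(X;Y|S)=H(X|S)$, while their midpoint merges the two symbols $a,b$ and strictly loses conditional information, so for $r$ chosen just below (or at) $H(X|S)$ the feasible set $\{p_{Y|X}\in\mathcal{P}: I(X;Y|S)\geq r\}$ contains the endpoints but not the midpoint. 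Your verification details are sound: the permutation encoders depend on $X$ only, so the Markov chain $S\leftrightarrow X\rightarrow Y$ is preserved; invertibility gives $H(X|S,Y)=0$ at the endpoints; and the support condition on $a,b$ given some value of $S$ guarantees $\varepsilon>0$ at the midpoint. What the paper's argument buys is brevity and a direct reuse of the lemma; what yours buys is a genuine certificate of non-convexity of the feasible region for a specific $r$, which is the stronger and more rigorous statement, and, as you note, the same construction transfers verbatim to the CFB constraint $I(T;Y|S)\geq r$.
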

\begin{proof}
From Lemma \ref{lemma:mi_conv_funcs_pyx} we know that $I(S;Y)$ and $I(X;Y|S)$ are convex functions with respect to $p_{Y|X}$ for fixed $p_{S}$ and $p_{X|S}$. Hence, the constraint $I(X;Y|S) \geq r$ is concave.
\end{proof}
\begin{proposition}
\label{prop:non_convexity_cfb}
Let us consider that the distributions of $S$, $T$, and $X$ are fixed and that the conditional distribution $P_{Y|X}$ has a density $p_{Y|X}$. Then, the CFB optimization problem is not convex.
\end{proposition}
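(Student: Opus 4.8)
The plan is to mirror the proof of Proposition~\ref{prop:non_convexity_cpf} almost verbatim, this time invoking the second part of Lemma~\ref{lemma:mi_conv_funcs_pyx}. First I would note that since the joint density $p_{S,T,X}$ is fixed, so are all of its marginals and conditionals; in particular $p_S$, $p_{X|S}$, $p_{X|S,T}$, and $p_{T|S}$ are fixed. Because both Markov chains $S \leftrightarrow X \rightarrow Y$ and $T \leftrightarrow X \rightarrow Y$ hold, Lemma~\ref{lemma:mi_conv_funcs_pyx}(2) applies and tells us that $I(S;Y)$, $I(X;Y|S,T)$, and $I(T;Y|S)$ are all convex functionals of the encoder density $p_{Y|X}$.

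Next I would observe that the objective of the CFB, namely $I(S;Y) + I(X;Y|S,T)$, is a sum of two convex functions and hence convex, so the objective side poses no obstacle to convexity. The issue, exactly as in the CPF case, is the constraint: the feasible set $\lbrace p_{Y|X} : I(T;Y|S) \geq r \rbrace$ is a superlevel set of the convex function $p_{Y|X} \mapsto I(T;Y|S)$. Superlevel sets of convex (non-affine) functions are generally not convex; equivalently, the inequality constraint $r - I(T;Y|S) \leq 0$ has a concave left-hand side rather than a convex one, which violates the standard requirement for a convex program \cite[Chapter 4]{boyd2004convex}. Therefore the CFB minimizes a convex objective over a (generically) non-convex feasible region, so it is not a convex optimization problem.

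I do not expect a real obstacle here beyond correctly citing Lemma~\ref{lemma:mi_conv_funcs_pyx}(2). The only point one might want to be careful about is that ``not convex'' should be more than a mere failure of a sufficient condition: one could substantiate it by noting that $I(T;Y|S)$ is genuinely non-affine in $p_{Y|X}$ in nontrivial instances, so that for typical $r$ and typical fixed $p_{S,T,X}$ the feasible set really is non-convex (e.g.\ by exhibiting two encoders on the boundary $I(T;Y|S)=r$ whose midpoint attains strictly less). However, since the companion Proposition~\ref{prop:non_convexity_cpf} settles for the one-line observation that the constraint is concave, I expect the proof of this statement to do the same.
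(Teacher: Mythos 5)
Your proposal matches the paper's proof essentially verbatim: invoke Lemma~\ref{lemma:mi_conv_funcs_pyx} to get convexity of $I(S;Y)$, $I(X;Y|S,T)$, and $I(T;Y|S)$ in $p_{Y|X}$ (for fixed $p_S$, $p_{X|S,T}$, $p_{T|S}$), and conclude that the constraint $I(T;Y|S)\geq r$ is concave, so the problem is not a convex program. Your closing caveat --- that strictly speaking one should rule out the affine case to show the feasible set is genuinely non-convex --- is a fair refinement, but the paper's own argument stops at exactly the one-line observation you anticipated.
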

\begin{proof}
From Lemma \ref{lemma:mi_conv_funcs_pyx} we know that $I(S;Y)$, $I(X;Y|S,T)$, and $I(T;Y|S)$ are convex functions with respect to $p_{Y|X}$ for fixed $p_{S}$, $p_{X|S,T}$, and $p_{T|S}$. Hence, the constraint $I(T;Y|S) \geq r$ is concave.
\end{proof}

\section{Limitations and future directions}
\label{app:limitations}

\paragraph{Problem formulation} 
The CPF~\eqref{eq:conditional_privacy_funnel_opt} and the CFB~\eqref{eq:conditional_fairness_bottleneck_opt} are non-convex optimization problems with respect to $P_{Y|X}$ (see Appendix~\ref{app:convexity_problems}). Therefore, (i) the optimal conditional distribution $P^{\star}_{Y|X}$ that minimizes the Lagrangian might not be achieved through gradient descent, and (ii) even if $P^{\star}_{Y|X}$ is achieved, it could be a sub-optimal value for~\eqref{eq:conditional_privacy_funnel_opt} or~\eqref{eq:conditional_fairness_bottleneck_opt}, since the problems are not \emph{strongly dual} \cite[Section 5.2.3]{boyd2004convex}. A possible solution could be the application of a monotonically increasing concave function $u$ to $I(X;Y|S)$ or $I(T;Y|S)$ in the CPF or CFB Lagrangians, respectively, so that $u(I(X;Y|S))$ or $u(I(T;Y|S))$ is concave (and hence the Lagrangian is convex) in the domain of interest. For some $u$, this approach might allow to attain the desired $r$ in~\eqref{eq:conditional_privacy_funnel_opt} or~\eqref{eq:conditional_fairness_bottleneck_opt} with a specific value of the Lagrange multiplier; see \cite{rodriguez2020convex} for an example of this approach for the IB.
\paragraph{Proposed approach} The proposed approach entails two limitations that are common in variational attempts at solving an optimization problem. Namely: (i) it approximates the decoding and the marginal distributions and (ii) it  considers parametrized densities. The first issue restricts the search space of the possible encoding distributions $P_{Y|X}$ to those distributions with a decoding and marginal distributions that follow the restrictions of the variational approximation. The second issue further limits the search space to the obtainable encoding distributions with densities $p_{Y|X,\theta}$ with a parametrization $\theta$. For this reason, richer encoding distributions and marginals, e.g., by means of normalizing flows \cite{rezende2015variational, kingma2016improved}, are a possible direction to mitigate these issues.


\end{document}